\newcommand{\kibitz}[2]{\ifnum\Comments=1\textcolor{#1}{#2}\fi}
\newcommand{\printfnsymbol}[1]{%
  \textsuperscript{\@fnsymbol{#1}}%
}
\title{Non-asymptotic Confidence Intervals of Off-policy Evaluation: 
Primal and Dual Bounds }
\date{}
\author{%
	Yihao Feng\thanks{Equal contribution. This article is an extended version of \citet{feng2021nonasymptotic} in \textit{ICLR 2021}. }\\
    ~University of Texas at Austin\\
	\texttt{yihao@cs.utexas.edu}
	\and
	Ziyang Tang\printfnsymbol{1}\\
	~University of Texas at Austin\\
	\texttt{ztang@cs.utexas.edu} \\
	\and
	Na Zhang \\
	Tsinghua University\\
	\texttt{zhangna@pbcsf.tsinghua.edu.cn} 
	\and 
	Qiang Liu \\
	~University of Texas at Austin\\
	\texttt{lqiang@cs.utexas.edu} \\
	
}
\begin{document}
\maketitle

\begin{abstract}
Off-policy evaluation (OPE) is the task of estimating the expected reward of a given policy based on offline data previously collected under different policies. 
Therefore, OPE is a key step in applying reinforcement learning to real-world domains such as medical treatment, where interactive data collection is expensive or even unsafe. 
As the observed data tends to be noisy and limited, 
it is essential to provide rigorous  uncertainty quantification, 
not just a point estimation, when applying OPE to make high stakes decisions. 
This work considers the problem of constructing non-asymptotic confidence intervals in infinite-horizon  off-policy evaluation, which remains a challenging open question. 
We develop a practical algorithm 
through a primal-dual optimization-based approach, %
which leverages the kernel Bellman loss (KBL) of \citet{feng2019kernel}  %
and a new  martingale concentration inequality of KBL applicable to time-dependent data with unknown mixing conditions. 
Our algorithm makes  minimum assumptions on the data and the function class of the Q-function, 
and works for the behavior-agnostic settings where the data is %
collected under a mix of arbitrary unknown behavior policies. 
We present empirical results that clearly demonstrate the advantages of our approach over existing methods.

\end{abstract}

\section{Introduction}

Off-policy evaluation (OPE) seeks to estimate the expected reward of a target policy in reinforcement learning (RL) from observational data collected under different behavior policies \citep[e.g.,][]{murphy01marginal,fonteneau13batch, jiang16doubly, liu2018breaking}. 
OPE plays a central role in applying reinforcement learning (RL) with only observational data and has found important applications in areas such as medical treatments, autonomous driving, where interactive ``on-policy'' data is expensive or even infeasible to collect.

As the observed data is often noisy and limited, applying OPE to assist high-stakes decision-making entails a rigorous, non-asymptotic approach to quantify the uncertainty in the estimation. In this way, the decision makers are informed of the degree of the risk
and can avoid the dangerous case of being overconfident in making costly and/or irreversible decisions. 
This work addresses the uncertainty quantification in OPE by constructing \emph{non-asymptotically correct confidence intervals} that contains the true expected reward of the target policy with a probability no smaller than a user-specified confidence level.

To date, Off-policy evaluation per se has remained a key technical challenge in the literature
\citep[e.g.,][]{precup2000eligibility,thomas16data,jiang16doubly, liu2018breaking}, 
let alone gaining rigorous confidence estimation of it. 
This is especially true when 1) the underlying RL problem is long or infinite horizon (a.k.a. \emph{the curse of horizon} \citep{liu2018breaking}); and 2) the data is collected under arbitrary and unknown algorithms (a.k.a. behavior-agnostic).
In such cases, the collected data can exhibit complex and unknown time-dependency structure, which makes constructing rigorous non-asymptotic confidence bounds particularly challenging.  
Traditionally, the main approach of deriving non-asymptotic confidence bounds in OPE is to combine importance sampling (IS) with certain concentration inequality \citep[e.g.,][]{thomas15high}. However, the classical IS-based methods tend to degenerate for long or infinite horizon problems \citep{liu2018breaking,liu2019understanding}. %
Moreover, they cannot be applied to the behavior-agnostic settings and also fail to handle the complicated time-dependency structure inside individual trajectories. In fact, the IS-based methods only work for short-horizon problems with a large number of independently collected trajectories drawn under known policies. %

In this work, we provide 
a practical algorithm with theoretical guarantee 
for 
\textbf{B}ehavior-agnostic, 
\textbf{O}ff-policy, 
\textbf{I}nfinite-horizon, 
\textbf{N}on-asymptotic, 
\textbf{C}onfidence intervals based on arbitrarily \textbf{D}ependent data  
(\textbf{BONDIC}). 
This method is motivated by  
a recently proposed optimization-based 
approach to estimating OPE confidence bounds \citep{fengaccountable2020}, which leverages a tail bound of kernel Bellman loss \citep{feng2019kernel}. 
Our approach achieves a new dual bound that is both an order-of-magnitude tighter and more computationally  efficient than 
that of \citet{fengaccountable2020}.
These improvements are based on two pillars:   
1)  a new primal-dual perspective on the non-asymptotic confidence bounds of infinite-horizon OPE; %
and 2) a new martingale concentration inequality on the kernel Bellman loss that applies to behavior-agnostic off-policy data with arbitrary time-dependency between transition pairs.
Our simple and practical method can provide 
reliable and informative bounds on a variety of 
benchmarks.

\myparagraph{Outline}
The rest of the paper is organized as follows. We introduce the problem setting in Section~\ref{sec:problem} and discuss related works  in Section~\ref{sec:relatedworks}. %
In Section~\ref{sec:infinite}, we give an overview on two dual approaches to infinite-horizon OPE that are closely related to our method (but do not consider non-asymptotic error bounds). 
We then present a martingale concentration bound for kernel Bellman loss in Section~\ref{sec:concentration}, which is used in our main approach described in Section~\ref{sec:main}.
Finally, we perform empirical studies in Section~\ref{sec:experiments}. 
The proofs and additional discussions can be found in Appendix.

\section{Background, Data Assumption, Problem Setting}
\label{sec:problem}

Consider an agent acting in an unknown environment. 
At each time step $t$, the agent observes the current state $s_t$ in a state space $\set S$, takes an action $a_{t} \sim \pi(\cdot ~|~s_t)$ in an action space $\set A$ according to a \emph{given} policy $\pi$; then, the agent receives a reward $r_t$ and the state transits to $s_t' = s_{t+1}$, following an \emph{unknown} transition/reward distribution $(r_t, s_{t+1})\sim  \dist P(\cdot ~|~ s_t, a_t)$. Assume the initial state $s_0 $ is drawn from an \emph{known} initial distribution $\dist d_0$.   
Let $\gamma\in(0,1)$ be a discount factor. In this setting, the expected reward of $\pi$ is defined as 
 \begin{align}\label{equ:jpi0}
 J\true\defeq J_{\pi, \dist P} \defeq  \lim_{H\to+\infty} \E_{\pi, \dist P}\left [\sum_{t=0}^{H}\gamma^{t}r_t ~|~ s_0\sim \dist d_0 \right ], 
 \end{align}
 which is the expected total discounted rewards when we execute $\pi$ starting from $\dist d_0$ for $H$ steps. 
 The focus of this work is 
the \emph{infinite-horizon} case with $H \to +\infty$, 
although our method can also be applied to a finite-horizon, time-inhomogeneous problem by converting it into an infinite-horizon and time-homogeneous problem that incorporates time $t$ as a special state variable (Appendix~\ref{sec:finitehorizon}).  
The state-action space $\set S \times \set A$ can be any domain on which we can define a reproducing kernel Hilbert space (RKHS), 
either discrete or continuous.
 
Assume the model $\dist P$ is unknown, but we can observe a set of transition pairs $\data = (s_i, a_i, r_i, s_i')_{i=1}^n$
that obeys $\dist P$ in that 
$(r_i, s_i')\sim \dist P(\cdot ~|~ s_i, a_i)$. %
The goal is to construct an interval $[\hat J^-(\emp D_n), ~ \hat J^+(\emp D_n)]$ based on the observed data such that the probability that it contains the true $J\true$ is no smaller than a user-specified confidence level. 
We consider the challenging case where the data is \emph{off-policy}, \emph{behavior-agnostic}, and \emph{time-dependent}, as formally specified in the following.

\begin{ass}[\textbf{Data Assumption}]  \label{ass:data}
Assume the data  $\data = (s_i, a_i, r_i, s_i')_{i=1}^n$ is drawn from an unknown joint distribution $\dist D_{1:n}^\diamond$ on $(\set S \times \set A \times \RR \times \set S)^{n}$
that satisfies 
\bba \label{equ:data}
\dist D_{1:n}^\diamond(r_i, s_i' ~|~ s_i, a_i; ~\emp D_{<i}) = \dist P(r_i, s_i' ~|~s_i, a_i), ~~~~~\forall i=1,\ldots, n,
\eea 
where $\emp D_{<i} \defeq (s_j,a_j, r_j, s_j')_{j<i}$.  
\end{ass} 

\myparagraph{Goal}  
\emph{ Given a dataset $\emp D_n$, a target policy $\pi$, and 
a confidence level $\delta\in(0,1)$, 
we want to construct  an 
interval $[\hatJlow(\emp D_n), \hatJup(\emp D_n)] \subset \RR$, %
such that 
\begin{align}\label{equ:probbound}
\prob%
\left ( \Jpi \in \left [\hatJlow(\emp D_n),~~ \hatJup(\emp D_n) \right]\right) \geq 1-\delta,
\end{align}
where $\prob(\cdot)$ is w.r.t. the 
randomness 
of the data $\emp D_n$ drawn from the distribution $\dist D_{1:n}^\diamond$. 
}

The data assumption provides a partial specification of the joint distribution $\dist D_{1:n}^\diamond$, which only requires that $(r_i, s'_i)$ should be generated from $\dist P(\cdot ~|~s_i,a_i)$ given $(s_i,a_i)\cup \emp D_{<i}$ at each step, while imposing no constraints on how $(s_i, a_i)$ is generated based on $\emp D_{<i}$.  
This provides great flexibility in terms of the data collection procedure. 
For example, $\emp D_n$ 
can be either collected from a single MDP %
governed by 
an \emph{unknown, time-varying, non-Markovian} behavior policy (in which case $(s_i',a_i')=(s_{i+1}, a_{i+1})$), or 
be a combination of many short MDP segments 
(in which case $(s_i',a_i')$ may not equal $(s_{i+1}, a_{i+1})$). 
Note that this significantly relaxes the data assumptions in recent works \citep{liu2018breaking, mousavi2020blackbox, dai2020coindice}, 
which require  $(s_i,a_i)_{i=1}^n$ to be independent or i.i.d. 

Eq~\eqref{equ:probbound} is a \emph{correctness} requirement of the confidence interval. 
In addition, we also want to make the interval estimation as \emph{tight} as possible. %
Specifically, it is desirable that  
 the length $(\hat J^+(\emp D_n) - \hat J^-(\emp D_n))$ of the interval  vanishes to zero as $n\to\infty$, 
 ideally with a fast converge rate (e.g., $O(n^{-1/2})$). 
 While assumption~\ref{ass:data} allows us to construct bounds that are correct and non-vacuous, it is too mild to give meaningful \emph{a priori} bound on the tightness of the interval estimation.  
For example, under Assumption~\ref{ass:data}, it is possible that the whole dataset is the replication of a single transition pair, i.e., $(s_i,a_i,r_i, s_i') = (s_1,a_1,r_1, s_1')$, in which case we cannot have a $1-\delta$ confidence interval whose length vanishes to zero as $n\to\infty$.   

A primary advantage of having an interval estimation is that 
we know \emph{a posteriori} length of the confidence interval once we construct it, 
which provides an instance-based uncertainty estimation 
without making any additional assumption beyond Assumption~\ref{ass:data}. 
If a stronger assumption is imposed (e.g., $(s_i,a_i,r_i, s_i')_{i=1}^n$ are i.i.d. or weakly dependent in a proper sense for different $i$), 
we can derive \emph{a priori} estimation of the convergence rate of the tightness $(\hat J^+(\emp D_n) - \hat J^-(\emp D_n))$ for the confidence interval as shown in Section~\ref{sec:dual} and Appendix~\ref{sec:tightness}.   

A crucial fact is that the data assumption above implies a martingale structure on the empirical Bellman residual operator of the Q-function. As we will show in Section~\ref{sec:concentration}, this enables us to derive a key concentration inequality underpinning our non-asymptotic confidence bound.

\begin{table}[t]
\centering
\renewcommand{\arraystretch}{1.5}
\setlength{\tabcolsep}{12pt}
\begin{tabular}{c|c|c|c|c|c} 
 \hline
 state &  action & reward &   state-action &next state-action &  next state-action + reward \\ \hline
 $s_i$ & $a_i$ & $r_i$ & $x_i = (s_i,a_i)$ & $x_i'=(s_i',a_i')$ & $y_i = (s_i',a_i',r_i)$\\ 
 \hline
\end{tabular}
\caption{Summary of  the notation. 
We assume we observe pairs of $(s_i, a_i,  r_i, s_i')$, and 
for each $i$, we augment the data with the next action $a_i'$ as part of the algorithm by drawing from $\pi(\cdot~|~s_i')$, where $\pi$ is the known target policy. The data assumption in Eq~\eqref{equ:data} is equivalent to saying that $y_i\sim \dist P_\pi(\cdot~|~ x_i)$; see Eq~\eqref{equ:ppi}. %
}\label{tab:notation}
\end{table}

\myparagraph{Notation}
We define a few notations that will simplify the presentation in the rest of work. First of all,
to facilitate the definition of the empirical Bellman operator in the sequel (see Section~\ref{sec:value} and Remark~\ref{rmk:bellman}), 
we append each $(s_i, a_i, r_i, s_i')$  with a random action $a_i'$ drawn from the target policy $\pi(\cdot ~|~s_i')$ following $s_i'$. This can be done for free as long as the target policy $\pi$ is given.  
Also, we write $x_i = (s_i,a_i)$,  $x_i' = (s_i', a_i')$, and $y_i = (x_i', r_i)=(s_i', a_i', r_i)$. 
Correspondingly, define $\set X = \set S \times \set A$ to be the state-action space. Then the observed data can be written as pairs of $\{x_i, y_i\}_{i=1}^n$. 
We equalize the data $\emp D_n$ with its empirical measure $\emp d_n = \sum_{i=1}^n \dist \delta_{x_i, y_i}/n$, where $\delta$ is the Delta measure.  See Table~\ref{tab:notation}. 
Define 
\bba \label{equ:ppi}
\dist P_{\pi}(y ~| ~x ) =\dist P(s', r ~|~x)\pi(a'~|~s')\,.
\eea 
Then Assumption~\ref{ass:data} is equivalent to saying that $y_i \sim \dist P_{\pi}(\cdot ~|~ x_i)$. %
In this way, our setting can be viewed as a supervised learning problem on $\{x_i,y_i\}$ with an unknown $\dist P_\pi$. 
The difference is that we are interested in estimating $J_{\pi,\dist P}$, 
which is a nonlinear functional of $\dist P_\pi$, 
rather than the whole model $\dist P_\pi$ (which would yield a model-based method; see discussions in Section~\ref{sec:relatedworks}).

We use the star notation to label quantities that depend on the unknown parameters (such as $J\true$), and the hat notation  to denote quantities that depend on observed data (such as $\emp D_n$ and $\hat J^\pm(\emp D_n)$).
The prime notation always denotes the next state (such as $s_i'$).

\section{Related Work}
\label{sec:relatedworks}
We give an overview of different approaches for uncertainty estimation in OPE. 
\paragraph{Finite-Horizon Importance Sampling (IS)} 
Assume the data is collected by rolling out a known behavior policy $\pi_0$ up to a trajectory length $T$,  
then we can estimate the finite horizon reward by changing $\E_{\pi, \dist P}[\cdot]$ to $\E_{\pi_0, \dist P}[\cdot]$ with importance sampling\citep[e.g.,][]{precup00eligibility,precup2001temporal,thomas15high,thomas2015highope}.
Taking the trajectory-wise importance sampling as an example,
assume we collect a set of independent trajectories $\tau_i\defeq \{s_t^i,a_t^i,r_t^i\}_{t=0}^{T-1}$, $i=1,\ldots, m$ up to a trajectory length $T$ by unrolling a known \emph{behavior policy} $\pi_0$. When $T$ is  large, we can estimate $J\true$ by a weighted averaging:
\begin{align}\label{equ:IS}
\hat J^{\rm{IS}}=\frac{1}{m}\sum_{i=1}^m \ratio(\tau_i) J(\tau_i)\,, ~~~~~
\text{where} ~~~\ratio(\tau_i) = \prod_{t=0}^{T-1} \frac{\pi(a_t^i|s_t^i)}{\pi_0(a_t^i|s_t^i)}\,,~~~~~ J(\tau_i) = \sum_{t=0}^{T-1} \gamma^t r_t^i\,.
\end{align}
One can construct non-asymptotic confidence bounds based on $\hat J^{\rm{IS}}$
using variants of concentration inequalities \citep{thomas2015safe,thomas2015highope}.  
Unfortunately, a key problem with this IS estimator is that the importance weight $\ratio(\tau_i)$ is  a product of the density ratios over time, 
and hence tends to cause an explosion in variance when the trajectory length $T$ is large. 
Although improvement can be made by using per-step and self-normalized weights \citep{precup2001temporal}, or control variates \citep{jiang16doubly, thomas16data},
the \emph{curse of horizon} remains to be a key issue to the classical IS-based  estimators \citep{liu2018breaking}. 

 Moreover, due to the time dependency between the transition pairs inside each trajectory, 
the non-asymptotic concentration bounds can only be applied on the trajectory level and hence decay with 
the number $m$ of independent trajectories in an $O(1/\sqrt{m})$ rate, though $m$ can be small in practice. 
{We could in principle apply the concentration inequalities of Markov chains \citep[e.g.,][]{paulin2015concentration} to the time-dependent transition pairs, 
but such inequalities 
require to have an upper bound of certain mixing coefficient of the Markov chain, which is unknown
and hard to construct empirically.}
Our work addresses these limitations 
by constructing a non-asymptotic bound that decay with 
the number $n = mT$ of transitions pairs, while without requiring known behavior policies and independent trajectories.

\paragraph{Infinite-Horizon,  Behavior-Agnostic OPE} 
Our work %
is closely related to 
the recent advances 
in infinite-horizon and behavior-agnostic OPE, including, for example,  
\citet{liu2018breaking, feng2019kernel, tang2019doubly, mousavi2020blackbox,liu2019understanding, yang2020off,
xie2019optimal,yin2020asymptotically}, 
as well as the DICE-family \citep[e.g.,][]{nachum2019dualdice, nachum2019algaedice,  zhang2020gendice,wen2020batch, zhang2020gradientdice}. 
These methods are based on 
either estimating the value function, or the stationary visitation distribution, 
which is shown to form a primal-dual relation  \citep{tang2019doubly,uehara2019minimax,jiang2020minimax} that we elaborate in depth in Section~\ref{sec:infinite}. 

 Besides \citet{fengaccountable2020} which directly motivated this work, 
{there has been a recent surge of interest in interval estimation under infinite-horizon OPE \citep[e.g.,][]{liu18representation,jiang2020minimax,duan2020minimax,dai2020coindice,fengaccountable2020,tang2020lipschitz,yin2020near,lazic2020maximum}.  %
For example, 
\citet{dai2020coindice} develop an asymptotic confidence bound (CoinDice) for DICE estimators with an i.i.d assumption on the off-policy data; 
\citet{duan2020minimax} provide a data dependent confidence bounds based on %
Fitted Q iteration (FQI) using linear function approximation when the off-policy data consists of a set of independent trajectories;  
\citet{jiang2020minimax}
provide a minimax 
method closely related to our method but do not provide analysis for data error; 
{\citet{tang2020lipschitz} propose a fixed point algorithm for constructing deterministic intervals of the true value function when the reward and transition models are deterministic and the true value function has a bounded Lipschitz norm.}} %

\paragraph{Model-Based Methods}
Since the model $\dist P$ is the only unknown variable, 
we can construct an estimator $\emp P$ of $\dist P$ using maximum likelihood estimation or other methods,  
and plug  it into \eqref{equ:jpi0} to obtain  a plug-in estimator  $\hat J = J_{\pi,\emp P}$. 
This yields the model-based approach to OPE~\citep[e.g.,][]{jiang16doubly,liu18representation}.
One can also estimate the uncertainty in $J_{\pi, \emp P}$ by propagating the uncertatinty in $\emp P$ \citep[e.g.,][]{asadi2018equivalence,duan2020minimax},
but it is hard to obtain  non-asymptotic 
and computationally efficient  
bounds unless $\emp P$ is assumed to be simple linear models. 
In general, estimating the whole model $\dist P$ can be 
an unnecessarily complicated problem as an intermediate step of the possibly simpler problem of estimating $J_{\pi,\dist P}$.

\paragraph{Bootstrapping, Bayes, Distributional RL}
As a general approach of uncertainty estimation, 
bootstrapping has been used in interval estimation in RL in various ways 
\citep[e.g.,][]{white2010interval, hanna2017bootstrapping, kostrikov2020statistical, hao2021bootstrapping}. 
Bootstrapping is simple and highly flexible, and can be applied to time-dependent data  (as appeared in RL)
using variants of block bootstrapping methods \citep[e.g.,][]{lahiri2013resampling, white2010interval}. %
However, bootstrapping typically only provides asymptotic guarantees;
although non-asymptotic bounds of bootstrap exist 
\citep[e.g.,][]{arlot2010some}, 
they are sophistic and difficult to use in practice
and would require to know the mixing condition for the dependent data. 
Moreover, bootstrapping is time consuming since it requires to repeat the whole off-policy evaluation pipeline on a large number of resampled data. %

Bayesian methods \citep[e.g.,][]{engel2005reinforcement,ghavamzadeh2016bayesian,yang2020offline} offer  another general approach to uncertainty estimation in RL, but require to use approximate inference algorithms and do not come with non-asymptotic frequentist guarantees.  
In addition, distributional RL \citep[e.g.,][]{bellemare2017distributional} seeks to quantify the intrinsic uncertainties inside the Markov decision process, which is orthogonal to the %
epistemic uncertainty 
that we consider in off-policy evaluation.

\section{Two Dual Approaches to Infinite-Horizon Off-Policy Estimation}
\label{sec:infinite}

The main idea of infinite-horizon OPE 
is to transform the estimation of the expected reward into estimating either the \emph{Q-function} or the \emph{visitation distribution (or its related density ratio)} %
by exploiting the stationary property of the MDP under $\pi$.  %
This section discusses these two tightly connected methods, 
which 
form a primal-dual relation and together lay out a foundation for our main confidence bounds.

The Q-function associated with policy $\pi$ and model $\dist P$ 
is defined as  
$$q\true(x) = \E_{\pi, \dist P}\left [\sum_{t=0}^\infty \gamma^t r_t ~|~ x_0=x\right ] \,,$$
where the expectation is taken when we execute $\pi$ under model $\dist P$ initialized from a fixed state-action pair $x_0=(s_0,a_0)$. 
Let $\dist d_{\pi, t}$ be the distribution of $(x_t, y_t)=(s_t,a_t,s_t',a_t',r_t)$ when executing policy $\pi$ starting from $s_0 \sim \dist d_0$ for $t$ steps.  
The visitation distribution of $\pi$ is defined as 
$$\dist d\true=\sum_{t=0}^\infty \gamma^t  \dist d_{\pi,t}\,.$$ 
Note that $\dist d\true$ integrates to $1/(1-\gamma)$, although we still treat it as a probability measure in the notation. 

The expected reward $J\true$ can be expressed using either $q\true$ or $\dist D\true$ as follows: 
 \begin{align}\label{equ:jpimac}
    J\true:= 
    \E_{\pi, \dist P}\left [\sum_{t=0}^{\infty}\gamma^{t}r_t \right ] =  \E_{r\sim \dist d\true}[r] = \E_{x\sim \dist d_{\pi,0}}[q\true(x)]\,,
\end{align}
where $r\sim \dist D\true$ (resp. $x\sim \dist D_{\pi,0}$) denotes sampling from the $r$-(resp. $x$-) marginal distribution of $\dist D\true$ (resp. $\dist D_{\pi,0}$). 
Eq.~\eqref{equ:jpimac}
 transforms the estimation of $J\true$ into estimating either $q\true$ or $\dist D\true$ and plays a key role in  infinite-horizon OPE.  %

\subsection{Value Estimation via Q Function}\label{sec:value}  
Because $\dist D_{\pi,0}(x)=\dist D_0(s)\pi(a|s)$ is known, we can estimate $J\true$ by $\E_{x\sim \dist D_{\pi,0}}[\hat q(x)]$  with any estimation $\hat q$ of the true Q-function $q\true$; the expectation under $x\sim \dist D_{\pi,0}$ can be estimated to any accuracy with Monte Carlo.  
To estimate $q\true$, we consider the empirical and expected Bellman residual operator: 
\bba \label{equ:bellman}
\hat \R q(x, \y)  =  
q(x) - \gamma q(x') - r \,, &&
    \Rtrue q(x)= 
    \E_{y\sim \dist P_{\pi}(\cdot|x)} 
    \left[ \hat \R q(x,y)\right ]\,.
\eea 
It is well-known that $q\true$ is the unique solution of the \textit{Bellman equation} $\Rtrue q =0$. %
Since $y_i \sim \dist P_{\pi}(\cdot | x_i)$ for each data point in $\data$, 
if $q = q\true$, then $\hat \R q(x_i, \y_i)$ is a \emph{zero-mean} random variable conditional on $x_i$. Let $\ratio$ be any function from $\set X$ to $\RR$, then $\sum_i \hat \R q(x_i, \y_i) \ratio (x_i)$  also has zero mean. %
This motivates the following \emph{functional} Bellman loss \citep{feng2019kernel, fengaccountable2020,xie2020q}, 
\begin{align}\label{equ:kblfunctional}
    L_{\funcset W}(q; ~ \emp D_n) & :=  
    \sup_{\ratio\in \funcset W}   \left\{
    \frac{1}{n}\sum_{i=1}^n\hat\R q({x_i, \y_i}) \ratio(x_i)  \right\},
\end{align} %
where $\funcset W$ is a set of functions $\ratio\colon \set X\to\RR$. 
To ensure that the sup is positive and finite, $\W$ is typically set to be a unit ball of some normed function space $\H$, that is, 
$$\W=\{\ratio\in\H\colon\norm{\ratio}_{\H}\leq 1\}.$$
\citet{feng2019kernel} consider the simple case when 
$\funcset W$ is  the unit ball $\K$ of the reproducing kernel Hilbert space (RKHS) with a positive definite kernel $k\colon \set X \times \set X \to \RR$, 
for which 
the loss has a simple closed form: %
\begin{align}\label{equ:kblkernel}
    L_{\K}(q; ~ \emp D_n)
    &=%
     \sqrt{ \frac{1}{n^2}\sum_{i,j=1}^n
     \hat \R q(x_i,y_i) \k(x_i, {x}_j)  \hat\R q(x_j, y_j)}\,.  
\end{align}
Note that the RHS of Eq.~\eqref{equ:kblkernel} is the \emph{square root} of the kernel Bellman V-statistics  in \citet{feng2019kernel}.  
 \citet{feng2019kernel} showed that, when the support of the data distribution $\emp D_n$ covers the state-action space $\set X$ (which requires an infinite data size when the domain size is infinite) and $k$ is an integrally strictly positive definite kernel, we have $ L_{\K}(q; ~ \emp D_n) =0$ iff $q = q\true$. Therefore, one can estimate $q\true$ by minimizing $L_{\K}(q, \emp D_n)$. 

\begin{rmk}  \label{rmk:bellman}
The empirical Bellman residual operator $\hat\R$ can be extended to 
\begin{align} \label{equ:Rqm}
\hat \R q(x,y) = q(x) - \gamma \frac{1}{m}\sum_{\ell=1}^m q(s',a_\ell')  -r,
\end{align}
where $\{a_\ell'\}_{i=1}^m$   
are i.i.d. drawn from $\pi(\cdot|s')$. %
As $m$ increases, this gives an unbiased estimator of $\Rtrue q$ with lower variance. 
If $m=+\infty$, we have 
\begin{align} \label{equ:Rqinf}
\hat \R q(x,y) = q(x) - \gamma \E_{a'\sim \pi(\cdot~|~s')} [q(s',a')] - r ,
\end{align}
which coincides with the operator used in the expected SARSA \citep{sutton98beinforcement}. Therefore, the choice of $m$ provides a trade-off between accuracy and computational cost. 
Without any modification, all results in this work can be applied to 
the $\hat \R q$ in \eqref{equ:Rqm}-\eqref{equ:Rqinf} 
for any positive integer $m$. { We use $m=5$ in most of our experiments.}%
\end{rmk}

\subsection{Value Estimation via Visitation Distribution}  
Another way to 
estimate $J\true$ in Eq.~\eqref{equ:jpimac} is to approximate $\dist D\true$ with a weighted empirical measure of the data \citep{liu2018breaking, nachum2019dualdice, mousavi2020blackbox, zhang2020gendice}. 
The key idea is to assign an importance weight $\ratio(x_i)$ to each data point $x_i$ in $\data$. We can choose the function $\ratio\colon \set X \to \RR$ properly such that $\dist D\true$ and hence $J\true$ can be approximated by the $\ratio$-weighted empirical measure of $\data$ (and reward) as follows:   
\begin{align} \label{equ:densityraio}
\dist D\true \approx \emp d_n^\ratio \defeq \frac{1}{n}\sum_{i=1}^n \ratio(x_i) \delta_{x_i,y_i},&&
J\true \approx 
\hat J_{\ratio}\defeq \E_{\emp d_n^\ratio}[r] = \frac{1}{n}\sum_{i=1}^n \ratio(x_i) r_i\,. 
\end{align}
Intuitively, $\ratio$ can be viewed as the density ratio between $\dist D\true$ and $\emp D_n$, although the empirical measure $\emp D_n$ may not have well-defined density. 
\citet{liu2018breaking,mousavi2020blackbox} proposed to estimate $\ratio$ by minimizing a discrepancy measure between $\emp d_n^\ratio$ and $\dist D\true$. %
To see this, note that $\emp d_n^\ratio = \dist D\true$ %
if $\Delta(\emp d_n^\ratio, q)=0$ for any function $q$, with  $\Delta$ defined as 
\bba \label{equ:Delta}
\Delta(\emp d_n^\ratio, q) 
& = \E_{\emp d_n^\ratio}[\gamma q(x') - q(x)] - \E_{\dist d\true}[\gamma q(x')-q(x)] \notag \\ 
& = \E_{\emp d_n^\ratio}[\gamma q(x') - q(x)] + \E_{\dist d_{\pi,0}}[q(x)],
 \eea 
where we use the fact that $ \E_{\dist d\true}[\gamma q(x')-q(x)] = -\E_{\dist d_{\pi,0}}[q(x)]$ in \eqref{equ:Delta}  \citep[see Theorem 1,][]{liu2018breaking}.  
Also note that the RHS of Eq.~\eqref{equ:Delta} can be practically calculated given any $\ratio$ and $q$ without knowing $\dist D\true$.  
Let $\mathcal Q$ be a set of functions $q\colon \set X\to \RR$. 
 One can define the following loss  for $\ratio$: 
 \begin{align} \label{equ:sw}
 &\Wd =  \sup_{q\in \funcset Q} \left\{  {\Delta}({\emp d_n^\ratio}, q) \right \}\,. 
\end{align}

Similar to $L_{\K}(q;~\emp D_n)$,  
when $\funcset Q$ equals the unit ball $\tilde \K$ of the RKHS of a positive definite kernel $\tilde k(x,\bar x)$,  
Eq.~\eqref{equ:sw} can be expressed into a quadratic closed form shown in \citet{mousavi2020blackbox}: 
 \bba \label{equ:Wfkernel} 
  \Wd   %
  = \sqrt{A + 2B + C}, 
 \eea  
 with   
 \begin{align*}
 & A = \E_{(x,\bar x)\sim \D_{\pi,0}\times \D_{\pi,0}}\left  [\k(x, \bar x) \right ]\,, \\
       & B = \E_{(x,\bar x)\sim  \emp D_{n}^\ratio \times \D_{\pi,0} } \left [\hat{\opt T}^x  \k(x, \bar x) \right ] \,,\\
       & C = \E_{(x,\bar x)\sim  \emp D_{n}^\ratio  \times  \emp D_{n}^\ratio  } 
       \left [ \hat{\opt T}^x  \hat{\opt T}^{\bar x}  \k(x, \bar x) \right] \,,
 \end{align*}
 where $ \hat{\opt T}^x f(x) = \gamma f(x') - f(x)$, and 
 $\hat{\opt T}^x  \hat{\opt T}^{\bar x}  \k(x, \bar x)$ is obtained by applying
 $ \hat{\opt T}^{\bar x}$ and $\hat{\opt T}^{ x}$  sequentially by treating $k$ as a function of $\bar x$ and then of $x$.

\subsection{Primal and Dual Deterministic Bounds (Infinite Data Case)}
\label{sec:pop_bounds}

The  two types of estimators above are connected in a primal-dual fashion, which can be used to build deterministic upper and lower bounds of $J_*$ that can be viewed as the limit of our empirical bounds with an infinite data size $n$ \citep{tang2019doubly, jiang2020minimax}.   
We provide an overview of this framework 
and derive a number of bounds whose empirical version will be discussed in depth in Section~\ref{sec:main}. 

Let $\set Q$ be a function set that includes the true Q-function $q\true$. 
It is known that the $J\true$ can be represented using the following optimization problem: 
\bba \label{equ:infinite_oracleup}
J\true = J_{\funcset Q, *}^+ := \sup_{q \in \Q} \left\{ \E_{\dist D_{\pi,0}} [q] ~~~~s.t.~~~~ \Rtrue q (x) = 0, \quad \forall x\in \set X \right \}, 
\eea
which holds because $J_* = \E_{\dist D_{\pi,0}}[q_*]$  and 
$q=q_*$ is the unique solution of $\R q(x) =0,~\forall x\in \set X$.  

Using Lagrange duality, Eq~\eqref{equ:infinite_oracleup} is equivalent to 
\begin{align} \label{equ:jtrueone}
J\true
&  = 
\sup_{q \in \Q}   \inf_{\ratiozero} 
\left \{  \E_{\dist D_{\pi,0}} [q] -  \E_{x\sim \dist D_\infty}\left [ \ratiozero(x) \R q(x) \right]
\right\}\,, 
\end{align}
where 
$\ratiozero(x) \dist D_\infty (x)$ serves as the Lagrange multiplier: 
$\dist D_\infty$ is a fixed distribution whose support is $\set X$
and $\ratiozero \colon \X \to \RR$ is optimized in the set of all functions such that the objective is defined. 

Now we extend $\dist D_\infty$ to be a joint distribution on $\set X \times \set Y$
by defining $\dist D_\infty(x,y) = \dist D_\infty(x)\dist P_\pi(y~|~x)$. 
Note that $\R q(x) = \E_{y\sim \dist P_\pi(\cdot|x)}[\hat \R q(x,y)]$ and hence 
$\E_{\dist D_\infty} [\ratiozero(x)\hat\R q (x,y)] = \E_{\dist D_\infty} [\ratiozero(x)\Rtrue q (x)]$. Therefore, %
\bba \label{equ:jtrueminimaxM} 
J\true
&  = 
\sup_{q \in \Q}   \inf_{\ratiozero} 
\left \{ M(q, \ratiozero;~\dist D_\infty) \defeq \E_{\dist D_{\pi,0}} [q] - \E_{\dist D_\infty} [\ratiozero(x)\hat\R q (x,y)] \right\}. 
\eea  
Although $\dist D_\infty$  
was technically introduced  as a part of the Lagrange multiplier, 
it should be intuitively viewed as a population data distribution, the limit of the empirical data  $\emp D_n$ as $n\to \infty$; 
in practice, we  replace $\dist D_\infty$ with $\emp D_n$ when using the bounds in this section. 
However, note that Assumption~\ref{ass:data}  does not prescribe the existence of such  $\dist D_{\infty}$ from the data $\emp D_n$ (since it does not consider the limit  when $n\to\infty$).
{We need additional assumptions, such as 
 when $(x_i,y_i)_{i=1}^n$ is i.i.d. or follows an ergodic Markov chain 
 to relate $\emp D_n$ with a well defined limit $\dist D_n$. %
 Such additional assumption is needed when ensuring \emph{a-priori} bounds on the length of the confidence interval that we construct; see Theorem~\ref{thm:uppergap}. 
 }

We can obtain an upper bound of $J\true$ from the minimax representation in Eq.~\eqref{equ:jtrueminimaxM} 
by constraining the optimization domain of $\ratiozero$ to  a normed function space $\Wo$ whose unite ball is $\W$, %
\bba \label{equ:uperbound11}
J\true %
\leq  
J_{\funcset Q, \funcset W, *}^+ \defeq
\sup_{q \in \Q}   \inf_{\ratiozero \in \Wo} M(q, \ratiozero; ~ \dist D_\infty) 
\leq \inf_{\ratiozero\in \Wo}  \sup_{q \in \Q}   M(q, \ratiozero; ~ \dist D_\infty),
\eea 
where the first inequality is due to the constraint on $\Wo$; 
the second inequality is due to exchanging the order of $\sup$ and $\inf$, which turns to equality if strong duality holds. %

Importantly, the Lagrange function $M(q, \ratio;~ \dist D_\infty)$ is connected to both $ L_{\W}(q, ~ \dist D_\infty)$ and 
$I_{\Q}(\ratio;~\dist D_\infty)$, which
allows us to derive a pair of primal and dual bounds whose non-asymptotic version will be derived in this work. 
\begin{lem} \label{lem:Mdualprimal}
Assume { $\set \H = \{ \lambda \ratio \colon \ratio \in \set W, ~ \lambda \geq 0 \}$ } and define $\dist D_{\infty}^\ratio(x,y) = \ratio(x) \dist D_\infty(x,y)$. We have %
\begin{align} %
\inf_{\ratiozero \in \set \H}M(q,\ratio;~ \dist D_\infty ) & = \inf_{\lambda \geq 0}\E_{\D_{\pi,0}}[q] - \lambda L_{\W}(q, ~ \dist D_\infty),  \label{equ:MLw}\\
\sup_{q \in \set Q}M(q,\ratio; ~ \dist D_\infty) & = \E_{\dist D_\infty^\ratio} [r] + I_{\Q}(\ratio;~\dist D_\infty). \label{equ:MLq}
\end{align}
\end{lem}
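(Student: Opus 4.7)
\textbf{Proof plan for Lemma~\ref{lem:Mdualprimal}.}
Both identities are essentially algebraic rewritings of the Lagrangian
\[
M(q,\ratiozero;~\dist D_\infty)=\E_{\dist D_{\pi,0}}[q]-\E_{\dist D_\infty}\!\left[\ratiozero(x)\hat\R q(x,y)\right],
\]
obtained by (i) using the cone structure of $\H$ to recover the primal loss $L_{\W}$, and (ii) expanding the Bellman residual $\hat\R q(x,y)=q(x)-\gamma q(x')-r$ and matching it against the discrepancy $\Delta$ from Eq.~\eqref{equ:Delta} to recover $I_{\Q}$. So my plan is to carry out each manipulation separately and verify that the resulting expressions agree with the stated right-hand sides.

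For the first identity, the key observation is that $\H=\{\lambda w:\lambda\ge 0,\ w\in\W\}$ is the nonnegative cone generated by the unit ball $\W$, and $\W$ is symmetric. Writing $\ratiozero=\lambda w$ I will split the infimum as
\[
\inf_{\ratiozero\in\H}M(q,\ratiozero;\dist D_\infty)
=\inf_{\lambda\ge 0}\inf_{w\in\W}\Bigl\{\E_{\dist D_{\pi,0}}[q]-\lambda\,\E_{\dist D_\infty}[w(x)\hat\R q(x,y)]\Bigr\}.
\]
Because $\lambda\ge 0$, the inner infimum over $w$ pulls out as $-\lambda\sup_{w\in\W}\E_{\dist D_\infty}[w(x)\hat\R q(x,y)]=-\lambda L_{\W}(q,\dist D_\infty)$, which is the definition of the population kernel Bellman loss in \eqref{equ:kblfunctional}. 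This yields exactly $\inf_{\lambda\ge 0}\{\E_{\dist D_{\pi,0}}[q]-\lambda L_{\W}(q,\dist D_\infty)\}$. A small sanity check: symmetry of $\W$ forces $L_{\W}\ge 0$, so both sides equal $\E_{\dist D_{\pi,0}}[q]$ when $L_{\W}=0$ and $-\infty$ otherwise, matching the behaviour one expects from enforcing the Bellman constraint via a Lagrange cone.

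For the second identity, I substitute $\hat\R q(x,y)=q(x)-\gamma q(x')-r$ and use the definition $\dist D_\infty^{\ratio}(x,y)=\ratio(x)\dist D_\infty(x,y)$ to obtain
\[
M(q,\ratio;\dist D_\infty)=\E_{\dist D_\infty^{\ratio}}[r]+\Bigl\{\E_{\dist D_{\pi,0}}[q]+\E_{\dist D_\infty^{\ratio}}[\gamma q(x')-q(x)]\Bigr\}.
\]
The bracketed quantity is exactly the population analogue of the $\Delta$-functional in \eqref{equ:Delta} with $\emp d_n^{\ratio}$ replaced by $\dist D_\infty^{\ratio}$. Since $\E_{\dist D_\infty^{\ratio}}[r]$ does not depend on $q$, taking $\sup_{q\in\Q}$ gives $\E_{\dist D_\infty^{\ratio}}[r]+I_{\Q}(\ratio;\dist D_\infty)$ by the (population) definition of $I_{\Q}$ as in \eqref{equ:sw}.

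No step here is really the main obstacle; the whole lemma is a careful unwinding of definitions. The only point requiring a little care is the cone parameterization in the first identity—specifically ensuring that the sign convention $\lambda\ge 0$ is compatible with the symmetry of $\W$ so that the $\sup_{w\in\W}$ produces $L_{\W}$ with the correct sign rather than an absolute value. Once that is pinned down, both equalities follow by one or two lines of manipulation.
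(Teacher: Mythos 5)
Your proof is correct and follows essentially the same route as the paper's: the cone parameterization $\ratiozero=\lambda w$ with the inner infimum pulled out as $-\lambda L_{\W}$ for the first identity, and the expansion of $\hat\R q$ to recover $\E_{\dist D_\infty^\ratio}[r]+\Delta(\dist D_\infty^\ratio,q)$ followed by taking $\sup_{q\in\Q}$ for the second. The extra remark on symmetry of $\W$ is a harmless sanity check not needed for the identities themselves.
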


\begin{table}[t]
    \centering
    \begin{adjustbox}{width=\columnwidth,center}
    \renewcommand{\arraystretch}{2.0}
    \begin{tabular}{l|lc|l}
        \toprule
         & \bf{Expressions} && \bf{Remark} \\
        \midrule\midrule
        $J\true$ & %
        $\E_{x\sim \dist d_{\pi,0}}[q\true(x)]$ & \eqref{equ:jpimac}& Ground truth\\
        \midrule
        $J_{\funcset Q, *}^+$ &  %
$ \sup_{q \in \Q} \left\{ \E_{\dist D_{\pi,0}} [q] ~~~~s.t.~~~~ \Rtrue q (x) = 0, \quad \forall x\in \set X \right \}$ &
\eqref{equ:infinite_oracleup}&
$J\true = J_{\Q, *}^{+}$ if $q_{\true} \in \Q$ \\
        \midrule
        $ J_{\funcset Q, \funcset W, *}^+$ & %
        $\sup_{q \in \Q}  
\left \{ 
\E_{\dist D_{\pi,0}} [q] ~~s.t.~  L_{\set W}(q, ~ \dist D_\infty) \leq L_{\W}(q\true; \dist D_{\infty}) = 0 \right\} $ & \eqref{equ:qinfbound}& Primal upper bound, $ J_{\funcset Q, *}^+\leq  J_{\funcset Q, \funcset W, *}^+$\\
        \midrule 
        $ J_{\funcset Q, \funcset W, *}^{++}$ & %
        $\inf_{\ratio\in \Wo}\left\{\E_{\dist D_\infty^\ratio} [r] + I_{\Q}(\ratio;~\dist D_\infty)\right\}$ & \eqref{equ:winfbound} & Dual upper bound, $J_{\Q,\W, *}^+\leq J_{\funcset Q, \funcset W, *}^{++}
        $\\
        \midrule 
        & %
        $\hat J_{\text{dr}}  \defeq M(\hat q, \hat\ratio; ~\emp D_n)$
        & & Doubly robust estimator in \citet{tang2019doubly} \\
        \bottomrule
    \end{tabular}
    \end{adjustbox}
    \caption{{Summary of the different upper bounds in Section \ref{sec:pop_bounds}  based on $\dist D_{\infty}$. }}
    \label{tab:pop_summary}
\end{table}

Therefore, plugging Eq.~\eqref{equ:MLw} into Eq.~\eqref{equ:uperbound11}, we get 
\bb 
J_{\set Q, \set W, *}^+
& = \sup_{q \in \Q}   \inf_{\ratiozero \in \set \H} 
\left \{ 
M(q, \ratio;~\dist D_\infty)
\right\} \\
& =  \sup_{q \in \Q}  \inf_{\lambda \geq 0} 
\left \{ 
\E_{\dist D_{\pi,0}} [q] - \lambda L_{\set W}(q, ~ \dist D_\infty) \right\}. 
\ee 
By recognizing $\lambda$ as a scalar Lagrange multiplier, we have 
\bba 
J_{\set Q, \set W, *}^+ =
\sup_{q \in \Q}  
\left \{ 
\E_{\dist D_{\pi,0}} [q] ~~~~~s.t.~~~~~  L_{\set W}(q, ~ \dist D_\infty) \leq 0 \right\} , \label{equ:qinfbound}
\eea
which can be seen as a relaxation of Eq.~\eqref{equ:infinite_oracleup} since 
$L_{\set W}(q, ~ \dist D_\infty) \leq 0$ (equivalent to $L_{\set W}(q, ~ \dist D_\infty) = 0$) can be a weaker constraint than $\{\R q(x)=0,~~\forall x\in \X\}$ if $\set W$ is  a small function set. 

On the other hand, plugging Eq.~\eqref{equ:MLq} into Eq.~\eqref{equ:uperbound11}, we have 
\bba
J_{\funcset Q, \funcset W, *}^+   
& \leq   \inf_{\ratio \in \set \H} \sup_{q\in \Q} M(q, \ratio; ~ \dist D_\infty) \notag \\ 
 & = \inf_{\ratio \in \set \H} \E_{\dist D_\infty^\ratio} [r] + I_{\Q}(\ratio;~\dist D_\infty) %
  {:= J_{\funcset Q, \funcset W, *}^{++}\,.   }   \label{equ:winfbound}
\eea 
Here  Eq.~\eqref{equ:qinfbound} and 
Eq.~\eqref{equ:winfbound} are dual to each other and 
provide bounds of $J\true$ in terms of $q$ and $\ratio$, respectively. 
In Section~\ref{sec:main}, we provide empirical variants of 
Eq.~\eqref{equ:qinfbound} and 
Eq.~\eqref{equ:winfbound} %
which replace 
$\dist D_\infty$ with $\emp D_n$ while providing non-asymptotic bounds.

\myparagraph{Doubly Robust Estimation
and the Lagrangian}
Another related key feature of the Lagrange function $M(q, \ratio;~\dist D_\infty)$ above is the following ``double robustness'' property: 
\begin{align} \label{equ:doubly}
J\true = M(q\true, \ratio; ~ \dist D_\infty) = M(q, \ratio\true;~ \dist D_\infty),~~~\forall \ratio, q\,,
\end{align}
where $q\true$ is the true Q-function, and $\ratio\true$ is the density ratio between $\dist D\true$ and $\dist D_\infty$ such that $\dist D_\infty^\ratio = \dist D\true$. 
The double robustness in Eq.~\eqref{equ:doubly} says that 
$M(q, \ratio; ~ \dist D_\infty)$ equals   $J_*$  if either $q = q\true$ or $\ratio = \ratio\true$ holds.  
Therefore, 
let  $\hat q$ and  $\hat \ratio$ be estimations of $q\true$ and $\ratio\true$ respectively,  
then 
$\hat J_{\text{dr}}  \defeq M(\hat q, \hat\ratio; ~\emp D_n)$  
yields doubly robust estimation of $J\true$  
in the sense that $\hat J_{\text{dr}} $ forms an accurate estimation of $J\true$ if either $\hat q$ or $\hat \ratio$ is accurate; see more discussion in \citet{tang2019doubly}.

\qiangremoved{ 
Notice that, instead of the off-policy \emph{value} estimation, here we focus on solving the two key problems we propose for \textit{accountable} off-policy evaluation,
e.g. given $n$ off-policy data $\D=\{s_i, a_i, r_i, s_i'\}_{1 \leq i \leq n}$,
we want to construct a tight interval $[\hat{J}^{+}, \hat{J}^{-}]$ that contains the expected total discounted reward $J\true$ with high probability.
}

\qiangremoved{ 
\textbf{The Issue of Double Sampling}~~
When the transition of the MDP is stochastic, kernel loss does not suffer from the double sampling problem (i.e. at least two independent sample pair $(r, s')$ for the same state-action pair $x$ to provide consistent and unbiased estimation,
while prior algorithms such as residual gradient \citep{baird95residual} give biased estimation without double samples. \red{this should go earlier. And explained more explicitly (with equations, not words)}\red{check how it is done in feng etal.}
}

All the bounds in this Section depend on $\dist D_\infty$ which need to be replaced by the empirical data $\emp D_n$ in practice. 
One difficulty, however, is that Assumption~\ref{ass:data} does not directly imply that $\emp D_n$ converges to a fixed  limit distribution $\dist D_n$ as $n\to \infty$. 
Therefore, special care is taken in Section~\ref{sec:concentration} and Section~\ref{sec:main} to sidestep the introduction of $\dist D_\infty$ as we construct non-asymptotic confidence bounds dependent on $\emp D_n$.

\section{Concentration Inequality of Kernel Bellman Loss}    
\label{sec:concentration}
To quantify the data error, 
we establish in this section a concentration inequality to bound the deviation of the kernel Bellman loss (KBL) $L_{\K}(q\true; ~ \emp D_n)$ away from zero, which serves as a key building block 
of the non-asymptotic bounds in Section~\ref{sec:main}. 
This inequality holds under the mild data assumption \ref{ass:data} and does not require additional i.i.d.  or independence assumption thanks to a martingale structure implied Assumption~\ref{ass:data}.

We first introduce the following \emph{semi-expected} kernel Bellman loss (KBL) which 
$L_{\K}(q; ~ \emp D_n)$ concentrates around for any $q$: 
\begin{align}\label{equ:kblcond}
L^*_\K(q;~ \emp D_n) =\sqrt{\frac{1}{n^2}\sum_{ij=1}^n \opt{R} q(x_i) k(x_i, x_j) \opt{R} q(x_j)}\,,
\end{align}
where we replace the empirical Bellman residual operator $\hat \R q$ in Eq.~\eqref{equ:kblkernel} with its expected counterpart $\Rtrue q$, 
but still keep the empirical average over $\{x_i\}_{i=1}^n$ in $\emp D_n$. 
For a more general function set $\W$, we can similarly define $L_{\W}^*(q; ~ \emp D_n)$ by replacing $\hat \R q$ with $\Rtrue q$ in Eq.~\eqref{equ:kblfunctional}. 
Note that we have $L^*_\W(q;~ \emp D_n)=0$ when $q=q\true$ for any $\emp D_n$ and any $\set W.$ 

Theorem~\ref{thm:Lqpimain2} below shows that $L_{\K}(q;~ \emp D_n) $ concentrates around $L_{\K}^*(q;~ \emp D_n)$ with an $\bigO({n}^{-1/2})$ error under Assumption~\ref{ass:data}.  %
At a first glance, it may seem surprising that the concentration bound can hold even without any independence assumption between $\{x_i\}$. 
An easy way to make sense of this is by recognizing that the randomness in $y_i$ conditional on $x_i$ is aggregated through averaging, even if $\{x_i\}$ are deterministic.

\begin{thm}\label{thm:Lqpimain2}
Assume $\K$ is the unit ball of RKHS with a positive definite kernel $\k(\cdot,\cdot)$.  
Let $c_{q,k} \defeq \sup_{x,y} (\opt {\hat R} q(x,y)-\opt R q(x))^2 k(x,x) <\infty$.  
Under Assumption~\ref{ass:data}, 
for any $\delta \in (0,1)$,  %
 with at least  probability $1-\delta$, we have 
\bba \label{equ:fastcon2}
\abs{{ L_\K(q;~ \emp D_n)} -  { L^*_\K(q;~ \emp D_n)}}
\leq \sqrt{\frac{2c_{q,k} \log(2/\delta)}{n}}\,. 
\eea 
In particular, when $q = q\true$, we have $c_{q\true,k} = \sup_{x,y} (\opt {\hat R} q\true(x,y))^2 k(x,x)$, and 
\bba \label{equ:fastcon1}
L_{\K}(q\true;~ \emp D_n) \leq \sqrt{\frac{2c_{q\true,k}  \log(2/\delta)}{n}}\,. 
\eea 
\end{thm}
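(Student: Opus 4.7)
The plan is to exploit the RKHS representation of the kernel Bellman loss to lift the problem into a Hilbert-space-valued martingale concentration problem, and then apply a Pinelis-type Azuma--Hoeffding inequality for Hilbert spaces.

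First I would rewrite both quantities as squared RKHS norms. Letting $\H$ denote the RKHS with kernel $k$, the reproducing property gives
\begin{equation*}
L_\K(q;\emp D_n) = \Bigl\|\tfrac{1}{n}\sum_{i=1}^n \hat\R q(x_i,y_i)\, k(x_i,\cdot)\Bigr\|_\H, \qquad L^*_\K(q;\emp D_n) = \Bigl\|\tfrac{1}{n}\sum_{i=1}^n \R q(x_i)\, k(x_i,\cdot)\Bigr\|_\H.
\end{equation*}
By the triangle inequality in $\H$,
\begin{equation*}
\bigl|L_\K(q;\emp D_n) - L^*_\K(q;\emp D_n)\bigr| \;\leq\; \Bigl\|\tfrac{1}{n}\sum_{i=1}^n Z_i\Bigr\|_\H, \qquad Z_i := \bigl(\hat\R q(x_i,y_i) - \R q(x_i)\bigr)\, k(x_i,\cdot).
\end{equation*}

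Next I would verify the martingale structure. Let $\mathcal F_i$ denote the $\sigma$-algebra generated by $\emp D_{<i}\cup\{x_i\}$. Assumption~\ref{ass:data} states that $y_i\mid \mathcal F_i \sim \dist P_\pi(\cdot\mid x_i)$, hence $\E[\hat\R q(x_i,y_i)\mid\mathcal F_i] = \R q(x_i)$, so $\E[Z_i\mid\mathcal F_i]=0$ as an element of $\H$. This makes $\{Z_i\}$ a Hilbert-space-valued martingale difference sequence with respect to $\{\mathcal F_i\}$, with uniform bound
\begin{equation*}
\|Z_i\|_\H^2 = \bigl(\hat\R q(x_i,y_i)-\R q(x_i)\bigr)^2\, k(x_i,x_i) \;\leq\; c_{q,k}.
\end{equation*}

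Then I would invoke Pinelis' inequality (the Hilbert-space Azuma--Hoeffding bound), which yields for any $t>0$,
\begin{equation*}
\Pr\Bigl(\bigl\|\tfrac{1}{n}\textstyle\sum_i Z_i\bigr\|_\H \geq t\Bigr) \;\leq\; 2\exp\!\Bigl(-\tfrac{n t^2}{2 c_{q,k}}\Bigr).
\end{equation*}
Inverting for the confidence level $\delta$ gives $t = \sqrt{2 c_{q,k}\log(2/\delta)/n}$, which is exactly \eqref{equ:fastcon2}. The second inequality \eqref{equ:fastcon1} then follows immediately by specializing to $q=q\true$, since $\R q\true \equiv 0$ forces $L^*_\K(q\true;\emp D_n)=0$.

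The main obstacle is really just the first step: recognizing that the quadratic-form expression in \eqref{equ:kblkernel} is an $\H$-norm so that triangle inequality can decouple the $(\hat\R q - \R q)$ noise from the kernel structure; the rest is an off-the-shelf Hilbert-space martingale concentration bound, which is precisely what is needed to handle the arbitrary time-dependence of $\{x_i\}$ allowed by Assumption~\ref{ass:data}. No mixing coefficient is required because all the randomness being concentrated is in the conditionally-centered increments $\hat\R q(x_i,y_i)-\R q(x_i)$, whose martingale property depends only on the conditional law of $y_i$ given $\mathcal F_i$.
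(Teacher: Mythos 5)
Your proposal is correct and follows essentially the same route as the paper: it writes $L_\K$ and $L^*_\K$ as RKHS norms, uses the triangle inequality to reduce to the norm of $\frac{1}{n}\sum_i\bigl(\hat\R q(x_i,y_i)-\R q(x_i)\bigr)k(x_i,\cdot)$, verifies this is a Hilbert-space-valued martingale difference sequence bounded by $\sqrt{c_{q,k}}$ under Assumption~\ref{ass:data}, and applies Pinelis' inequality, exactly as in the paper's Lemmas on the martingale structure and the triangle inequality. The specialization to $q=q\true$ via $\R q\true\equiv 0$ also matches the paper.
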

An upper bound of the coefficient $c_{q,k}$ can be calculated easily in practice; see \citet{fengaccountable2020} and Appendix~\ref{sec:calcq}.

Intuitively, to see why we can expect an $\bigO(n^{-1/2})$ bound, note that $L_{\K}(q,\emp D_n)$ consists of the square root of the product of two $\hat \R q$ terms, each of which contributes an $\bigO(n^{-1/2})$ error w.r.t. $\Rtrue q$.  
 Technically, 
 the proof %
 is made possible by observing that 
 Assumption~\ref{ass:data} ensures that $\{Z_i \defeq \opt{\hat R} q(x_i, y_i) - \opt R q(x_i)\colon~ i=1,\ldots, n\}$ forms a \emph{martingale difference} sequence w.r.t. $\{\emp D_{<i}\cup \{x_i\}\colon ~\forall i = 1,\ldots,n\}$, in the sense that $\E[Z_i ~|~\emp D_{<i}\cup \{x_i\}] = 0$, $\forall i$. 
The proof also leverages a special property of RKHS and applies a Hoeffding-like inequality by \citet{pinelis1992approach} on Hilbert spaces. See Appendix~\ref{sec:proofcon} for details. 
For other more general function sets $\W$, we 
establish in Appendix~\ref{sec:proofrade}
a similar bound by using Rademacher complexity, although  
it  requires to 
know an upper bound  of the Rademacher complexity of a function set associated with $\set W$ 
and yields a less tight bound than Eq.~\eqref{equ:fastcon2} if $\W = \K$. 
 
 If $\emp D_n$ weakly converges to a limit $\dist D_\infty$ as $n\to\infty$, we can expect that $L_\K(q, \emp D_n)$  converges to $L_\K(q, \dist D_\infty)$. However, $\dist D_\infty$ is not implied from Assumption~\ref{ass:data}.  
Theorem~\ref{thm:Lqpimain2} sidesteps the introduction of $\dist D_\infty$ thanks to the semi-expected KBL $L_\K^*(q, \emp D_n)$. 
Also, as Assumption~\ref{ass:data} does not impose any (weak) independence between $\{x_i\}$, without introducing further assumptions, we cannot establish that $L_\K(q;~ \emp D_n)$ concentrates around the \emph{full expectation}  $\E_{\emp D_n}[L_\K(q;~ \emp D_n)^2]^{1/2}$ where $\emp D_n$ is averaged w.r.t. the underlying data generation distribution $\dist D_{1:n}^{\diamond}$.

\section{Primal-Dual Non-Asymptotic Confidence Bounds}\label{sec:main} 

We are ready to extend the bounds in Eq.~\eqref{equ:qinfbound}-Eq.~\eqref{equ:winfbound} to finite data case.  
To avoid introducing $\dist D_\infty$, 
we start with building an empirical counterpart of bound \eqref{equ:infinite_oracleup} in Section~\ref{sec:oracle}, and then proceed to derive the non-asymptotic counterparts of bound  
\eqref{equ:qinfbound} in Section \ref{sec:primal} and  
bound \eqref{equ:winfbound} in Section \ref{sec:dual}. %

\subsection{A Data-Dependent Oracle Bound} \label{sec:oracle}
Let $\funcset Q$ be a function set that contain the true Q-function $q\true$, that is, $q\true \in \funcset Q$. 

Given a dataset $\emp D_n$, we have the following upper bound of $J\true$ that generalizes \eqref{equ:infinite_oracleup}:
\bba \label{equ:oracleup}
\hat J_{\funcset Q, *}^+ = \sup_{\blue{q}\in \funcset Q} \left \{  
\E_{\D_{\pi,0}}[\blue{q}] ~~~~s.t.~~~~ \hat {\opt R} \blue{q}(x_i,y_i) = \hat {\opt R} q\true(x_i, y_i), ~~~~ \forall i =1,\ldots, n \right\}\,.  
\eea
By definition, 
this 
is the tightest upper bound given only the information in $\emp D_n$ through 
the empirical Bellman operator, 
as in this case  $q$ and $q\true$ would look indistinguishable if $\hat {\opt R} \blue{q}(x_i,y_i) = \hat {\opt R} q\true(x_i,y_i)$ for all $i$.
The lower bound $\hat J_{\funcset Q, *}^-$ can be defined analogously by replacing $\sup_{\blue{q}\in \funcset Q} $ with $\inf_{\blue{q}\in \funcset Q}.$ 
Note that $\hat J_{\funcset Q, *}^+$ and $\hat J_{\funcset Q, *}^-$ are still not practically computable because they depends on the unknown $q\true$.

\begin{pro}\label{pro:obviously}
Assume $q\true \in \Q$, we have $J\true \in \left [\hat J_{\funcset Q, *}^-, ~\hat J_{\funcset Q, *}^+\right ]$.  
\end{pro}
Note that this result holds trivially because $q\true$ is included in the optimization domain in Eq.~\eqref{equ:oracleup}. It \emph{does not} require any assumption on the data $\emp D_n=\{x_i,y_i\}$. That is, 
$J\true \in  [\hat J_{\funcset Q, *}^-, ~\hat J_{\funcset Q, *}^+]$ remains to be true (but may not be useful) even if $\emp D_n$ is filled with  random numbers irrelevant to $\dist P$ and $\pi$; 
the bound becomes useful if 
we have sufficiently number of data points satisfying $y_i \sim \dist P_{\pi}(\cdot~|~x_i)$.

\paragraph{Introducing $\Q$ is Necessary}
 It is necessary to introduce the function $\Q$ to ensure a finite bound unless $\set S$ is a small discrete set. 
Removing the  constraint  of $q\in \Q$ 
in Eq.~\eqref{equ:oracleup} would lead to an infinite upper/lower bound, unless the $\{s_i,s_i'\}_{i=1}^n$ pairs from the data $\emp D_n$ almost surely covers the whole state space $\set S$. %

\begin{pro} \label{pro:free}
Define $\Q_{\pi,\infty} = \{\blue{ q}\colon ~ \hat{\opt R} \blue{q}(x_i, y_i) = \hat{\opt R} q\true(x_i,y_i), ~~\forall i = 1,\ldots, n\}.$
Then, 
unless $\prob_{s\sim \dist d_{\pi,0}}(s\notin \{s_i, s_i'\}_{i=1}^n) = 0$,  we have 
\bb 
\inf_{\blue{q}\in \Q_{\pi,\infty}} \E_{\dist D_{\pi,0}}[\blue{q}] = -\infty, &&
\sup_{\blue{q}\in \Q_{\pi,\infty}} \E_{\dist D_{\pi,0}}[\blue{q}] = +\infty.
\ee 
\end{pro}
Note that $\prob_{s\sim \dist d_{\pi,0}}(s\notin \{s_i, s_i'\}_{i=1}^n) = 0$ can hold only when the data size $n$ is no smaller than the cardinality of the state space $\set S$, which is infinite when $\set S$ is a continuous domain. 
Therefore,
it is necessary to 
introduce a $\Q$ that is smaller than $\Q_{\pi,\infty}$  unless $\set S$ is a  discrete set with a small number of elements.

Note that every element of the $\Q_{\pi,\infty}$ defined in Proposition~\ref{pro:free} is indistinguishable with $q\true$ 
from the information accessible through the empirical Bellman operator $\hat \R$. 
Thus, unless $\Q = \Q_{\pi,\infty}$, which would make it too large to be useful, 
we can not provably guarantee that $q\true \in \Q$, because every element in $\Q_{\pi,\infty}\setminus \Q$ can have a chance to be the true  Q-function $q\true$, where $\setminus$ denotes the set minus operator.   
Therefore, the correctness of our bound will ultimately relies on an un-checkable model assumption, %
which is unavoidable in {many statistical estimation problems} in general. 

On the other hand, %
it is possible  to empirically reject a poorly chosen $\Q$ by hypothesis testing  when $\Q \cap \Q_{\pi,\infty} = \emptyset$. 
Consider the following test: 
$$
\text{Null:}~~ q\true \in \set Q  ~~~~~~vs.~~~~\text{Alternative}: ~~q\true \not\in \set Q. 
$$
We can reject $~ q\true \in \set Q$ with a false positive error $\delta$ if 
$\inf_{q\in \set Q} L_{\W}(q;~\emp D_n) \geq \varepsilon_n$, where $\varepsilon_n$ satisfies $\prob(L_{\funcset W}(q\true;~ \emp D_n)\leq \varepsilon_n) \geq 1-\delta$. 
This is because %
\bb 
\prob\left (\text{Reject $q\true \in \set Q$}~\bigg |~ \text{$q\true \in \set Q$ is true}\right )
& = \prob\left ( \inf_{q\in \set Q} L_{\W}(q;~\emp D_n) \geq \varepsilon_n~\bigg |~ \text{$q\true \in \set Q$ is true}\right )  \\
& \leq  \prob\left ( L_{\W}(q\true;~\emp D_n) \geq \varepsilon_n\right )\\
& \leq \delta. 
\ee
{When $\set Q$ is a finite ball in RKHS, 
we can practically solve $\min_{q\in \set Q} L_{\W}(q;~\emp D_n)$ by using the finite representer theorem of RKHS \citep{scholkopf2018learning}; see related discussion in Section~\ref{sec:primal} and Appendix~\ref{sec:kernelw}. 
}

\subsection{Empirical Primal Bound via Functional Bellman Loss}  \label{sec:primal} 
To make use of the concentration inequality in Section~\ref{sec:concentration}, we relax Eq.~\eqref{equ:oracleup} to the following bound that depends on a function set $\W$: 
\begin{align} \label{equ:oracleW}
\hat J_{\funcset Q, \funcset W, *}^+ = \sup_{\blue{q}\in \funcset Q} \left \{  
\E_{\D_{\pi,0}}[\blue{q}] ~~~~s.t.~~~~ L_\W(\blue{q};~\emp D_n) \leq   L_\W({q\true};~\emp D_n) 
\right\}\,,
\end{align}
and $\hat J_{\funcset Q, \funcset W, *}^-$ is defined analogously. 
Obviously, %
$ \hat J_{\funcset Q, \W, *}^+$ is an upper bound of $ \hat J_{\funcset Q, *}^+$ 
because the optimization domain in Eq.~\eqref{equ:oracleW} is  larger than that of Eq.~\eqref{equ:oracleup}.  
\begin{pro}\label{pro:zero} 
For any $\Q$, $\W$ and data $\emp D_n$, we have 
\begin{align}\label{equ:trivialQ}
    \left [\hat J_{\funcset Q, *}^-,~~ \hat J_{\funcset Q, *}^+\right] \subseteq \left [\hat J_{\funcset Q, \funcset W, *}^-,~~ \hat J_{\funcset Q, \funcset W, *}^+\right].  
\end{align}
\end{pro}

Note that $\hat J_{\Q, \W,*}^+$ is still not practically computable because it depends on 
the unknown $L_\W({q\true};~\emp D_n)$. 
However, because 
$L_\W({q\true};~\emp D_n) $ concentrates around zero when $\emp D_n$ satisfies 
Assumption~\ref{ass:data}, 
we can construct a fully empirical  bound  
as follows: %
\begin{align} \label{equ:primalbound}
\hatJup_{\funcset Q, \funcset W} = 
\sup_{\blue{q} \in \funcset Q} 
\left \{ 
\E_{\dist D_{\pi,0}}[\blue{q}] 
~~~s.t.~~~  L_\W(\blue{q};~\emp D_n) \leq {\varepsilon_n} 
\right \}, 
\end{align}
where $\varepsilon_n$ is a positive number properly chosen such that 
\bba \label{equ:tail}
\prob(L_{\funcset W}(q\true;~ \emp D_n)\leq \varepsilon_n) \geq 1-\delta\,.
\eea 
We  set $\varepsilon_n = \sqrt{2c_{q\true,k}\log(2/\delta)/n}$ when $\W = \K$ following Theorem~\ref{thm:Lqpimain2}. 
The lower bound $ \hatJlow_{\Q,\W}$ can be defined analogously. 
Obviously, Eq.~\eqref{equ:primalbound} is the empirical counterpart of Eq.~\eqref{equ:qinfbound}.

\begin{pro}\label{pro:Jstarbound}
Assume %
Eq.~\eqref{equ:tail} holds. 
Then for any  set $\set Q$, %
we have 
\begin{align} \label{equ:Jstarbound}
\prob \left (\left [\hat J^-_{\set Q, \funcset W, *}, ~~ \hat J^+_{\set Q, \funcset W, *} \right ] 
~ \subseteq ~  
\left [\hat J^-_{\set Q, \set W} , ~~ \hat J^+_{\set Q, \set W}  \right ] 
\right) \geq 1-\delta\,. 
\end{align}
 Further, if  {$q\true \in \Q$}, we have 
 \begin{align}
\prob\left (%
J\true  \in 
\left [\hat J^-_{\set Q, \set W} , \hat J^+_{\set Q, \set W}  \right ] \right) \geq 1- \delta\,.
\end{align}
\end{pro}

\begin{table}[t]
    \centering
    \begin{adjustbox}{width=\columnwidth,center}
    \renewcommand{\arraystretch}{1.8}
    \begin{tabular}{l|lc|l}
        \toprule
         & \bf{Expressions} && \bf{Remark} \\
        \midrule\midrule
        $J\true$ & %
        $\E_{x\sim \dist d_{\pi,0}}[q\true(x)]$ &\eqref{equ:jpimac}& Ground truth\\
        \midrule
        $\hat J_{\funcset Q, *}^+$ &  %
        $\sup_{q\in \funcset Q} \left \{  
\E_{\D_{\pi,0}}[q] ~~s.t.~~ \hat {\opt R} q(x_i,y_i) = \hat {\opt R} q\true(x_i, y_i), ~ \forall i\in [n] \right\}$& \eqref{equ:oracleup} & Oracle upper bound, if $q\true\in \Q$, $J\true \leq \hat J_{\funcset Q, *}^+$\\
        \midrule
        $\hat J_{\funcset Q, \funcset W, *}^+$ & $ %
        \sup_{q\in \funcset Q} \left \{  
\E_{\D_{\pi,0}}[q] ~~s.t.~~ L_\W(q;~\emp D_n) \leq   L_\W({q\true};~\emp D_n) 
\right\}$ &\eqref{equ:oracleW}& Oracle upper bound, $\hat J_{\funcset Q, *}^+\leq \hat J_{\funcset Q, \funcset W, *}^+$\\
        \midrule 
        $\hatJup_{\Q,\W}$ & %
        $\sup_{q\in \funcset Q} \left \{
\E_{\D_{\pi,0}}[q] ~~s.t.~~ L_\W(q;~\emp D_n) \leq \varepsilon_n \right\}$ & \eqref{equ:primalbound} & Primal bound, $\prob \left (\hat J^+_{\set Q, \funcset W, *} \leq \hat J^+_{\set Q, \set W}  \right) \geq 1-\delta$\\
        \midrule 
        $\hat J_{\Q,\W}^{++}$ & 
        $\inf_{\ratio\in \Wo}\left\{ \E_{\emp D_n^\ratio}[r] + \Wd  +  \varepsilon_n \norm{\ratio}_{\H} \right\}$ & \eqref{equ:mainVbounddelta} & Dual bound, 
                $\hatJup_{\Q,\W}\leq \hat J_{\Q,\W}^{++}$ \\ %
        \bottomrule
    \end{tabular}
    \end{adjustbox}
    \caption{Summary of different upper bounds in Section \ref{sec:main} based on $\emp D_n$, which are empirical counterparts of the bounds in Table~\ref{tab:pop_summary}. $\hatJup_{\Q,\W}$ and $\hat J_{\Q,\W}^{++}$ can be calculated from empirical data, while $\hat J_{\funcset Q, *}^+$ and  $\hat J_{\funcset Q, \set W, *}^+$ cannot since they depend on the unknown $q\true$.} 
    \label{tab:estimator_summary}
\end{table}

\paragraph{Computation of $\hat J^+_{\Q, \W}$ in Eq.~\eqref{equ:primalbound}}
  If $\Q$ is taken to be an RKHS ball, 
the optimization in Eq.~\eqref{equ:primalbound} can be shown to reduce to a finite dimensional convex optimization, and hence can be solved in practice. 
Precisely, if $\Q$ is a finite ball of the RKHS associated with a positive definite kernel $\tilde k(\cdot,\cdot)$ (which should be distinguished with kernel $k(\cdot,\cdot)$ of $\K$), 
then 
by the finite representer theorem of RKHS \citep{scholkopf2018learning}, 
 the global optimum of \eqref{equ:primalbound} can be achieved by a function of form 
$$
q(x) = \sum_{i=1}^n \alpha_i \tilde k(x, x_i)\,. 
$$%
Plugging this into Eq.~\eqref{equ:primalbound}, 
the optimization can be shown to reduce to a convex optimization on $\{\alpha_i\}_{i=1}^n$ with 
 a linear objective and quadratic inequality constraint.

Unfortunately, 
when the data size $n$ is large, solving Eq.~\eqref{equ:primalbound} 
still leads to a high computational cost. 
Importantly, because the guarantee in Eq.~\eqref{equ:Jstarbound} only holds
when the maximization is solved to global optimality, 
fast approximation methods, such as random feature approximation \citep{rahimi2007random}, 
should not be used in principle. 
We address this problem by considering the dual form of Eq.~\eqref{equ:primalbound}, which avoids to solve the challenging global optimization in Eq.~\eqref{equ:primalbound}.  
Moreover, the dual form enables us to better understand the tightness of the confidence interval and issues regarding the choices of $\Q$ and $\W$.

 \subsection{The Dual Bound}  
 \label{sec:dual}

To derive the dual bound, let us plug the definition of $L_\W(q;~\emp D_n)$ into Eq.~\eqref{equ:kblfunctional} and introduce a Lagrange multiplier $\lambda$: 
\begin{align} 
\hatJup_{\Q,\W}  
& =  \sup_{q\in \funcset Q} \inf_{h\in \W} \inf_{\lambda \geq 0}
\E_{\D_{\pi,0}}[q] - \lambda\left (\frac{1}{n}\sum_{i=1}^n h(x_i) \opt{\hat R}q(x_i,y_i) - \varepsilon_n\right) \notag\\
& =  \sup_{q \in \Q} \inf_{\ratio \in \H}
\left\{ %
\E_{\D_{\pi,0}}[q] - \frac{1}{n}\sum_{i=1}^n \ratio(x_i) \opt{\hat R}q(x_i) +  \varepsilon_n \norm{\ratio}_{\H} \right\}\,,%
\label{equ:maxminM}
\end{align}
 where we assume that $\set W$ is the unit ball of the normed space $\Wo$
 and hence can write any $\ratio$ in $\Wo$ into $\ratio(x) = \lambda h(x)$ such that  $h\in \set W$ and 
 $\lambda =   ||w ||_{\Wo}$. %
 Exchanging the order of min/max and some further derivation yields the following main result.

\begin{algorithm}[t]
\caption{Non-asymptotic Confidence Interval for Off-Policy Evaluation}
\label{algo:dual_bound}
\begin{algorithmic}
\STATE {\bfseries Input:} 
Off-policy data $\emp D_{n} = (s_i, a_i, r_i, s_i^{\prime})_{i=1}^{n} $; 
discounted factor $\gamma$; 
an RKHS $\Wo$ with kernel $k(\cdot, \cdot )$;  
a finite ball $\set Q$ in the RKHS with kernel $\tilde{k}(\cdot, \cdot)$; 
significance level $\delta \in (0,1)$.
\STATE

\STATE In Eq~\eqref{equ:mainVbounddelta}, set $\varepsilon_{n} = \sqrt{2c_{q\true, k}\log(2 / \delta) / n}$ with $c_{q\true, k}$ calculated in Eq.~\eqref{equ:c_qk}.
\STATE Approximately solve $\omega_{+} = \arg\min_{\omega \in \Wo} \hat F^+_\Q(\omega)$, and 
$\omega_{-} = \argmax_{\omega \in \Wo} \hat F^-_\Q(\omega)$.
\STATE
\STATE {\bfseries Output:} $[\hat F^-_\Q(\omega_{-}),~ \hat F^+_\Q(\omega_{+})]$. 
\end{algorithmic}
\end{algorithm}

\begin{thm}  \label{thm:main}
Let $\W$ be the unit ball of a normed function space $\H$. 
We have %
$$
\left[ \hatJlow_{\Q,\W},~~ \hatJup_{\Q,\W} \right] 
\subseteq \left [\hat F^-_{\Q} (\ratio) , ~~ \hat F^+_{\Q}(\ratio) \right],\quad \forall \ratio \in \H\,, 
$$
\bba 
\label{equ:mainVbounddelta}
\begin{split} 
\text{where ~~~~~~~~~~~}
 &%
 \hat F^+_\Q(\omega) \defeq \E_{\emp D_n^\ratio}[r] + \Wd  +  \varepsilon_n \norm{\ratio}_{\H}\,,\quad \\
 &%
 \hat F^-_\Q(\omega) \defeq \E_{\emp D_n^\ratio}[r] - \Wdneg  - \varepsilon_n \norm{\ratio}_{\H}\,.\quad%
 \end{split}
 \eea 
 Here $\neg \Q=\{-q\colon q\in \Q\}$ and hence $\Wdneg=\Wd$ if $\Q = \neg \Q$. 
 
 Further, the bound is tight, that is, $\hatJup_{\Q,\W} = \inf_{\ratio\in \H}\hat F^+_\Q(\omega)$ and $\hatJlow_{\Q,\W} = \sup_{\ratio\in\H} \hat F^-_\Q(\omega)$,   
 if $\Q$ is convex and there exists a function $q\in \Q$ that satisfies the strict feasibility condition that  $L_{\W}(q;~\emp D_n) <\varepsilon_n$.

\end{thm}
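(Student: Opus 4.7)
The plan is to establish the inclusion $[\hatJlow_{\Q,\W}, \hatJup_{\Q,\W}] \subseteq [\hat F^-_\Q(\ratio), \hat F^+_\Q(\ratio)]$ via weak Lagrangian duality, and recover the tightness under convexity plus strict feasibility via strong duality.

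\textbf{Step 1 (Constraint reformulation).} Because $\W$ is the unit ball of the normed space $\H$, the defining expression $L_\W(q;\emp D_n) = \sup_{h\in\W}\E_{\emp D_n^h}[\hat\R q]$ is precisely the dual norm of the linear functional $h\mapsto \E_{\emp D_n^h}[\hat\R q]$. Hence, using symmetry of $\W$, the constraint $L_\W(q;\emp D_n)\leq\varepsilon_n$ is equivalent to
\[
-\varepsilon_n\norm{\ratio}_\H \;\leq\; \E_{\emp D_n^\ratio}[\hat\R q] \;\leq\; \varepsilon_n\norm{\ratio}_\H,\qquad \forall\,\ratio\in\H.
\]
A direct calculation from $\hat\R q(x,y)=q(x)-\gamma q(x')-r$ and the definition of $\Delta$ in Eq.~\eqref{equ:Delta} yields the key identity
\[
\E_{\dist D_{\pi,0}}[q] - \E_{\emp D_n^\ratio}[\hat\R q] \;=\; \Delta(\emp D_n^\ratio, q) + \E_{\emp D_n^\ratio}[r].
\]

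\textbf{Step 2 (Weak duality gives the inclusion).} For any feasible $q$ (i.e., $q\in\Q$ with $L_\W(q;\emp D_n)\leq\varepsilon_n$) and any $\ratio\in\H$, combining the upper inequality from Step~1 with the identity above gives
\[
\E_{\dist D_{\pi,0}}[q] \;\leq\; \E_{\dist D_{\pi,0}}[q] - \E_{\emp D_n^\ratio}[\hat\R q] + \varepsilon_n\norm{\ratio}_\H
\;=\; \Delta(\emp D_n^\ratio,q) + \E_{\emp D_n^\ratio}[r] + \varepsilon_n\norm{\ratio}_\H
\;\leq\; \hat F^+_\Q(\ratio),
\]
the last step by replacing $\Delta(\emp D_n^\ratio,q)$ with $\Wd$. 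Taking $\sup$ over feasible $q$ yields $\hatJup_{\Q,\W}\leq \hat F^+_\Q(\ratio)$. For the lower bound, apply the reverse inequality from Step~1 and use $\Delta(\emp D_n^\ratio,-q)=-\Delta(\emp D_n^\ratio,q)$ to conclude $\inf_{q\in\Q}\Delta(\emp D_n^\ratio,q)=-\Wdneg$, giving $\hatJlow_{\Q,\W}\geq \hat F^-_\Q(\ratio)$.

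\textbf{Step 3 (Tightness via strong duality).} When $\Q$ is convex, $\hatJup_{\Q,\W}$ is the supremum of the linear functional $q\mapsto\E_{\dist D_{\pi,0}}[q]$ over the convex feasible set $\{q\in\Q:L_\W(q;\emp D_n)\leq\varepsilon_n\}$ (convex since $L_\W(\cdot;\emp D_n)$ is a sup of linear functionals). The strict feasibility hypothesis is Slater's condition, so Lagrangian strong duality gives
\[
\hatJup_{\Q,\W} = \inf_{\mu\geq 0}\sup_{q\in\Q}\Bigl\{\E_{\dist D_{\pi,0}}[q] - \mu\bigl(L_\W(q;\emp D_n)-\varepsilon_n\bigr)\Bigr\}.
\]
Writing $\mu L_\W(q;\emp D_n) = \sup_{\norm{\ratio}_\H\leq\mu}\E_{\emp D_n^\ratio}[\hat\R q]$ and applying Sion's minimax theorem to exchange the inner sup and inf (legal by convexity of $\Q$, convexity of the norm ball, and bilinearity of the integrand) converts the inner problem into $\inf_{\norm{\ratio}_\H\leq\mu}\{\Wd+\E_{\emp D_n^\ratio}[r]\}$ via the identity of Step~1. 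Merging the two infima and observing that the optimum over $\mu\geq\norm{\ratio}_\H$ is attained at $\mu=\norm{\ratio}_\H$ collapses the result to $\inf_{\ratio\in\H}\hat F^+_\Q(\ratio)$. The symmetric argument, applied to $-\Q$ (equivalently, exchanging $\sup$ with $\inf$), delivers tightness for $\hatJlow_{\Q,\W}$.

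\textbf{Main obstacle.} The delicate step is the inner minimax exchange in Step~3: it requires the topological hypotheses underpinning Sion's theorem (convexity plus appropriate weak compactness of $\Q$ or of the norm ball in $\H$, together with the right semicontinuity). The bilinearity of the Lagrangian in $(q,\ratio)$ makes the structural side of the exchange easy; everything else reduces to carefully tracking the $\Delta$-identity and the dual-norm reformulation.
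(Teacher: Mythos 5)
Your proof is correct and takes essentially the same route as the paper's: the constraint $L_\W(q;~\emp D_n)\leq\varepsilon_n$ is dualized with a multiplier ranging over $\H$, the identity $\E_{\D_{\pi,0}}[q]-\frac{1}{n}\sum_{i=1}^n\ratio(x_i)\hat\R q(x_i,y_i)=\E_{\emp D_n^\ratio}[r]+\Delta(\emp D_n^\ratio, q)$ turns the Lagrangian into $\hat F^{+}_\Q(\ratio)$ and $\hat F^{-}_\Q(\ratio)$, weak duality yields the inclusion, and Slater's condition yields tightness. If anything, your Step 3 is more explicit than the paper's appendix, which invokes strong duality under Slater without spelling out the inner sup--inf exchange that you attribute to Sion's theorem.
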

Therefore, when Eq.~\eqref{equ:tail} holds, 
 for any function set  $\funcset Q$, and any function $\ratio_+,  \ratio_-  \in \H$ 
(the choice of $\Q$, $\ratio_+$, $\ratio_-$ can depend on $\emp D_n$ arbitrarily), 
 we have %
 \begin{align} \label{equ:CL}
\prob\left (\left [\hat J^-_{\Q, *},~~  \hat J^+_{\Q, *}\right ] \subseteq 
\left [\hat F^-_{\Q}(\ratio_-), ~~ \hat F^+_{\Q}(\ratio_+)\right] \right) \geq 1-\delta\,. 
\end{align}
Theorem~\ref{thm:main} transforms the original bound in Eq.~\eqref{equ:primalbound}, framed in terms of $q$ and $L_\W(q;~\emp D_n)$, into a form that involves the density-ratio $\ratio$ and the related loss $\Wd$. 
The bounds in Eq.~\eqref{equ:mainVbounddelta} can be interpreted as assigning an error bar around the $\ratio$-based 
estimator $\hat J_{\ratio} = \E_{\emp D_n^\ratio}[r]$ in Eq.~\eqref{equ:densityraio}, with the error bar of $I_{\pm\Q}(\ratio; ~\emp D_n) + \varepsilon_n \norm{\ratio}_{\H}$. 
Specifically, the first term $I_{\pm\Q}(\ratio; ~\emp D_n)$ measures 
the discrepancy between $\emp D_n^\ratio$ and $\dist D\true$ as discussed in Eq.~\eqref{equ:sw}, whereas the second term captures the randomness in the empirical Bellman residual operator $\opt {\hat R}q\true$.

 Compared with Eq.~\eqref{equ:primalbound}, the global maximization on $q\in \Q$ is now transformed inside the $I_{\mathcal Q} (\ratio; \emp D_n)$ term, which yields a simple closed form solution when $\Q$ is a finite ball in RKHS. 
We can  optimize $\ratio_+$ and $\ratio_-$  by minimizing/maximizing $\hat F^{+}_{\Q}(\ratio)$ and $\hat F^-_\Q(\ratio)$ to obtain the tightest possible bound (and hence recover the primal bound). However, it is not necessary to find the exact globally optimal solutions for practical purpose. %
 When $\H$ is an RKHS, by the standard finite representer theorem \citep{scholkopf2018learning}, 
 the optimization on $\ratio$ reduces to a finite dimensional optimization, 
 which can be approximately solved with any practical technique without sacrificing the correctness of the bound (although the optimization quality of $\ratio$ impacts the tightness of the bound). 
 We elaborate on this in Appendix~\ref{sec:kernelw}.

\myparagraph{Length of the Confidence Interval}   
The form in Eq.~\eqref{equ:mainVbounddelta} also makes it much easier to analyze the tightness of the confidence interval. 
Suppose $\ratio = \ratio_+ = \ratio_-$ and $\Q = \neg \Q$, the length of the optimal confidence interval is 

$$
\hat J^+_{\Q,\W}-\hat J^-_{\Q,\W}
= \inf_{\ratio \in \Wo}\big \{ 2\Wd  + 2\varepsilon_n \norm{\ratio}_{\H} \big\}\,. 
$$

Given that $\varepsilon_n$ is $\bigO(n^{-1/2})$, we can make the overall length of the optimal confidence interval also $\bigO(n^{-1/2})$ if $\Wo$ is rich enough to include a \emph{good} density ratio estimator $\ratio^*$ that satisfies $I_{\Q}(\ratio^*;~\emp D_n) = \bigO(n^{-1/2})$  and has a bounded norm $\norm{\ratio^*}_{\H}$.  

Assumption~\ref{ass:data} does not ensure the existence of such $\ratio\true$. However, we can expect to have such a $\ratio\true$ if 
1) $\Q$ has an $\bigO(n^{-1/2})$ sequential Rademacher complexity \citep{rakhlin2015sequential} (which holds if $\Q$ is a finite ball in RKHS); and 2) 
$\emp D_n$ is collected following a Markov chain 
with a strong mixing condition and weakly converges to some limit distribution $\dist D_\infty$ whose support is $\X$; in this case, we can define $\ratio^*$ as the density ratio between $\dist D\true$ and $\dist D_\infty$. 
See Appendix~\ref{sec:tightness} for more discussions. 
Indeed, our experiments show that the lengths of practically constructed confidence intervals do tend to decay with an   $O(n^{-1/2})$ rate approximately. 

\paragraph{Using Data-Dependent $\W$ and $\Q$} 
 To ensure Eq.~\eqref{equ:Jstarbound} holds, 
 the choice of $\H$ cannot depend on the data $\emp D_n$, since it may introduce additional dependency and 
 hence make the concentration inequality in Theorem~\ref{thm:Lqpimain2} invalid. 
 Therefore, 
 if we want to use a data-dependent $\H$, we need to either base the construction of $\H$
 on separate holdout data, or introduce a generalization bound to account the dependence of $\H$ on the data $\emp D_n$. 
 
 Interestingly, 
 Eq.~\eqref{equ:Jstarbound}{ holds even  if 
  we  take $\Q = \Q(\emp D_n)$ 
  to be an arbitrary function of the data $\emp D_n$}. 
 This is because $\Q$ 
 is irrelevant to the concentration inequality (Eq.~\eqref{equ:tail}). 
  This justifies that one can construct  $\Q$ adaptively based on the data to get tighter confidence interval. 
   For example, we can make $\Q$ an RKHS ball centering around an estimator $\hat q \approx q\true$ given by a state-of-the-art method (e.g., fitted iteration or model-based methods). 
  A caveat is that we will need to ensure that $\prob(q_* \in  \Q(\emp D_n)) = 1$ in order to ensure  $\prob\left (J\true \in\left [\hat J^-_{\Q(\emp D_n), \set W}, ~\hat J^+_{\Q(\emp D_n), \set W}\right]\right) \geq 1-\delta$ (see Proposition~\ref{pro:Jstarbound}), which, as we discussed in  Section~\ref{sec:oracle}, can not be provably guaranteed based on only empirical observation.

\section{Experiments} \label{sec:experiments}
We compare our method with a variety of existing algorithms for obtaining asymptotic and non-asymptotic bounds
on a number of benchmarks. 
We find our method can
provide confidence interval that correctly covers the true expected reward with probability larger than the specified success probability $1-\delta$ 
(and is hence safe) across the multiple examples we tested. 
In comparison, the non-asymptotic bounds based on IS provide  much wider confidence intervals. 
On the other hand, 
the asymptotic methods, such as bootstrap,
despite giving tighter intervals, 
often fail to capture the true values
with the given probability in practice. %

\myparagraph{Environments and Dataset Construction}
We test our method on three environments: 
Inverted-Pendulum and CartPole from OpenAI Gym \citep{brockman2016openai}, and a Type-1 Diabetes medical treatment simulator.\footnote{\href{ https://github.com/jxx123/simglucose}{ https://github.com/jxx123/simglucose}.}
We follow a similar procedure as \citet{fengaccountable2020} to construct the behavior and target policies. 
more details on environments and data collection procedure are included in Appendix \ref{sec:app_exp_details}.

\myparagraph{Algorithm Settings}
We test the dual bound described in Algorithm \ref{algo:dual_bound}. 
Throughout the experiment, we always set $\W = \K$, the unit ball of the RKHS with positive definite kernel $k$, 
and set $\Q = r_\Q \tilde \K$, the ball of radius $r_\Q$ in the RKHS with another kernel $\tilde k$.  
We take both kernels to be Gaussian RBF kernel and
choose $r_\Q$ and the bandwidths of $k$ and $\tilde k$ using the procedure in Appendix~\ref{sec:sense_hypers}.
We use a fast approximation method to optimize $\ratio$ in $F_{\set Q}^+ (\ratio)$ 
and $F_{\set Q}^- (\ratio)$ 
as shown in Appendix~\ref{sec:kernelw}. 
Once $\ratio$ is found, we evaluate the bound in Eq.~\eqref{equ:mainVbounddelta} exactly 
to ensure that the theoretical guarantee holds.

\myparagraph{Baseline Algorithms} 
We compare our method with four existing baselines, 
including the IS-based non-asymptotic bound using empirical Bernstein inequality by \citet{thomas2015highope}, 
the IS-based bootstrap bound of \citet{thomas2015safe}, %
the bootstrap bound based on fitted Q evaluation (FQE) by \citet{kostrikov2020statistical}, 
and %
the bound in 
\citet{fengaccountable2020} which is equivalent to the primal bound in \eqref{equ:primalbound} but with looser concentration inequality (they use a $\varepsilon_n=O(n^{-1/4})$ threshold).

\newcommand{\delen}{.215\linewidth}
\newcommand{\degapline}{-.1\linewidth}

 \begin{figure}[t]
    \centering
        \setlength{\tabcolsep}{0.1pt}
        \begin{tabular}{cccc}
        \vspace{-.2em}\\
        \includegraphics[height=\delen]{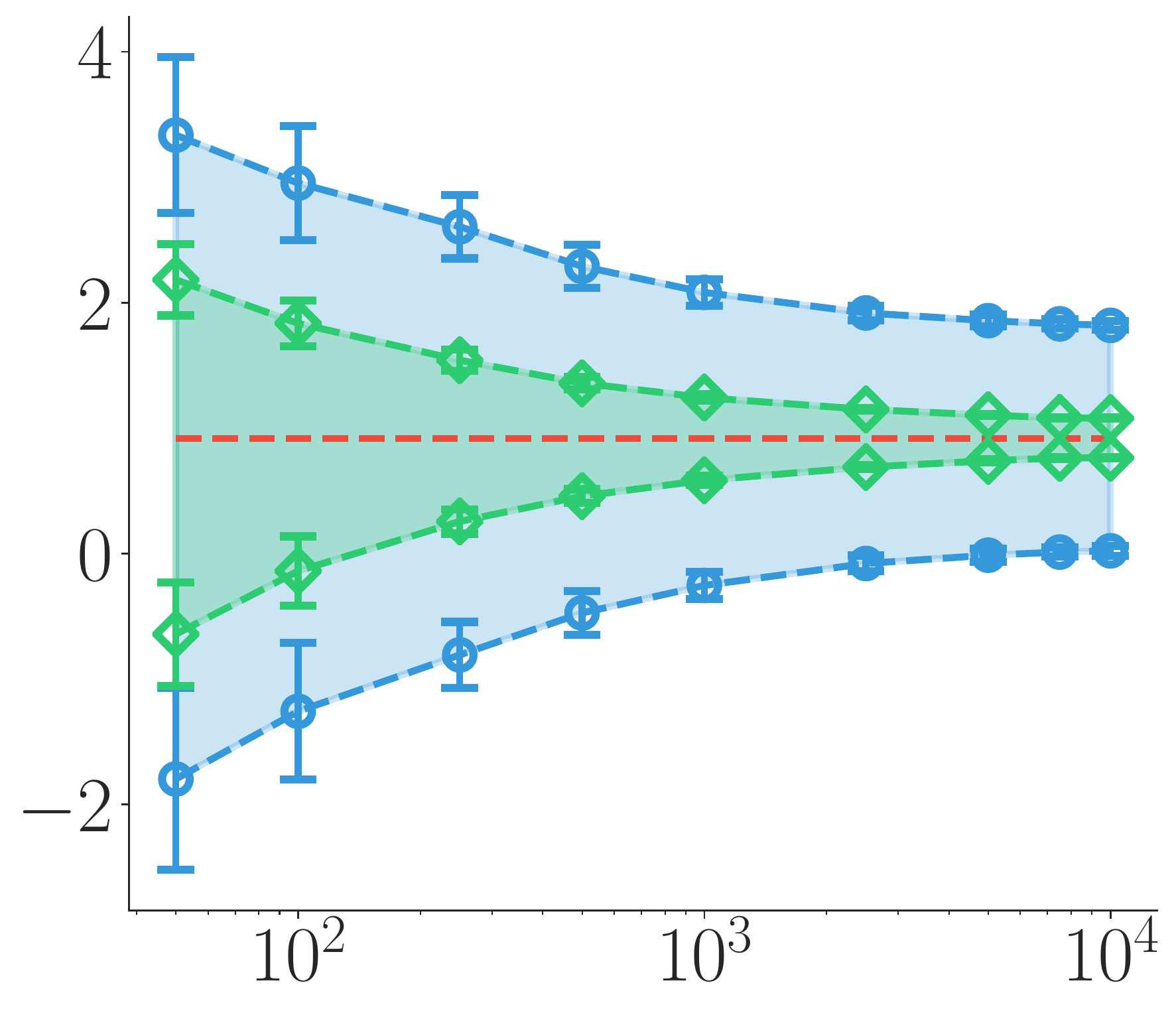}
         \llap{\makebox[\wd2][r]{\raisebox{7.3em}{\includegraphics[height=1.8em]{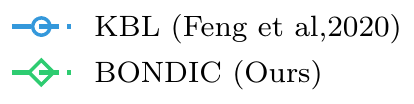}}}}&
        \includegraphics[height=.22\linewidth]{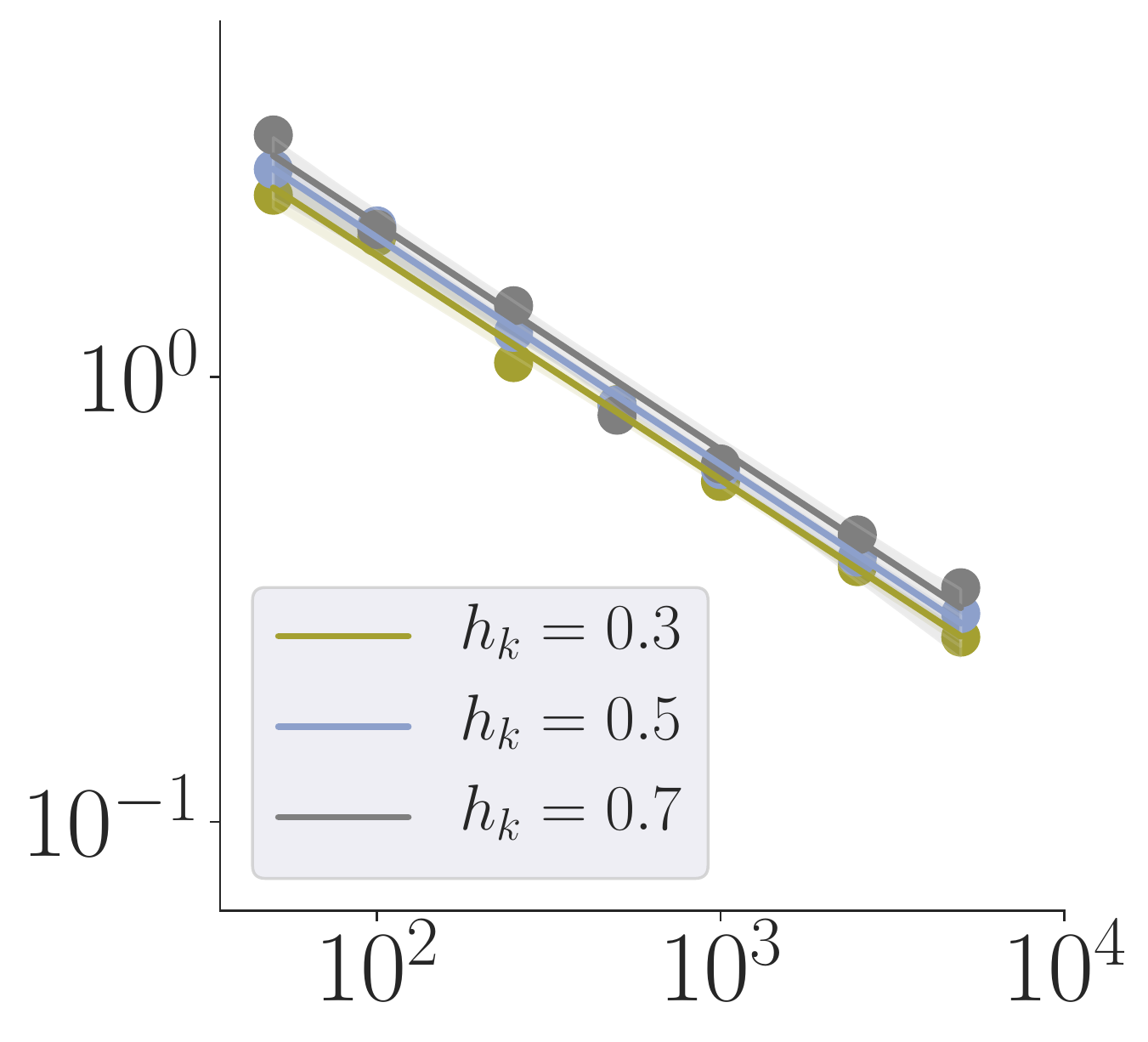} &
        \includegraphics[height=\delen]{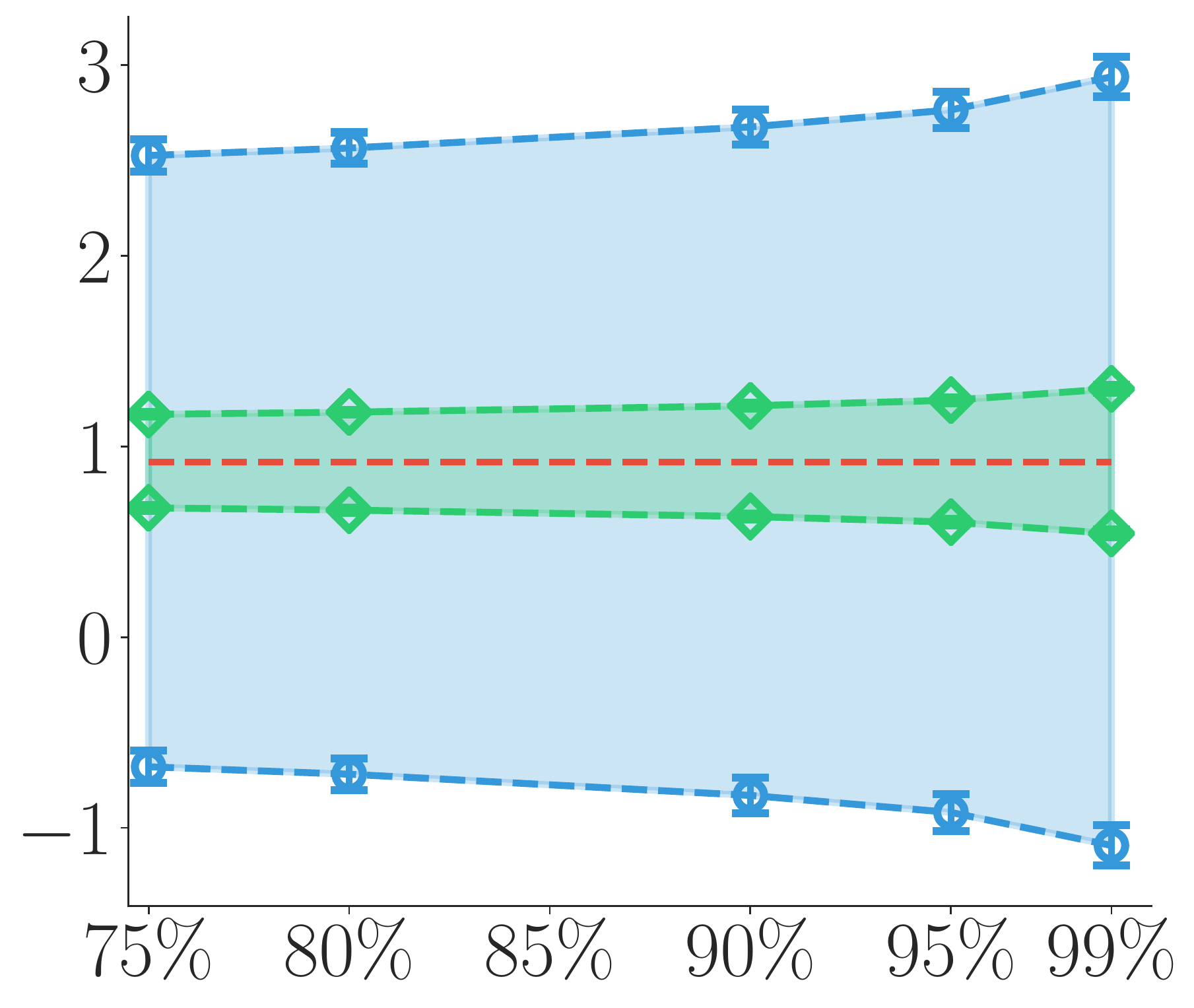} &
        \includegraphics[height=\delen]{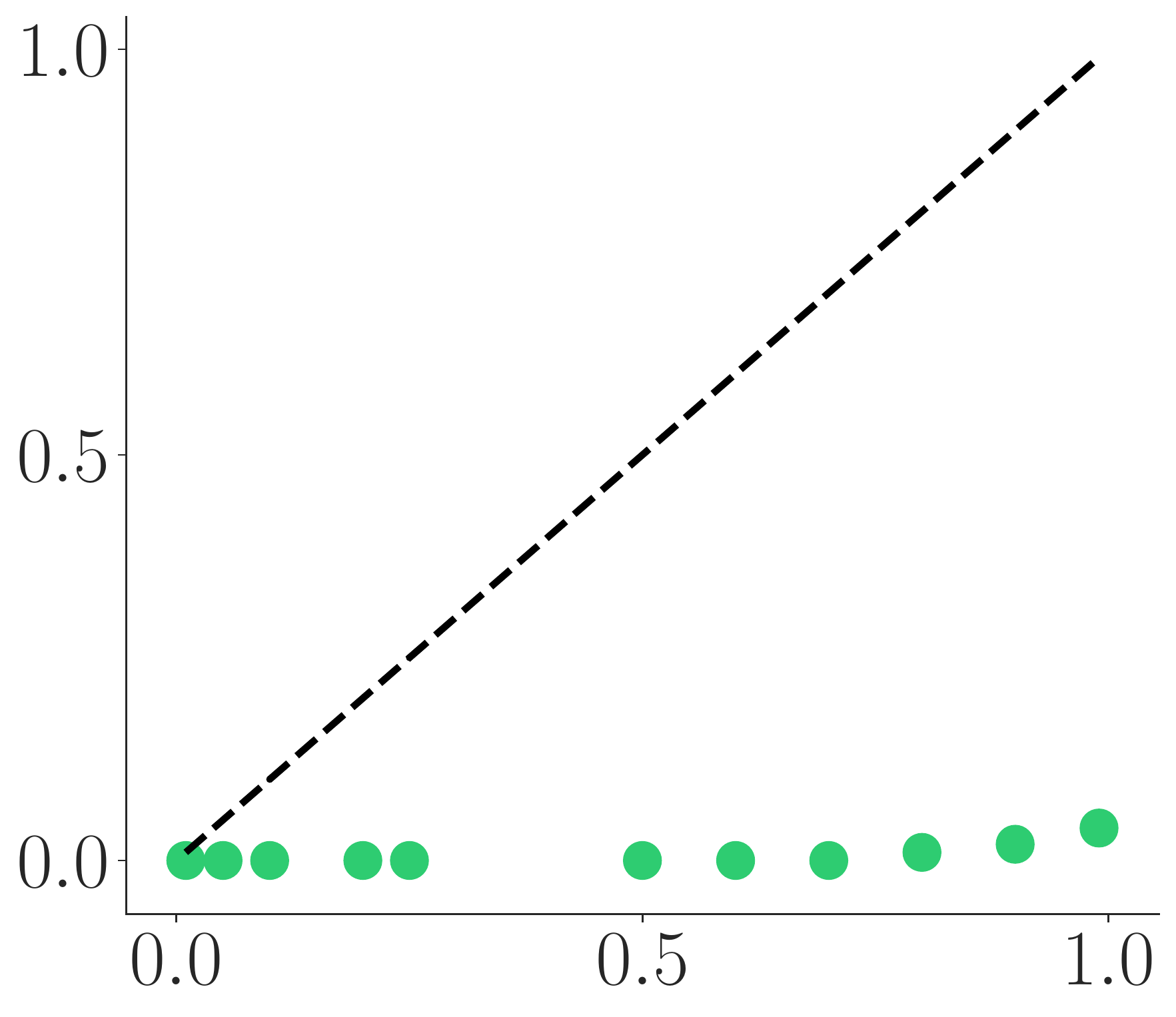}\\
        \small{(a)  Transitions, $n$} & \small{(b)~Transitions, $n$} &  \small{$(c)~1-\delta$} &  \small{$(d)~\delta$} \\
    \end{tabular}
    \begin{picture}(1,1)
        \put(-235,60){\rotatebox{90}{\scriptsize Reward}}
        \put(-118,48){\rotatebox{90}{\scriptsize $(\text{Interval length})$}}
        \put(-5,60){\rotatebox{90}{\scriptsize Reward}}
        \put(115,80){\rotatebox{90}{\scriptsize $\hat{\delta}$}}
    \end{picture}
    \caption{\small{Results on Inverted-Pendulum.  %
    (a) The confidence interval (significance level $\delta=0.1$) of our method (green) and that of \citet{fengaccountable2020} (blue) when varying the data size $n$. %
    (b) The  length of the confidence intervals ({$\delta=0.1$}) of our method scaling with the data size $n$.
    (c)  The confidence intervals when we vary the significance level $\delta$ (data size $n = 5000$). 
    (d) The significance level $\delta$ vs.  the empirical failure rate $\hat\delta$ of capturing the true expected reward by our confidence intervals (data size $n=5000$).
    We average over 50 random trials for each experiment. 
    }}
    \label{fig:inverted_pendulum}
\end{figure}

\begin{figure}[H]
    \centering
    \setlength{\tabcolsep}{0pt}
    \begin{tabular}{ccc}
        \multicolumn{3}{c}{
        \includegraphics[width=.98\linewidth]{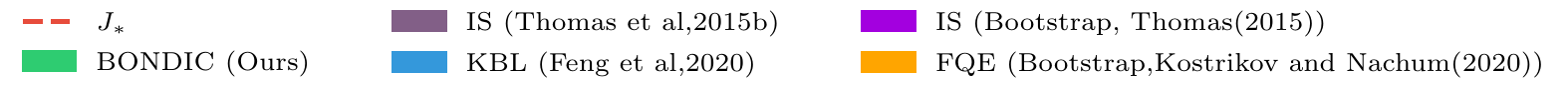}}   \\
         \includegraphics[width=.32\linewidth]{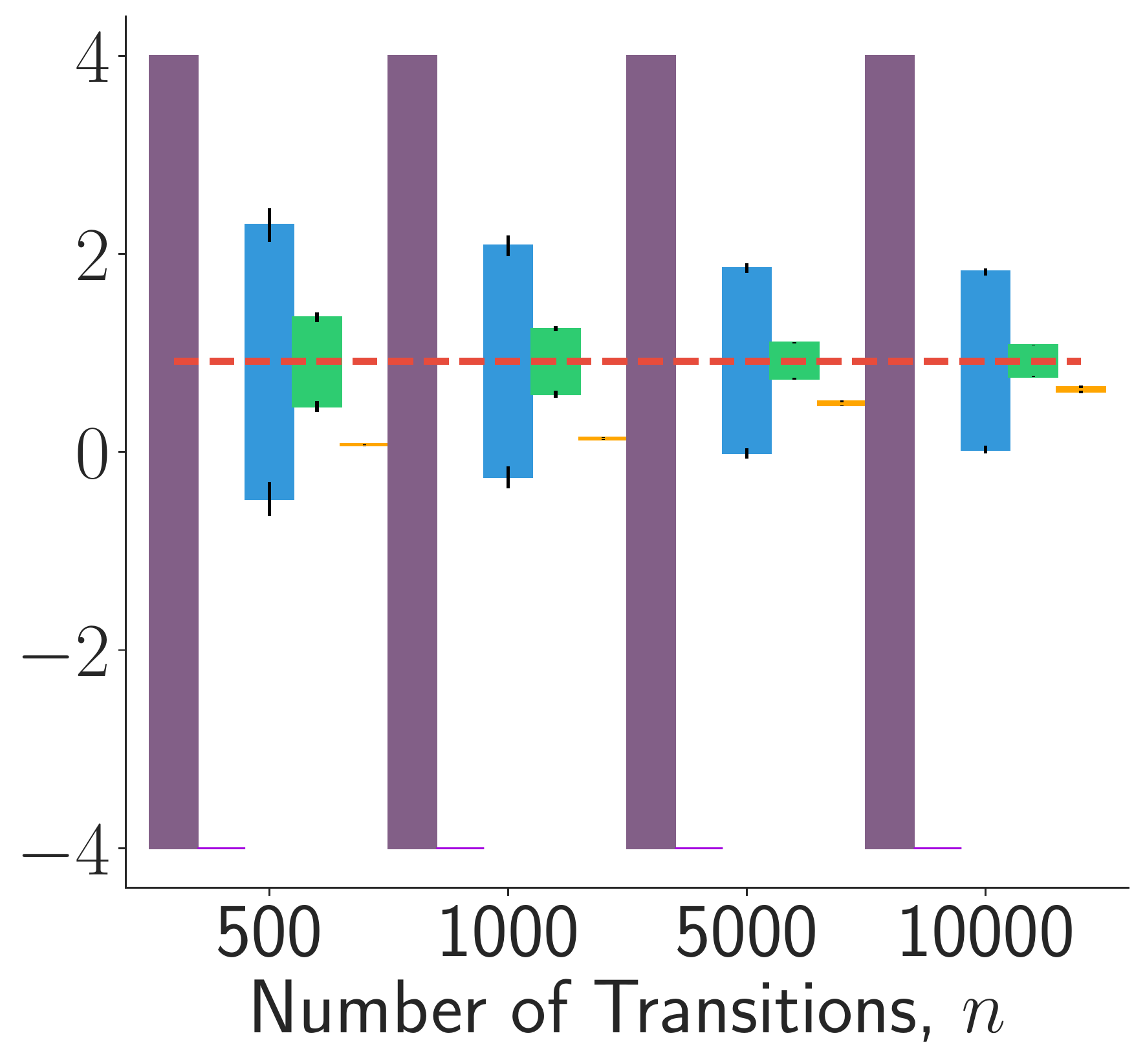} &
         \includegraphics[width=.32\linewidth]{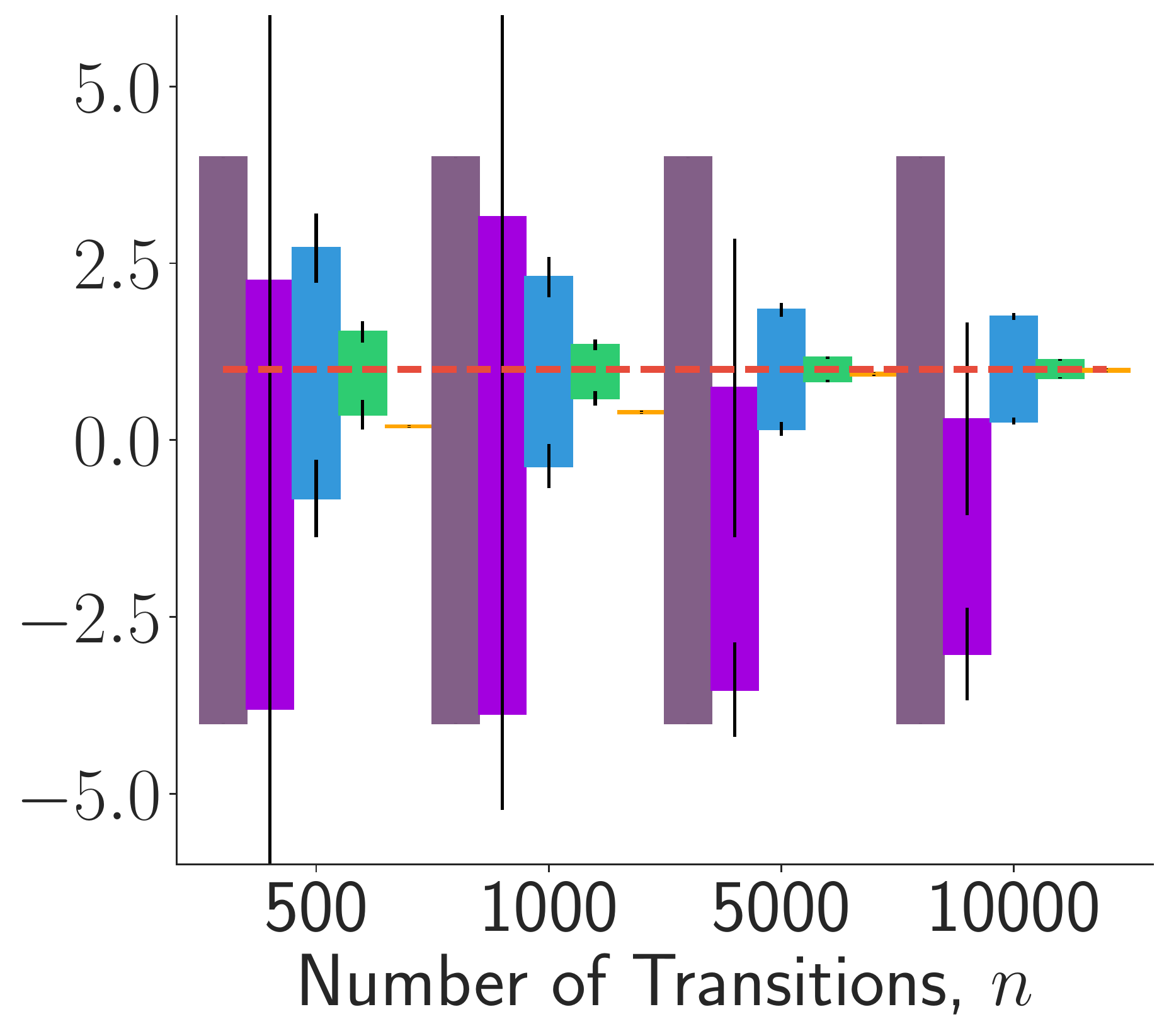} &
         \includegraphics[width=.32\linewidth]{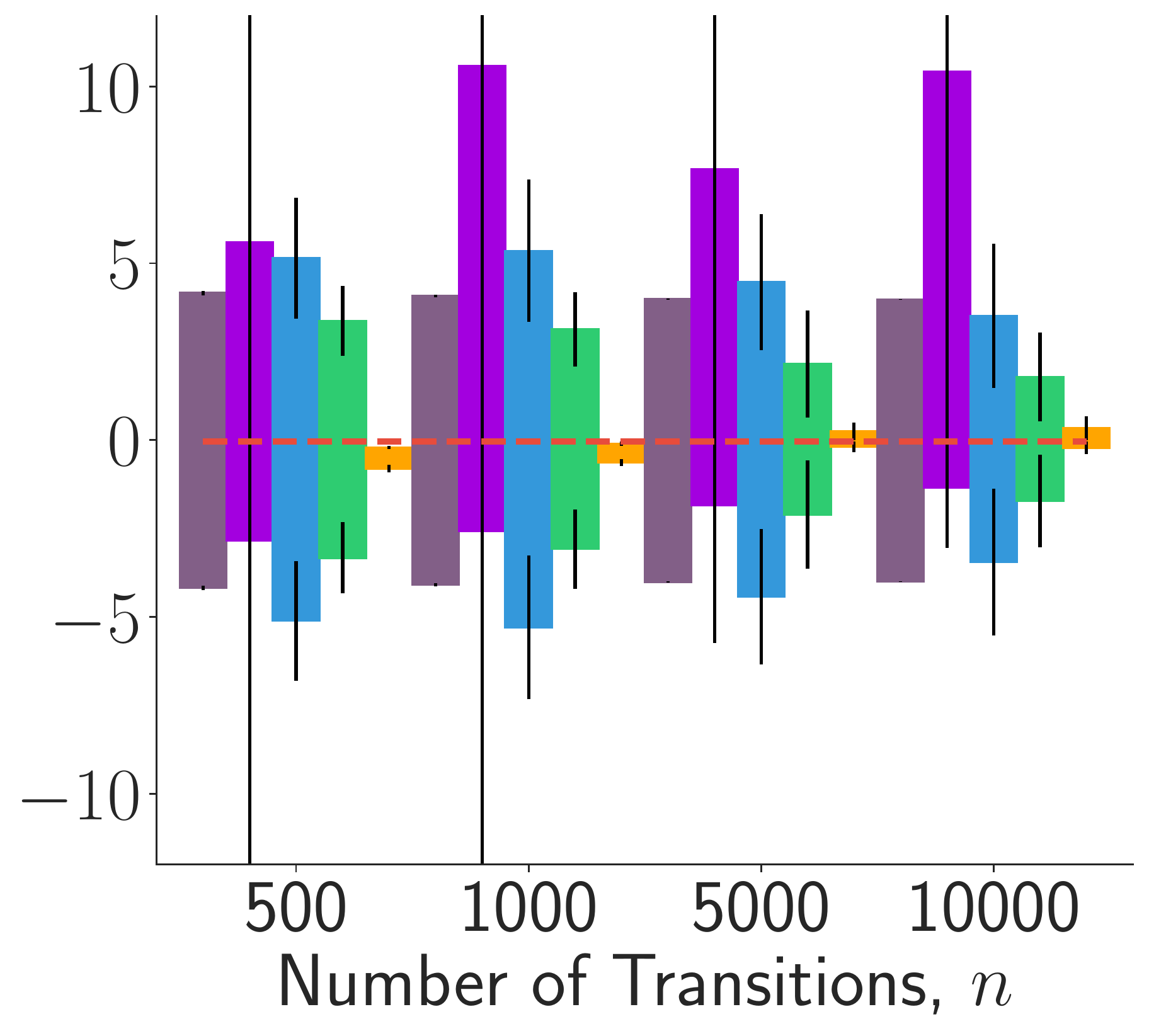}\vspace{-.3em}\\
          \small{(a)~Pendulum} & \small{(b) CartPole }& \small{(c) Type 1 diabetes} \\
    \end{tabular}
    \begin{picture}(1,1)
        \put(-470,-15){\rotatebox{90}{\footnotesize Policy Reward}}
    \end{picture}
    \caption{\small{Results on different environments when we use a significance level of $\delta=0.1$. 
    The colored bars represent the confidence intervals of different methods (averaged over 50 random trials); the black error bar represents the stand derivation of the end points of the intervals over the 50 random trials. 
    }}
    \label{fig:comparison_results}
\end{figure}

\myparagraph{Results} 
Figure~\ref{fig:inverted_pendulum} shows our method obtains much tighter bounds than \citet{fengaccountable2020}, 
which is because  we use a much tighter concentration inequality,
even the dual bound that we use can be slightly looser than the primal bound used  in  \citet{fengaccountable2020}. 
Our method is also more computationally efficient than that of \citet{fengaccountable2020} because the dual bound can be tightened approximately while the primal bound requires to solve a global optimization problem. 
Figure~\ref{fig:inverted_pendulum}~(b) shows that 
we provide increasingly tight bounds as the data size $n$ increases, and the length of the interval decays with an  $\bigO(n^{-1/2})$ rate approximately. 
Figure~\ref{fig:inverted_pendulum}~(c) shows that when we increase the significance level $\delta$, our bounds become tighter while still capturing the ground truth. 
Figure~\ref{fig:inverted_pendulum}~(d) shows the percentage of times that the interval fails to capture the true value in a total of $100$ random trials (denoted as $\hat\delta$) as we vary $\delta$.  
We can see that $\hat \delta$ remains close to zero even when $\delta$ is large, 
suggesting that our bound is very conservative. 
Part of the reason is that 
the bound is constructed by considering the worse case and we used a conservative choice of the radius $r_{\Q}$ and coefficient $c_{q_\pi, k}$ in Eq.~\eqref{equ:fastcon1} (See Appendix \ref{sec:sense_hypers}).

In Figure~\ref{fig:comparison_results} we compare different algorithms on more examples with $\delta = 0.1$.  
We can again see that our method provides tight and conservative interval that 
always captures the true value. 
Although FQE (Bootstrap) yields tighter intervals than our method, 
it fail to capture the ground truth much more often than the promised $\delta=0.1$ (e.g., it fails in all the random trials in Figure \ref{fig:comparison_results}~(a)). 

We  conduct more  ablation studies on different hyper-parameter and data collecting procedure. %
See Appendix~\ref{sec:sense_hypers} and \ref{sec:h3} for more details.

\section{Conclusion and Future Directions}\label{sec:conclusion}

We develop a practical algorithm for constructing non-asymptotic confidence intervals 
for infinite-horizon off-policy evaluation with a very mild data assumption that holds for behavior-agnostic and time-dependent data. 
Our work opens a number of future direction: %
how to apply our bounds to develop new methods for safe policy optimization and safe exploration? 
how to develop rigorous and practical approaches to select $\set Q$ and  $\set W$ adaptively based on the observed data, including their kernel, bandwidth and the radius of ball?
how to extend our method to obtain bounds for different target policies $\pi$ (see e.g., \citet{yin2020near}) and initial distributions $\dist D_0$ simultaneously and computationally efficiently, without repeatedly solving the optimization in each case? 
The oracle bound in Eq.~\eqref{equ:oracleup} 
provides %
a notion of oracle bound given the information drawn from the empirical Bellman operator. Is it possible to obtain even tighter bounds by exploiting other information from data, by e.g., model-based methods?

\bibliography{references}

\onecolumn
\appendix
\begin{center}
\Large
\textbf{Appendix}
\end{center}

\section{Proof of Lemma~\ref{lem:Mdualprimal}}
\begin{proof}%
By assumption, %
we can write any $\ratio_0\in\Wo$ into
$\ratio_0(x) = \lambda \ratio(x)$ where $\lambda \geq 0$ and $\ratio \in \set W$. Therefore, 
following Eq~\eqref{equ:jtrueminimaxM}, we have 
\bb 
\inf_{\ratiozero \in \Wo}M(q,\ratio;~ \dist D_\infty ) 
& = %
\inf_{\ratio_0 \in \Wo} 
\left \{ 
\E_{\dist D_{\pi,0}} [q] - \E_{\dist D_\infty} [\ratio_0(x)\hat\R q (x,y)] \right\} \\
& = %
\inf_{\lambda \geq 0} \inf_{\ratio \in \set W}  
\left \{
\E_{\dist D_{\pi,0}} [q] - \lambda \E_{\dist D_\infty} [\ratio(x)\hat\R q (x,y)] \right\} \\
& = %
\inf_{\lambda \geq 0} 
\left \{ 
\E_{\dist D_{\pi,0}} [q] - \lambda L_{\set W}(q, ~ \dist D_\infty) \right\},
\ee
where we used the definition that $ L_{\set W}(q, ~ \dist D_\infty)  =
\sup_{\ratio\in \set W} \E_{\dist D_\infty} [\ratio(x)\hat\R q (x,y)]$.

For Eq~\eqref{equ:MLq}, 
we note that 
\begin{align*} 
\E_{\dist D_{\pi,0}} [q(x)] - \E_{\dist D_\infty^\ratio}[\hat \R q (x,y)] 
& =
\E_{\dist D_{\pi,0}} [q(x)] - 
\E_{\dist D_\infty^\ratio}[ q(x) - \gamma q(x') - r ] \\
&=  
\E_{\dist D_\infty^\ratio}[r] + 
\Delta(\dist D_\infty^\ratio, ~q) ,
\end{align*}
where we use the definition of $\Delta(\dist D_\infty^\ratio, ~q) $ in Eq~\eqref{equ:Delta}. 
Therefore, 
\bb 
\sup_{q \in \set Q}M(q,\ratio; ~ \dist D_\infty) 
& = \sup_{q\in \set Q}
\left \{ 
\E_{\dist D_{\pi,0}} [q] -  \E_{\dist D_\infty^\ratio} [\hat\R q (x,y)] \right\}  \\
& = 
\sup_{q\in \set Q}\left \{ 
 \E_{\dist D_\infty^\ratio}[r] + \Delta(\dist D_\infty^\ratio, ~q)
\right\} \\
& = \E_{\dist D_\infty^\ratio} [r] + I_{\Q}(\ratio;~\dist D_\infty). 
\ee 
\end{proof}

\section{Proof of the Dual Bound in Theorem~\ref{thm:main}}  
\label{sec:proofdual}

 \begin{proof}
 Note that we assumed that $\set W$ is the unit ball of the normed space $\H$. %
 Therefore, we can write any $\ratio$ in $\H$ into $\ratio(x) = \lambda h(x) $ with $\lambda\in \RR$, 
 $h \in \set W$ and $\norm{\ratio}_{\H} = \lambda$.  
 
Using Lagrange multiplier, the bound in 
 Eq.~\eqref{equ:primalbound} is equivalent to 
 \begin{align*}%
\hatJup_{\Q,\W} 
& = \sup_{q \in \funcset Q}  \inf_{\lambda\geq 0} \left \{ \E_{\D_{\pi,0}}[q]  ~- ~\lambda\left (\sup_{h\in \W} \frac{1}{n}\sum_{i=1}^n h(x_i) \hat\R q(x_i, y_i)  -  \varepsilon_n \right ) \right \}  \\ 
& = \sup_{q \in \funcset Q}  \inf_{\lambda\geq 0} \inf_{h\in \W} \left \{ \E_{\D_{\pi,0}}[q]  ~- ~\lambda\left ( \frac{1}{n}\sum_{i=1}^n h(x_i) \hat\R q(x_i, y_i)  -  \varepsilon_n \right ) \right \} \\
& = \sup_{q \in \funcset Q} \inf_{\ratio \in {\H} } \left \{ \E_{\D_{\pi,0}}[q]  ~- ~ \frac{1}{n}\sum_{i=1}^n \ratio(x_i) \hat\R q(x_i, y_i)  +  \varepsilon_n\norm{\ratio}_{\H}  \right \}\,, 
\end{align*}
 Define 
 \bb 
 \hat M(q, ~ \ratio; ~ \emp D_n)
 &= \E_{\D_{\pi,0}}[q]  ~-  \frac{1}{n}\sum_{i=1}^n \ratio(x_i) \hat\R q(x_i, y_i)  +  \varepsilon_n  \norm{\ratio}_{\H} \\
  & = 
  \E_{\emp D_n^\ratio} [r]  + \Delta(\emp D_n^\ratio,~ q) +  \varepsilon_n  \norm{\ratio}_{\H}. 
 \ee 
 Then we have 
 \bb 
 \sup_{q \in \funcset Q} \hat M(q, ~ \ratio; ~ \emp D_n) 
 & =  \E_{\emp D_n^\ratio} [r]  + \sup_{q \in \funcset Q} \Delta(\emp D_n^\ratio,~ q) +  \varepsilon_n  \norm{\ratio}_{\H} \\
  & =  \E_{\emp D_n^\ratio} [r]  + \Wd +  \varepsilon_n  \norm{\ratio}_{\H} \\
  & := \hat F^+_{\Q}(\ratio). 
 \ee 
 Therefore, 
 \bb 
 \hatJup_{\Q,\W} 
 & =   \sup_{q \in \funcset Q} \inf_{\ratio \in \H} M(q, ~ \ratio; ~ \emp D_n) \\
  & \leq    \inf_{\ratio \in \H}   \sup_{q \in \funcset Q}  M(q, ~ \ratio; ~ \emp D_n) \\
  & =\inf_{\ratio \in \H}  \hat F^+_{\Q}(\ratio).   
 \ee 
 The lower bound follows analogously.  
 The strong duality holds when the Slater's condition \citep{nesterov2003introductory} is satisfied, 
 which amounts to saying that the primal problem in Eq.~\eqref{equ:primalbound} 
 is convex and strictly feasible; this requires that $\Q$ is convex and there exists at least one solution $q\in \Q$  that satisfies that constraint strictly, that is,  $L_\W(q;~\emp D_n) < \varepsilon_n$; 
 note that in our case the objective function $\Q$ is linear on $q$ and the constraint function $L_\W(q;~\emp D_n)$ is always convex on $q$ (since it is the sup a set of linear functions on $q$ following Eq.~\eqref{equ:kblfunctional}).  
 
 \end{proof}

\section{Proof of Concentration Bound in Theorem~\ref{thm:Lqpimain2}}
\label{sec:proofcon}

Our proof requires the following Hoeffding inequality on Hilbert spaces by 
\citet[Theorem 3,][]{pinelis1992approach};  see also Section 2.4 of \citet{rosasco2010learning}. %
\begin{lem} \label{pinelis}
\citep[Theorem 3,][]{pinelis1992approach}
Let $\funcset H$ be a Hilbert space and $\{f_i\}_{i=1}^n$ is a martingale difference sequence in $\funcset H$ that satisfies $\sup_i\norm{f_i}_{\funcset H} \leq \sigma$ almost surely. We have for any $\epsilon >0$, 
$$
\prob\left (\norm{ \frac{1}{n} \sum_{i=1}^n f_i}_{\funcset H} \geq \epsilon  \right ) \leq 2 \exp\left (- \frac{n\epsilon^2}{2\sigma^2} \right ). 
$$
Therefore, for $\delta\in (0,1)$, with probability at least $1-\delta$, we have 
$\norm{ \frac{1}{n} \sum_{i=1}^n f_i}_{\funcset H} \leq  \sqrt{\frac{2\sigma^2\log(2/\delta)}{n}}.$
\end{lem}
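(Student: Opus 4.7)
The plan is to run a Chernoff–Markov argument on the real-valued random variable $\|S_n\|_{\H}$, where $S_k \defeq \sum_{i=1}^{k} f_i$, after exploiting the Hilbert-space geometry to get a clean one-step exponential moment bound along the martingale filtration. Writing $\bar S_n = n^{-1}S_n$, we have $\prob(\|\bar S_n\|_{\H} \ge \epsilon) = \prob(\|S_n\|_{\H} \ge n\epsilon)$, and for any $\lambda>0$, since $\cosh$ is increasing on $[0,\infty)$,
\[
\prob(\|S_n\|_{\H} \ge n\epsilon) \;=\; \prob\bigl(\cosh(\lambda\|S_n\|_{\H}) \ge \cosh(\lambda n\epsilon)\bigr)
\;\le\; \frac{\E[\cosh(\lambda\|S_n\|_{\H})]}{\cosh(\lambda n\epsilon)}.
\]
So the task reduces to controlling $\E\cosh(\lambda\|S_n\|_{\H})$.

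The key lemma I would prove is the Hilbert-space analogue of Hoeffding's lemma along the filtration: for each $k$,
\[
\E\bigl[\cosh(\lambda\|S_k\|_{\H}) \,\big|\, \mathcal F_{k-1}\bigr] \;\le\; \cosh(\lambda\|S_{k-1}\|_{\H}) \cdot e^{\lambda^2\sigma^2/2}.
\]
To prove this, I would use the parallelogram identity $\|x+y\|^2 + \|x-y\|^2 = 2\|x\|^2 + 2\|y\|^2$, which is the defining feature of a Hilbert space, together with convexity of $\cosh(\lambda\sqrt{\cdot})$-type comparisons. Concretely, the parallelogram law implies the ``three-point'' inequality
\[
\cosh(\lambda\|x+y\|_{\H}) + \cosh(\lambda\|x-y\|_{\H}) \;\le\; 2\cosh(\lambda\|x\|_{\H})\cosh(\lambda\|y\|_{\H}),
\]
which, combined with a symmetrization of $f_k$ (via an independent Rademacher copy $f_k'$ with the same conditional law, using $\E[f_k\mid\mathcal F_{k-1}]=0$), lets me replace $f_k$ by a symmetric bounded increment and pass the conditional expectation through, yielding a factor of $\cosh(\lambda\sigma) \le e^{\lambda^2\sigma^2/2}$.

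Iterating this one-step bound across $k=1,\dots,n$ with the tower property gives $\E\cosh(\lambda\|S_n\|_{\H}) \le e^{n\lambda^2\sigma^2/2}$. Plugging into the Markov step and using $\cosh(u) \ge \tfrac12 e^{u}$,
\[
\prob\bigl(\|\bar S_n\|_{\H} \ge \epsilon\bigr) \;\le\; 2\exp\!\Bigl(\tfrac{n\lambda^2\sigma^2}{2} - \lambda n\epsilon\Bigr),
\]
and optimizing at $\lambda^\star = \epsilon/\sigma^2$ gives the claimed $2\exp(-n\epsilon^2/(2\sigma^2))$, from which the high-probability statement follows by setting this equal to $\delta$ and solving for $\epsilon$.

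The main obstacle is the one-step bound in the second paragraph: unlike the scalar Hoeffding lemma, here the random variable $\|S_k\|_{\H}^2 = \|S_{k-1}\|_{\H}^2 + 2\langle S_{k-1}, f_k\rangle_{\H} + \|f_k\|_{\H}^2$ contains a cross term that does \emph{not} vanish trivially under conditional expectation because $\cosh(\lambda\sqrt{\cdot})$ is nonlinear. Handling this cleanly requires the special geometry of Hilbert spaces (2-smoothness / parallelogram law); the argument would fail for a general Banach space, and this is precisely the reason Pinelis's technique is tied to the Hilbert setting. Everything else — the Chernoff reduction, the iteration, and the optimization over $\lambda$ — is routine once the one-step exponential moment inequality is in hand.
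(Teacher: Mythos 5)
This lemma is quoted verbatim from \citet{pinelis1992approach}; the paper gives no proof of it, so you are supplying an argument the authors deliberately outsource. Your architecture is nonetheless the right one, and it is essentially Pinelis's: a Chernoff--Markov step on $\cosh(\lambda\norm{S_n}_{\funcset H})$, a one-step conditional exponential-moment bound driven by the Hilbert-space geometry, iteration via the tower property, and optimization over $\lambda$. Your ``three-point'' inequality is also correct: writing $a=\norm{x+y}$, $b=\norm{x-y}$, $c=\norm{x}+\norm{y}$, $d=\abs{\norm{x}-\norm{y}}$, the parallelogram law gives $a^2+b^2=c^2+d^2$ while the triangle inequality gives $\max(a,b)\le c$; since $t\mapsto\cosh(\lambda\sqrt{t})$ is convex and increasing, majorization yields $\cosh(\lambda a)+\cosh(\lambda b)\le\cosh(\lambda c)+\cosh(\lambda d)=2\cosh(\lambda\norm{x})\cosh(\lambda\norm{y})$.

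The gap is in how you discharge the one-step bound. A martingale difference is conditionally mean-zero but not conditionally \emph{symmetric}, so to invoke the three-point inequality you symmetrize by subtracting an independent conditional copy, $f_k\mapsto f_k-f_k'$. Jensen's inequality does let you pass to this symmetrized increment, but its almost-sure norm bound is $2\sigma$, not $\sigma$, so the per-step factor you actually obtain is $\cosh(2\lambda\sigma)\le e^{2\lambda^2\sigma^2}$ rather than the claimed $\cosh(\lambda\sigma)\le e^{\lambda^2\sigma^2/2}$. Carried through the Chernoff optimization this yields $2\exp(-n\epsilon^2/(8\sigma^2))$ --- correct in form but off by a factor of $4$ in the exponent, hence not the stated lemma (nor the $\varepsilon_n$ used downstream in Theorem~\ref{thm:Lqpimain2}); the claimed factor would only follow if the increments were assumed conditionally symmetric, which they are not. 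A constant-preserving fix avoids symmetrization altogether: expand $\norm{S_{k-1}+f_k}^2=\norm{S_{k-1}}^2+2\langle S_{k-1},f_k\rangle+\norm{f_k}^2\le\norm{S_{k-1}}^2+\sigma^2+2T$ with $T\defeq\langle S_{k-1},f_k\rangle$, note that $G(t)\defeq\cosh\bigl(\lambda\sqrt{\norm{S_{k-1}}^2+\sigma^2+2t}\bigr)$ is convex and increasing, that $\E[T\mid\mathcal F_{k-1}]=0$ and $\abs{T}\le\sigma\norm{S_{k-1}}$, and apply the scalar chord bound for a convex function of a bounded mean-zero variable; this gives exactly $\E[\cosh(\lambda\norm{S_k})\mid\mathcal F_{k-1}]\le\tfrac{1}{2}\bigl[\cosh(\lambda(\norm{S_{k-1}}+\sigma))+\cosh(\lambda(\norm{S_{k-1}}-\sigma))\bigr]=\cosh(\lambda\norm{S_{k-1}})\cosh(\lambda\sigma)$, recovering the sharp constant. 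The rest of your argument (iteration, $\cosh(u)\ge\tfrac12 e^{u}$, and $\lambda^\star=\epsilon/\sigma^2$) then goes through as written.
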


\begin{lem}\label{lem:Kffdji} 
Let $k(x,x')$ be a positive definite kernel whose RKHS is $\funcset H_{k}$. 
Define 
$$f_i(\cdot) = \hat{\opt R} q(x_i, y_i) k(x_i, \cdot) -  \Rtrue q(x_i) k(x_i, \cdot).$$
Assume Assumption~\ref{ass:data} holds, 
then $\{f_i\}_{i=1}^n$ is a martingale difference sequence in $\funcset H_{k}$ w.r.t. 
$T_{<i}\defeq (x_{j}, y_j)_{j<i}\cup (x_i)$.
That is, 
$\E\left [f_{i+1}(\cdot) ~|~ T_{<i}\right ] = 0$ for $i=1,\ldots,n$.  
In addition, 
$$
\norm{\frac{1}{n}\sum_{i=1}^n f_i}^2_{\funcset H_{k}} 
=  \frac{1}{n^2}\sum_{ij=1}^n \left (\hat{\opt R} q(x_i, y_i) -  \Rtrue q(x_i)\right ) k(x_i, x_j)
\left (\hat{\opt R} q(x_j, y_j)  -  \Rtrue q(x_j) \right), 
$$
and $\norm{f_i}_{\funcset H_k}^2 \leq c_{q, k}$ for $\forall i = 1,
\ldots, n$. 
\end{lem}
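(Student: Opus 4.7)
The plan is to verify the three claims in the lemma separately, all of which reduce to direct applications of the definitions together with Assumption~\ref{ass:data}.

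First, for the martingale difference property, I would compute the conditional expectation $\E[f_i(\cdot)\mid T_{<i}]$, where $T_{<i} = \emp D_{<i}\cup \{x_i\}$. Since $f_i(\cdot) = \bigl(\hat{\opt R}q(x_i,y_i)-\opt R q(x_i)\bigr)\, k(x_i,\cdot)$ and $k(x_i,\cdot)$ is $T_{<i}$-measurable (it depends only on $x_i$), I can pull out $k(x_i,\cdot)$ from the conditional expectation and reduce the problem to showing
\[
\E\bigl[\hat{\opt R}q(x_i,y_i)\mid x_i,\emp D_{<i}\bigr] \;=\; \opt R q(x_i).
\]
This is exactly the content of Assumption~\ref{ass:data}, since that assumption states $\dist D_{1:n}^\diamond(r_i,s_i'\mid s_i,a_i;\emp D_{<i})=\dist P(r_i,s_i'\mid s_i,a_i)$, i.e., $y_i\sim \dist P_\pi(\cdot\mid x_i)$ given the past, and by definition $\opt R q(x_i)=\E_{y\sim\dist P_\pi(\cdot\mid x_i)}[\hat{\opt R}q(x_i,y)]$.

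Second, for the norm identity, I would expand the squared norm using bilinearity of the inner product in $\funcset H_k$:
\[
\Bigl\|\tfrac{1}{n}\sum_{i=1}^n f_i\Bigr\|_{\H_k}^2 \;=\; \tfrac{1}{n^2}\sum_{i,j=1}^n \langle f_i, f_j\rangle_{\H_k}.
\]
Since $f_i = (\hat{\opt R}q(x_i,y_i)-\opt R q(x_i))\, k(x_i,\cdot)$ is a scalar multiple of the canonical feature $k(x_i,\cdot)$, the reproducing property $\langle k(x_i,\cdot),k(x_j,\cdot)\rangle_{\H_k} = k(x_i,x_j)$ immediately yields the claimed double sum.

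Third, for the uniform bound $\|f_i\|_{\H_k}^2\leq c_{q,k}$, I take $i=j$ in the above computation to get $\|f_i\|_{\H_k}^2 = (\hat{\opt R}q(x_i,y_i)-\opt R q(x_i))^2 k(x_i,x_i)$, and bound this pointwise by the definition $c_{q,k}=\sup_{x,y}(\hat{\opt R}q(x,y)-\opt R q(x))^2 k(x,x)$ from Theorem~\ref{thm:Lqpimain2}.

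There is no real obstacle here: the lemma is essentially a bookkeeping step that repackages Assumption~\ref{ass:data} into a Hilbert-space martingale difference statement so that Lemma~\ref{pinelis} (the Pinelis Hoeffding-type inequality on Hilbert spaces) can be applied in the subsequent proof of Theorem~\ref{thm:Lqpimain2}. The only mild subtlety is keeping straight that $T_{<i}$ contains $x_i$ (but not $y_i$), so that $k(x_i,\cdot)$ may be treated as a deterministic element of $\H_k$ under the conditional expectation while the scalar prefactor $\hat{\opt R}q(x_i,y_i)-\opt R q(x_i)$ becomes mean-zero by Assumption~\ref{ass:data}.
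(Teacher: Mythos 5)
Your proof is correct and follows essentially the same route the paper intends: the paper states this lemma without a detailed proof, relying exactly on the measurability of $k(x_i,\cdot)$ given $T_{<i}$ plus Assumption~\ref{ass:data} for the conditional mean-zero property, and on the reproducing property $\langle k(x_i,\cdot),k(x_j,\cdot)\rangle_{\funcset H_k}=k(x_i,x_j)$ for the norm identity and the bound by $c_{q,k}$ (the same expansion appears explicitly in the proof of Lemma~\ref{lem:tria2}). Your handling of the subtlety that $T_{<i}$ contains $x_i$ but not $y_i$ is precisely the point the paper emphasizes in Section~\ref{sec:concentration}, so nothing is missing.
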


\begin{proof}[Proof of Theorem~\ref{thm:Lqpimain2}]
Following Lemma~\ref{pinelis} and Lemma~\ref{lem:Kffdji},  
since $\{f_i\}_{i=1}^n$ is a martingale difference sequence in $\funcset H_k$ with $\norm{f_i}_{\funcset H_k}^2\leq c_{q, k}$ almost surely, we have with probability at least $1-\delta$, 
$$
\frac{1}{n^2}\sum_{ij=1}^n\left (\hat{\opt R} q(x_i, y_i) -  \Rtrue q(x_i)\right ) k(x_i, x_j)
\left (\hat{\opt R} q(x_j, y_j)  -  \Rtrue q(x_j) \right) = 
\norm{ \frac{1}{n} \sum_{i=1}^n f_i}_{\funcset H_k}^2 \leq  \frac{2c_{q,k} \log(2/\delta)}{n}.$$
Using {Lemma~\ref{lem:tria2}} below, we have 
$$
 \abs{ { L_{\K}(q;~ \emp D_n)} -  { L^*_\K(q;~ \emp D_n)} } 
\leq  
\norm{ \frac{1}{n} \sum_{i=1}^n f_i}_{\funcset H_k} \leq  \sqrt{\frac{2c_{q,k} \log(2/\delta)}{n}}.
$$
This completes the proof. 
\end{proof} 

\begin{lem}\label{lem:tria2} 
Assume $k(x,x')$ is a positive definite kernel.  We have 
$$
\abs{ { L_\K(q;~ \emp D_n)} -  { L^*_\K(q;~ \emp D_n)} }^2 
\leq 
\frac{1}{n^2}\sum_{ij=1}^n \left (\hat{\opt R} q(x_i, y_i) -  \Rtrue q(x_i)\right ) k(x_i, x_j)
\left (\hat{\opt R} q(x_j, y_j)  -  \Rtrue q(x_j) \right). 
$$
\end{lem}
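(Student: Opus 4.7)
The plan is to recognize both $L_\K(q;\emp D_n)$ and $L^*_\K(q;\emp D_n)$ as norms of specific elements in the RKHS $\H_k$, and then apply the reverse triangle inequality. Concretely, I would introduce
\[
g(\cdot) \;=\; \frac{1}{n}\sum_{i=1}^n \hat{\opt R}q(x_i,y_i)\,k(x_i,\cdot), \qquad
g^*(\cdot) \;=\; \frac{1}{n}\sum_{i=1}^n \Rtrue q(x_i)\,k(x_i,\cdot),
\]
both of which live in $\H_k$. Using the reproducing property, it is immediate that $\|g\|_{\H_k}^2$ and $\|g^*\|_{\H_k}^2$ equal the double sums appearing under the square roots in the definitions \eqref{equ:kblkernel} and \eqref{equ:kblcond}, so $L_\K(q;\emp D_n) = \|g\|_{\H_k}$ and $L^*_\K(q;\emp D_n) = \|g^*\|_{\H_k}$.

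Next I would apply the reverse triangle inequality in the Hilbert space $\H_k$:
\[
\bigl|\,\|g\|_{\H_k} - \|g^*\|_{\H_k}\,\bigr| \;\leq\; \|g - g^*\|_{\H_k}.
\]
Since $g - g^* = \frac{1}{n}\sum_{i=1}^n f_i$ with $f_i$ as defined in Lemma~\ref{lem:Kffdji}, expanding $\|g-g^*\|_{\H_k}^2$ via the reproducing property recovers exactly the double sum on the right-hand side of the claim. Squaring the reverse triangle inequality then yields the stated bound.

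There is essentially no obstacle: the lemma is a direct consequence of the Hilbert-space structure of the RKHS together with the reproducing property $\langle k(x,\cdot), k(x',\cdot)\rangle_{\H_k} = k(x,x')$. The only thing to be slightly careful about is ensuring that the squared reverse triangle inequality $(a-b)^2 \le (a-b)^2 + \ldots$ — more precisely, $|a-b|^2 \le \|g-g^*\|_{\H_k}^2$ when $a=\|g\|_{\H_k}$, $b=\|g^*\|_{\H_k}$ — which follows verbatim from the standard reverse triangle inequality applied to nonnegative real numbers. This lemma then feeds directly into the concentration proof of Theorem~\ref{thm:Lqpimain2} by combining with the Pinelis–Hoeffding bound in Lemma~\ref{pinelis} applied to the martingale difference sequence $\{f_i\}$.
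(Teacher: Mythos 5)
Your proposal is correct and follows essentially the same route as the paper: the paper's proof likewise embeds the two quantities as RKHS norms of $\frac{1}{n}\sum_i \hat{\opt R} q(x_i,y_i)k(x_i,\cdot)$ and $\frac{1}{n}\sum_i \Rtrue q(x_i)k(x_i,\cdot)$, identifies the double sums via the reproducing property, and concludes with the (reverse) triangle inequality in $\funcset H_k$. No gaps.
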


\begin{proof} Define 
\begin{align*}
 \hat g(\cdot) = \frac{1}{n}\sum_{i=1}^n \hat{\opt R} q(x_i, y_i) k(x_i, \cdot),  
&&  g(\cdot) = \frac{1}{n}\sum_{i=1}^n  \Rtrue q(x_i) k(x_i, \cdot) .
\end{align*}
Then we have 
\bb 
&\norm{\hat g}_{\funcset H_k}^2 = \frac{1}{n^2}\sum_{ij=1}^n \hat{\opt R} q(x_i, y_i) k(x_i, x_j)  \hat{\opt R} q(x_j, y_j) = \hat L_\K(q;~ \emp D_n)^2,   \\ 
&\norm{ g}_{\funcset H_k}^2 = \frac{1}{n^2}\sum_{ij=1}^n \Rtrue  q(x_i) k(x_i, x_j)  \Rtrue  q(x_j)  = L^*_\K(q;~ \emp D_n)^2,  \\ 
&\norm{ \hat g  - g}_{\funcset H_k}^2 = \frac{1}{n^2}\sum_{ij=1}^n\left (\hat{\opt R} q(x_i, y_i) -  \Rtrue q(x_i)\right ) k(x_i, x_j)
\left (\hat{\opt R} q(x_j, y_j)  -  \Rtrue q(x_j) \right). 
\ee 
The result then follows the triangle inequality 
$ 
\abs{\norm{\hat g}_{\funcset H_k} - 
\norm{ g}_{\funcset H_k} } \leq \norm{ \hat g  - g}_{\funcset H_k}. 
$ 
\end{proof}

\subsection{{Calculation of \texorpdfstring{$c_{q\true,k}$}{calculation} }}\label{sec:calcq}
The practical calculation of the coefficient $c_{q\true,k}$ in the concentration inequality was discussed in \citet{fengaccountable2020}, which we include here for completeness. 
\begin{lem}
{
(\citet{fengaccountable2020} Lemma~3.1) Assume the reward function and kernel function is bounded with $\sup_{x}|r(x)| \leq r_{\max}$ and $\sup_{x,x'}|k(x,x')|\leq K_{\max}$, we have: 
\begin{align}
    c_{q\true,k} := \sup_{x \in \set{X},y \in \set{Y}} ({\hat \R} q\true(x,y))^2 k(x,x) \leq \frac{4 K_{\max}r^2_{\max}}{(1-\gamma)^2}\,. \label{equ:c_qk}
\end{align}

}
\end{lem}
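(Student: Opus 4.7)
The target bound is a short uniform estimate on $(\hat{\R} q_*(x,y))^2 \, k(x,x)$, so the plan is a two-step sup-norm calculation: first bound $q_*$ in $L^\infty$ using the geometric series of rewards, then apply the triangle inequality to $\hat{\R} q_*$ and multiply by the kernel bound.

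The first step is to observe that, by the definition $q_*(x) = \E_{\pi,\dist P}[\sum_{t=0}^\infty \gamma^t r_t \mid x_0 = x]$ and the assumption $\sup_x |r(x)| \le r_{\max}$, the true Q-function satisfies
$$
\|q_*\|_\infty \;\le\; \sum_{t=0}^\infty \gamma^t r_{\max} \;=\; \frac{r_{\max}}{1-\gamma}.
$$
This is the only place where the infinite-horizon geometry enters.

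The second step is to plug this into the empirical Bellman residual operator from Eq.~\eqref{equ:bellman}, $\hat{\R} q_*(x,y) = q_*(x) - \gamma q_*(x') - r$. By the triangle inequality,
$$
|\hat{\R} q_*(x,y)| \;\le\; \|q_*\|_\infty + \gamma \|q_*\|_\infty + r_{\max} \;\le\; \frac{(1+\gamma) r_{\max}}{1-\gamma} + r_{\max} \;=\; \frac{2\, r_{\max}}{1-\gamma},
$$
which I would square and then combine with $\sup_x k(x,x) \le K_{\max}$ to obtain
$$
c_{q_*,k} \;=\; \sup_{x,y} (\hat{\R} q_*(x,y))^2\, k(x,x) \;\le\; \frac{4\,K_{\max}\,r_{\max}^2}{(1-\gamma)^2}.
$$
If instead the extended form \eqref{equ:Rqm}–\eqref{equ:Rqinf} of $\hat{\R}$ is used (averaging over $m$ samples of $a'$), the same bound goes through since each $q_*(s',a'_\ell)$ is again bounded by $r_{\max}/(1-\gamma)$ and convexity preserves the bound after averaging.

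There is no real obstacle here; the only subtlety is being careful that the bound $\|q_*\|_\infty \le r_{\max}/(1-\gamma)$ is stated under $\sup_x |r(x)| \le r_{\max}$ (pointwise boundedness of the reward), which is exactly the hypothesis given, so no further assumption on the noise in $r_i$ is needed. The proof is essentially one display of algebra.
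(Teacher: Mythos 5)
Your proof is correct and is essentially the same argument as the one the paper cites from \citet{fengaccountable2020} (Lemma~3.1): bound $\|q\true\|_\infty \le r_{\max}/(1-\gamma)$ via the geometric series, apply the triangle inequality to $\hat\R q\true(x,y)=q\true(x)-\gamma q\true(x')-r$ to get $2r_{\max}/(1-\gamma)$, then square and multiply by $k(x,x)\le K_{\max}$. Nothing further is needed.
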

{
In practice, we evaluate $K_{\max}$ from the kernel function that we choose (e.g., $K_{\max} = 1$ for RBF kernels), and $r_{\max}$ from the knowledge of the environment.} 

\section{More Discussion on the Tightness of the Confidence Interval} %
\label{sec:tightness} 
The benefit of having both upper and lower bounds is that we can empirically access the tightness 
of the bound by checking the length of the interval $[\hat F^-_{\Q}(\ratio_-), \hat F^+_\Q (\ratio_+)]$. 
However, from the theoretical perspective, 
it is desirable to know \emph{a priori} that the length of the interval will decrease with a fast rate as the data size $n$ increases. 
We now show that this is the case  
if $\Wo$ is chosen to be sufficiently rich so that it includes a $\ratio \in \Wo$ such that $\emp D_n^\ratio$ approximates  $\dist D\true$  closely in a proper sense.  

\begin{thm}%
\label{thm:uppergap}
Assume $\Wo$ is sufficiently rich to  
include a ``good'' $\ratio\true$ in $\Wo$ with $\emp D_n^{\ratio\true} \approx \dist D\true$ in  the sense that 
  \begin{align} \label{equ:assbig}
  \sup_{q\in \funcset Q}
\abs{ \E_{%
\emp  d_n^{\ratio\true}}
\left [  {\hat \R} q(x,y) \right ]
- \E_{%
\dist d\true}  
\left [ {\hat \R} q(x,y) \right ]} \leq \frac{c}{n^\alpha},
\end{align} 
where $c$ and $\alpha$ are two positive coefficients. 
Then we have 
\begin{align*} 
\max \left \{
 \hatJup_{\Q,\W}
- \Jpi,~~ 
\Jpi  - \hatJlow_{\Q,\W}
\right \} 
\leq 
\frac{c}{n^\alpha} + 
\varepsilon_n \norm{\ratio\true}_{\Wo}. 
\end{align*}
\end{thm}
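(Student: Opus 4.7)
The plan is to apply Theorem~\ref{thm:main} at the particular dual test function $\ratio = \ratio\true$ guaranteed by the assumption, and then bound $\hat F^{\pm}_\Q(\ratio\true)$ by relating $\emp D_n^{\ratio\true}$ to $\dist d\true$. Specifically, Theorem~\ref{thm:main} gives $\hatJup_{\Q,\W} \leq \hat F^+_\Q(\ratio\true)$ and $\hatJlow_{\Q,\W} \geq \hat F^-_\Q(\ratio\true)$, so the problem reduces to showing $\hat F^+_\Q(\ratio\true) \leq \Jpi + c/n^\alpha + \varepsilon_n \norm{\ratio\true}_{\Wo}$ and the symmetric lower bound on $\hat F^-_\Q(\ratio\true)$.

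To rewrite $\hat F^{\pm}_\Q(\ratio\true)$ into a form amenable to the assumption, I would use the identity already employed in the proof of Lemma~\ref{lem:Mdualprimal}: for any $q$, $\E_{\emp D_n^\ratio}[r] + \Delta(\emp D_n^\ratio, q) = \E_{\dist d_{\pi,0}}[q] - \E_{\emp D_n^\ratio}[\hat \R q(x,y)]$. Substituting into the definitions of $\hat F^{\pm}_\Q$ (and, for the lower bound, observing that $I_{\neg\Q}(\ratio;\emp D_n) = \sup_{q\in\Q}\Delta(\emp D_n^\ratio,-q) = -\inf_{q\in\Q}\Delta(\emp D_n^\ratio,q)$ so that the minus sign flips sup into inf) gives
\[
\hat F^+_\Q(\ratio\true) = \sup_{q\in\Q}\bigl\{\E_{\dist d_{\pi,0}}[q] - \E_{\emp D_n^{\ratio\true}}[\hat \R q]\bigr\} + \varepsilon_n \norm{\ratio\true}_{\Wo},
\]
\[
\hat F^-_\Q(\ratio\true) = \inf_{q\in\Q}\bigl\{\E_{\dist d_{\pi,0}}[q] - \E_{\emp D_n^{\ratio\true}}[\hat \R q]\bigr\} - \varepsilon_n \norm{\ratio\true}_{\Wo}.
\]

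Next I would invoke the Bellman telescoping identity $\E_{\dist d\true}[q(x) - \gamma q(x')] = \E_{\dist d_{\pi,0}}[q(x)]$ (already cited after Eq.~\eqref{equ:Delta}), which combined with the fact that $y \sim \dist P_\pi(\cdot \mid x)$ under $\dist d\true$ implies $\E_{\dist d\true}[\hat \R q(x,y)] = \E_{\dist d_{\pi,0}}[q] - \Jpi$, so the ``target'' quantity is exactly $\E_{\dist d_{\pi,0}}[q] - \E_{\dist d\true}[\hat\R q] = \Jpi$ uniformly in $q$. Then the hypothesis~\eqref{equ:assbig} yields, uniformly for every $q\in\Q$,
\[
\bigl| \E_{\dist d_{\pi,0}}[q] - \E_{\emp D_n^{\ratio\true}}[\hat\R q] - \Jpi \bigr| \leq \tfrac{c}{n^\alpha}.
\]

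Finally, since this uniform two-sided bound is preserved under $\sup_{q\in\Q}$ and $\inf_{q\in\Q}$, combining with the expressions above for $\hat F^{\pm}_\Q(\ratio\true)$ gives $\hat F^+_\Q(\ratio\true) \leq \Jpi + c/n^\alpha + \varepsilon_n\norm{\ratio\true}_{\Wo}$ and $\hat F^-_\Q(\ratio\true) \geq \Jpi - c/n^\alpha - \varepsilon_n\norm{\ratio\true}_{\Wo}$, which combined with the first step yield both conclusions. There is essentially no hard step here: the proof is a direct assembly of (i) the dual bound of Theorem~\ref{thm:main}, (ii) the algebraic rewriting of $\hat F^\pm_\Q$ via the $\Delta$-identity, and (iii) the Bellman telescoping identity. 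The only mild subtlety is tracking the sign flip from $\neg\Q$ in $\hat F^-_\Q$, which converts the outer sup into an inf; once this is handled the upper and lower arguments are mirror images of each other.
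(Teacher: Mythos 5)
Your proposal is correct and follows essentially the same route as the paper's proof: apply the dual bound of Theorem~\ref{thm:main} at $\ratio = \ratio\true$, rewrite $\hat F^{\pm}_\Q(\ratio\true)$ so that the reward terms cancel and only the discrepancy $\E_{\dist d\true}[\hat\R q] - \E_{\emp D_n^{\ratio\true}}[\hat\R q]$ remains inside the sup/inf, and invoke assumption~\eqref{equ:assbig}. Your explicit handling of the $\neg\Q$ sign flip for the lower bound is a detail the paper leaves to ``follows similarly,'' but the argument is the same.
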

Assumption \eqref{equ:assbig} %
 holds if $\emp D_n$ is collected following a Markov chain that has certain strong mixing condition and weakly converges to a limit continuous discussion $\dist D_\infty$ whose support is $\X$, 
 and 
 the density ratio between $\dist D\true$ and $\dist D_\infty$, denoted by $\ratio\true$, 
 is included in $\Wo$. 
 In this case, if $\Q$ is a finite ball in RKHS, then we can achieve Eq.~\eqref{equ:assbig} with $\alpha=1/2$, and yields the overall bound of rate $O(n^{-1/2})$. For more general function classes, $\alpha$ depends on the martingale Rademacher complexity of the function set $\hat\R \Q =\{\R q(x,y) \colon q\in \Q \}$ \citep{rakhlin2015sequential}.   
 In our empirical reults, we observe that the lengths of the practically constructed confidence intervals 
 do tend to follow the  $O(n^{-1/2})$ rate approximately; see Figure~\ref{fig:inverted_pendulum}(b)

\begin{proof}
Note that $\gamma q(x') - q(x) = - \hat\R q(x,y) - r$ and hene 
\bb 
\Wd & = \sup_{q\in \Q} \left \{ \E_{\emp D_n^\ratio} [\gamma q(x') - q(x)]  - \E_{\dist D\true} [\gamma q(x') - q(x)]  \right\} \\ 
& = \sup_{q\in \Q} \left \{ 
 \E_{\dist D\true} [\hat\R q(x,y)  ]  - 
\E_{\emp D_n^\ratio} [ \hat\R q(x,y)  ] 
 \right\}
  + \E_{\dist D\true} [  r ]  - 
\E_{\emp D_n^\ratio} [  r ] . 
\ee 
Because $\ratio\true \in \W$, we have 
\bb 
\hatJup_{\W,\Q} 
- \Jpi
& \leq \hat F^+_\Q(\ratio\true)
- \Jpi \\
& = %
\E_{\emp D_n^\ratio}[r] 
+ \Wdpi  + \varepsilon_n \norm{\ratio\true}_{\Wo} -  \E_{\dist d\true }[r]  \\
& = 
\sup_{q\in \funcset Q}
\left \{ 
 \E_{ \dist d\true } \left [{\hat \R} q(x,y) \right ] 
- \E_{\emp  D_n^\ratio} \left [{\hat \R} q(x, y) \right ]  
 \right\} 
+ \varepsilon_n \norm{\ratio\true}_{\Wo} \\
& \leq \frac{c}{n^\alpha} +  \varepsilon_n \norm{\ratio\true}_{\Wo}. 
\ee 
The case of lower bound follows similarly. 
\end{proof}

\section{Practical Optimization on \texorpdfstring{$\H$}{H} }\label{sec:kernelw}
Consider the optimization of $\ratio$ in $\H$, 
\bba \label{equ:fwhahaha} 
\hat F^+_{\Q}(\omega) \defeq 
\frac{1}{n}\sum_{i=1}^n {r_i \ratio(x_i)} +  
\Wd  +  \norm{\ratio}_{\H}
\sqrt{\frac{{2}c_{q\true,k} \log (2/\delta)}{n}}. 
\eea 
Assume $\Wo$ is the RKHS of kernel $k(x,\bar x)$. %
 By the finite representer theorem of RKHS \citep{scholkopf2018learning},  %
 the optimization of $\omega \in \Wo$ can be reduced to a finite dimensional optimization problem. Specifically, 
 the optimal solution of \eqref{equ:fwhahaha} 
 can be written into a form of $\ratio(x) = \sum_{i=1}^n k(x, x_i) \alpha_i$  
 for which we have 
 $\norm{\ratio}_{\funcset H_k}^2 = \sum_{i,j=1}^n k(x_i, x_j) \alpha_i \alpha_j$ 
 for some vector $\vv\alpha\defeq [\alpha_i]_{i=1}^n  \in \RR^n$. 
 Write $\vv K=[k(x_i,x_j)]_{i,j=1}^n$ and $\vv r = [r_i]_{i=1}^n$. 
 The optimization of $\ratio$ reduces to solving the following optimization on $\vv\alpha$: 
 $$
 \min_{\vv\alpha\in \RR^n}\left\{
\frac{1}{n} \vv r^\top \vv K\vv \alpha + \WdKa + \sqrt{\vv \alpha \vv K \vv \alpha } \sqrt{\frac{{2}c_{q\true,k} \log (2/\delta)}{n}} \right\}\,, 
 $$
 where 
 \bb  %
 \WdKa =  
 \max_{q\in \funcset Q} \left \{\E_{\D_{\pi,0}}[q] +  
 \frac{1}{n} (\hat{\opt T} q)^\top \vv K \vv \alpha    \right\}, 
 \ee  
 and $\hat{\opt T} q
 = [\gamma q(x'_i)- q(x_i)]_{i=1}^n \in \RR^n$. 
 When $\funcset Q$ is a finite ball of an RKHS (that is different from $\Wo$), %
 we can calculate  $\WdKa$ using Eq.~\eqref{equ:Wfkernel}. %

 This computation  can be still expensive when $n$ is large. 
 Fortunately, our confidence bound holds correctly for any $\ratio$;  
 better $\ratio$ only gives tighter bounds, 
 but it is not necessary to find the exact global optimal $\ratio$. 
 Therefore, one can use any approximation algorithm to  find $\ratio$, which provides a trade-off of tightness and  computational cost. 
 We discuss two methods:

 \paragraph{1) Approximating $\vv\alpha$}  The length of $\vv \alpha$ can be too large when $n$ is large. 
 To address this, we assume $\alpha_i = g(x_i, ~ \theta)$, where $g$ is any parametric function (such as a neural network) with  a parameter $\theta$; assume $\theta$ has  a much lower dimension than $\vv\alpha$. 
 We can then optimize $\theta$ with stochastic gradient descent, 
 by approximating the empirical averaging $\frac{1}{n}\sum_{i=1}^n(\cdot)$ in the objective with averages over small mini-batches; this would introduce biases in gradient estimation, 
 but it is not an issue when the goal is only to get a reasonable approximation of the optimal $\ratio$.  %

\paragraph{2)  Replacing kernel $\vv k$}
Assume the kernel $k(x, \bar x)$ yields a random feature expansion of form $k(x,\bar x) = \E_{\beta \sim \pi}[\phi(x, \beta) \phi(\bar x, \beta)]$, where $\phi(x,\beta)$ is a feature map with parameter $\beta$ and $\pi$ is a distribution of $\beta$. 
We draw $\{\beta_i\}_{i=1}^m$ i.i.d. from $\pi$,  where $m$ is taken to be much smaller than $n$.  
 We replace $k$ with $\hat k(x,\bar x) =\frac{1}{m}\sum_{i=1}^m \phi(x, \beta_i)\phi(\bar x, \beta_i)$ and let $\widehat{\set W}$ to be the RKHS of kernel $\hat k$. 
 Then, we consider to solve 
 $$
\hatJup_{\Q,\W} = \min_{\ratio\in \widehat{\set W} 
}
\left\{  \hat F^+_\Q(\omega) \defeq 
\frac{1}{n}\sum_{i=1}^n {r_i \ratio(x_i)} +  
\Wd  +  \norm{\ratio}_{\widehat{\set W}}
\sqrt{\frac{{2}c_{q\true,\hat k} \log (2/\delta)}{n}} \right\}.  
$$
It is known that 
any function $\ratio$ in $\widehat{\set W}$ can be represented as 
$
\ratio (x) = \frac{1}{m}\sum_{i=1}^m w_i \phi(x, \beta_i), 
$ 
for some $\vv w = [w_i]_{i=1}^m\in \RR^m$  
and satisfies  $\norm{\ratio}_{\widehat{\set W}}^2 =  \frac{1}{m}\sum_{i=1}^m w_i^2.$ 
In this way, the problem reduces to optimizing an $m$-dimensional vector $\vv w$, 
which can be (approximately) solved by standard convex optimization techniques.

\section{Concentration Inequality of General Functional Bellman Losses} 
\label{sec:proofrade}
Theorem~\ref{thm:Lqpimain2} 
provides the concentration inequality of $L_{\W}(q;~\emp D_n)$ when $\W$ is  the unit ball of an RKHS. 
When $\W$ is a general function set, 
one can still obtain a general  
concentration bound using Radermacher complexity. 
Define  $\hat\R q \circ \W\defeq \{h(x,y)=\hat \R q(x,y) \ratio(x) \colon \ratio\in \W\}$. Using  
the standard derivation in  Radermacher complexity theory in conjunction with Martingale theory \citep{rakhlin2015sequential}, we have 
\begin{align*} 
\sup_{\ratio \in \W} \left\{\frac{1}{n}\sum_{i=1}^n  (\hat \R q(x_i,y_i) -\Rtrue q(x_i))\ratio(x_i) \right\}
\leq 
2 Rad(\hat\R q \circ \W) + \sqrt{\frac{2 c_{q,\W}\log(2/\delta)}{n}},   
\end{align*}
where 
 $c_{q,\W} = \sup_{\ratio\in \W}\sup_{x,y} (\hat\R q(x,y)-\R q(x))^2 \ratio(x)^2$ 
 and 
$Rad(\hat\R q \circ \W)$ is the sequential 
 Radermacher complexity \citep{rakhlin2015sequential}. 
 A triangle inequality yields 
 $$
 |~L_\W(q;~\emp D_n) - L^{*}_\W(q;~\emp D_n)  ~|\leq 
\sup_{\ratio \in \W} \left\{\frac{1}{n}\sum_{i=1}^n  (\hat \R q(x_i,y_i) -\Rtrue q(x_i))\ratio(x_i) \right\}. 
 $$
 Therefore, 
\begin{align} \label{equ:rade}
|~L_\W(q;~\emp D_n) - L^{*}_\W(q;~\emp D_n)  ~| \leq 
2 Rad(\hat\R q \circ \W) + \sqrt{\frac{2 c_{q,\W}\log(2/\delta)}{n}}.   
\end{align}
When $\W$ equals the unit ball $\K$ of the RKHS of kernel $k$, we have $c_{q,k} = c_{q, \W}$ and hence  this bound is strictly worse than the bound in Theorem~\ref{thm:Lqpimain2}.

\section{More on the Oracle Bound and its Dual Form} 
\label{sec:oracle_appendix}

The oracle bound \eqref{equ:oracleup} provides another starting point for deriving optimization-based confidence bounds. 
We start with deriving the dual form of \eqref{equ:oracleup}. 
Using 
Lagrangian multiplier, the optimization in Eq.~\eqref{equ:oracleup} can be rewritten into 
\bba \label{equ:lagfunhat} 
\hat J_{\funcset Q, *}^+ =
\sup_{q\in \funcset Q} \inf_{\ratio}  M_{*}(q, \ratio; ~ \emp D_n) 
\leq \inf_{\ratio} \sup_{q\in \funcset Q}   M_{*}(q, \ratio; ~ \emp D_n), 
\eea 
where 
$$
 M_*(q, \ratio; ~ \emp D_n) =  \E_{\D_{\pi,0}}[q] - \frac{1}{n}\sum_{i=1}^n 
\ratio(x_i) \left ( \hat\R  q(x_i,y_i) - \hat\R q\true(x_i,y_i)\right),
$$
and $\ratio$ is optimized in the set of all functions and serves as the Lagrangian multiplier here.  
Define 
 \begin{align*} 
 \hat F_{\funcset Q,*}^+(\ratio)  &\defeq   
 \max_{q\in \funcset Q} M_*(q, \ratio; ~ \emp D_n)  \\
  & = \underbrace{
 \E_{\emp D_n^\ratio} [r] 
 + \Wd}_{known} + \underbrace{
 R(\ratio, ~ q\true) 
 }_{unknown} 
\end{align*}
where 
$$R(\ratio, q\true) =  \frac{1}{n}\sum_{i=1}^n \ratio(x_i) \hat\R q\true(x_i).$$ 
Then by the weak duality, we have 
$$
J_{\funcset Q, +}^* \leq  \hat F_{\funcset Q,*}^+(\ratio),  ~~~~\forall \ratio. 
$$
The derivation follows similarly for the lower bound. So for any $\ratio \in \Wo$, 
we  have  
$
[\hat J_{\funcset Q,*}^-, ~~
\hat J_{\funcset Q,*}^+  ]\subseteq 
[\hat F_{\funcset Q,*}^-(\ratio) , ~~
\hat F_{\funcset Q,*}^+(\ratio)  ].$ 

Here the first two terms of 
$ \hat F_{\funcset Q,*}^+(\ratio)$ can be empirically estimated (it is the same as the first two terms of Eq.~\eqref{equ:mainVbounddelta}), but the third term $R(\ratio, q\true)$ depends on the unknown $q\true$ and hence need to be further upper bounded. 
Different upper bounds of $R(\ratio, q\true)$  may yield different practical confidence intervals. %

Our method can be viewed as constraining $\ratio$ in the unit ball $\W$ of a normed function space $\Wo$, and applying a worst case bound:  for any $\ratio\in \Wo$, we have 
\bb 
 \hat F_{\funcset Q,*}^+(\ratio)  
 & \defeq \E_{\emp D_n^\ratio} [r] + \Wd + R(\ratio, ~ q\true) \\ %
  & \leq  \E_{\emp D_n^\ratio} [r] + \Wd + \norm{w}_{\Wo} \sup_{h\in \W } R(h, ~ q\true)  \\
    & \leq  \E_{\emp D_n^\ratio} [r] + \Wd +   \norm{w}_{\Wo} L_{\W}(q\true, \emp D_n)  \\
    & \overset{w.p. 1-\delta}{\leq}  \E_{\emp D_n^\ratio} [r] + \Wd +  \varepsilon_n \norm{w}_{\Wo}  \\
    & = \hat F_{\funcset Q}^+(\ratio),
\ee 
where we note that $L_{\W}(q\true, \emp D_n) = \sup_{h\in \W } R(h, ~ q\true) $ and  
the last step applies the high probability bound that $\prob( L_{\W}(q\true, \emp D_n) \leq \varepsilon ) \geq 1-\delta$. 
With the same  derivation on the lower bound counterpart, we have %
$$
\prob\left ( 
\left [\hat F_{\funcset Q,*}^-(\ratio) , ~~
\hat F_{\funcset Q,*}^+(\ratio) \right ]\subseteq \left [  \hat F_{\funcset Q}^-(\ratio) , \hat F_{\funcset Q}^+(\ratio) 
\right ]
\right ) \geq 1-\delta. 
$$
Therefore, our confidence bound $[  \hat F_{\funcset Q}^-(\ratio) , \hat F_{\funcset Q}^+(\ratio) ]$ 
is a $1-\delta$ confidence outer bound %
the oracle bound $
[\hat J_{\funcset Q,*}^- (\ratio), ~~
\hat J_{\funcset Q,*}^+ (\ratio)  ]\subseteq 
[\hat F_{\funcset Q,*}^-(\ratio) , ~~
\hat F_{\funcset Q,*}^+(\ratio)  ]$.

\subsection{Proof of Proposition ~\ref{pro:free}}

\begin{proof}

Let $\funcset Q_{\mathrm{null}}$ be the set of functions that are zero on  $\{s_i, s_i'\}_{i=1}^n$, that is, 
$$
\funcset Q_{\mathrm{null}} = 
\{ g\colon \set S \times \set A \to \RR \colon ~~ g(s,a) = 0,~~\forall s \in \{s_i, s_i'\}_{i=1}^n,~~~ a \in \set A\}. 
$$
Then we have 
$$
\hat\Rtrue (q\true + g)(x_i, y_i) = \hat\Rtrue q\true(x_i,y_i),~~~~~\forall i = 1,\ldots, n.
$$
and 
$$
\E_{\D_{\pi,0}}[q\true +g] = \E_{\D\true,0}[q\true] +  \E_{\D_{\pi,0}}[g] = J\true + \E_{\D_{\pi,0}}[g] .
$$
Taking $g(s,a) = z\ind(s \notin \{s_i, s_i'\}_{i=1}^n)$, where $z$ is any real number.  
Then we have 
$$
\E_{\D_{\pi,0}}[q\true +g] = J\true + z \prob_{s\sim \dist d_{\pi,0}}(s\notin \{s_i, s_i'\}_{i=1}^n). 
$$
Because $\prob_{s\sim \dist d_{\pi,0}}(s\notin \{s_i, s_i'\}_{i=1}^n) \neq 0$, we can take $z$ to be arbitrary value to make $\E_{\D_{\pi,0}}[q\true +g]$ to take arbitrary value. 
\end{proof}

\section{Ablation Study and Experimental Details}\label{sec:app_exp}

\subsection{Experimental Details}\label{sec:app_exp_details}
\myparagraph{Environments and Dataset Construction}
We test our method on three environments: 
Inverted-Pendulum and CartPole from OpenAI Gym \citep{brockman2016openai}, and a Type-1 Diabetes medical treatment simulator.
For Inverted-Pendulum we discretize the action space to be $\{-1, -0.3, -0.2, 0, 0.2, 0.3, 1\}$.
The action space of CartPole and the medical treatment simulator are both discrete.

\paragraph{Policy Construction} 
We follow a similar setup as \citet{fengaccountable2020} to construct behavior and target policies.
For all of the environments, we constraint our policy class to be a softmax policy and use PPO~\citep{schulman2017proximal} to train a good policy $\pi$, 
and we use different temperatures of the softmax policy to construct the target and behavior policies (we set the temperature $\tau = 0.1$ for target policy and $\tau = 1$ to get the behavior policy, and in this way the target policy is more deterministic than the behavior policy). 
We consider other choices of behavior policies in Section~\ref{sec:h3}. 

For horizon lengths, We fix $\gamma = 0.95$ and set horizon length $H = 50$ for Inverted-Pendulum, $H=100$ for CartPole, and $H=50$ for Diabetes simulator.

\myparagraph{Algorithm Settings}
We test the bound in Eq.\eqref{equ:mainVbounddelta}-\eqref{equ:CL}.
Throughout the experiment, we always set $\W = \K$, a unit ball of RKHS with kernel $k(\cdot,\cdot)$. We set $\Q = r_\Q \tilde \K$, the zero-centered ball of radius $r_\Q$ in an RKHS with kernel $\tilde k(\cdot, \cdot).$ We take both $k$ and $\tilde k$ to be Gaussian RBF kernel. 
The bandwidth of $k$ and $\tilde k$ are selected to make sure the function Bellman loss is not large on a validation set.
The radius is selected to be sufficiently large to ensure that $q\true$ is included in $\Q$. 
To ensure a sufficiently large radius, we use the data to approximate a $\hat{q}$ so that its functional Bellman loss is small than $\epsilon_n$.
Then we set $r_{\Q} = 10 * \|\hat{q}\|_{\tilde \K}$.
We optimize $\ratio$ using the random feature approximation method described in Appendix~\ref{sec:kernelw}. 
Once $\ratio_+$ and $\ratio_-$ are found, we evaluate the bound in Eq.~\eqref{equ:mainVbounddelta} exactly, to ensure the theoretical guarantee holds. 

\subsection{{Sensitivity to Hyper-Parameters}}\label{sec:sense_hypers}
We investigate the sensitivity of our algorithm to the choice of hyper-parameters.
The hyper-parameter mainly depends on how we choose our function class $\Q$ and $\W$.
\newcommand{\absize}{.33\textwidth}
\begin{figure}[t]
    \centering
    \begin{tabular}{ccc}
        \raisebox{2.5em}{\rotatebox{90}{\small $\log_{10}(\text{interval length})$}} 
        \includegraphics[width = \absize]{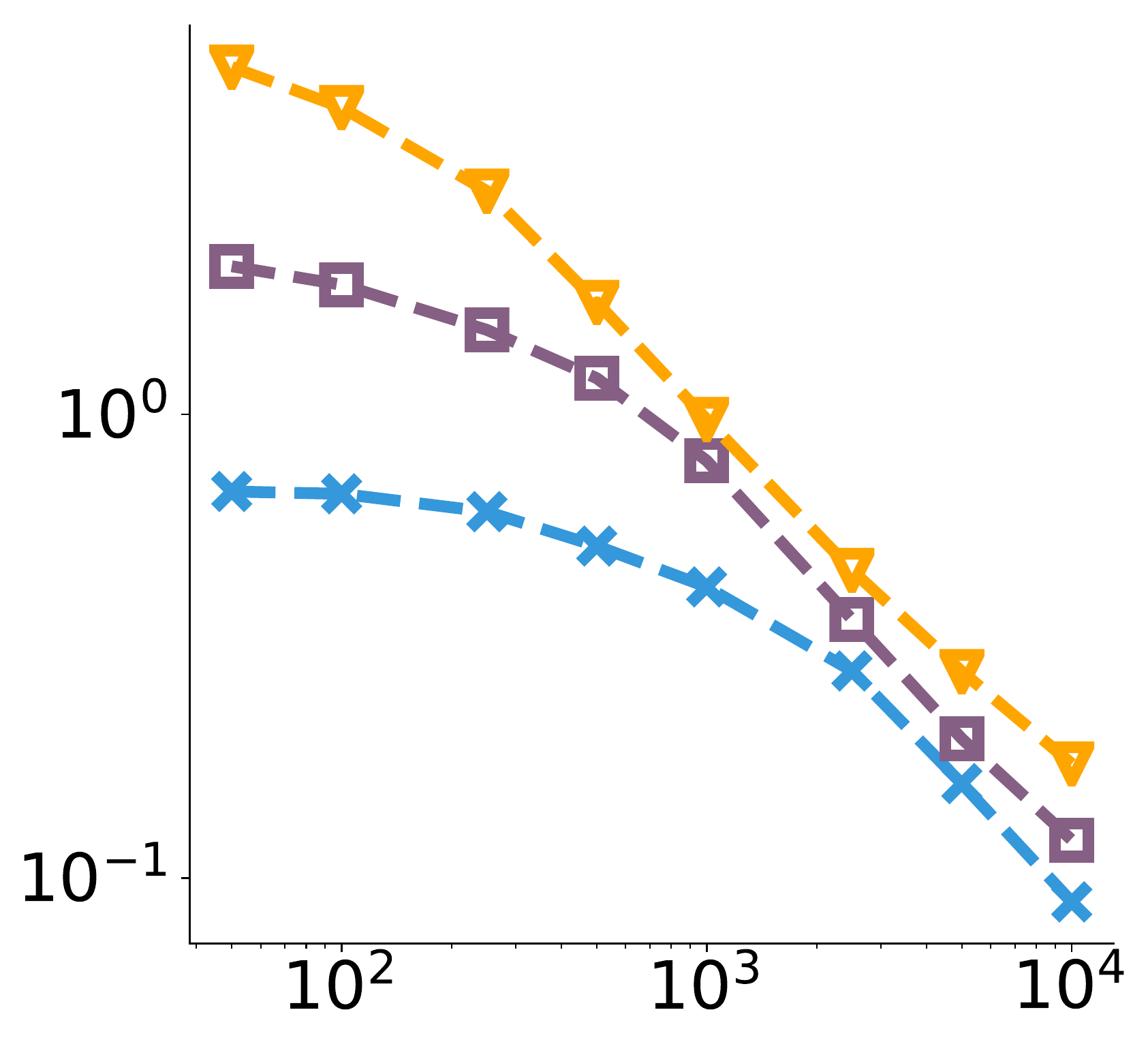} & 
        \raisebox{5.0em}{\rotatebox{90}{\small Reward}} 
        \includegraphics[width = \absize]{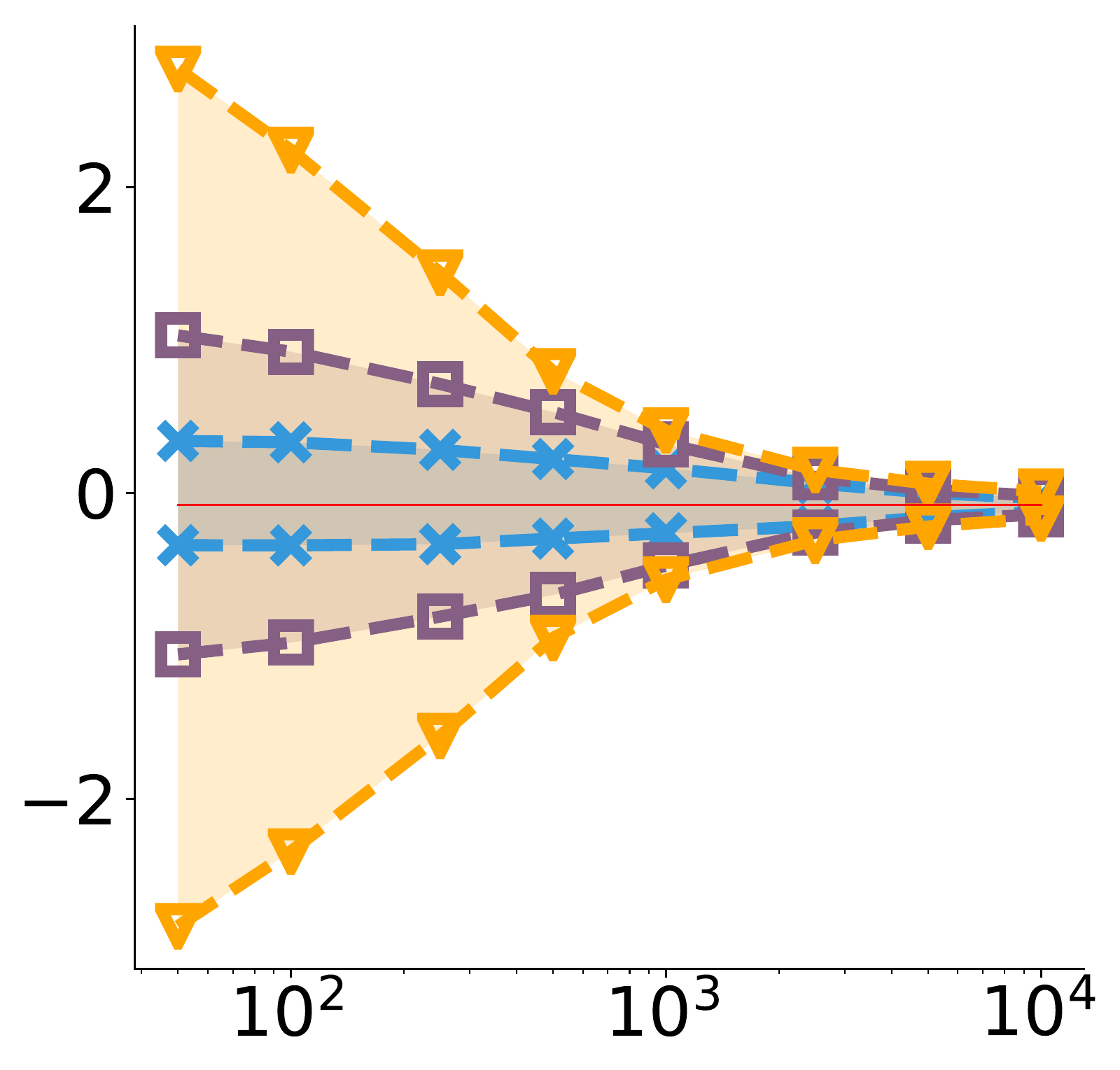} &
        \raisebox{3.5em}{\includegraphics[width = .19\textwidth]{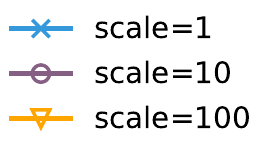}}\\
        \small{(a)~Number of transitions,~$n$} & \small{(b)~Number of transitions ~$n$} &\\
    \end{tabular}
    \caption{Ablation study on the radius $r_{\Q}$ of the function class $\Q$. \zt{The default collecting procedure uses a horizon length of $H = 50$. The discounted factor is $\gamma = 0.95$ by default.}}
    \label{fig:ablation_radius}
\end{figure}

\myparagraph{Radius of $\Q$}
Recall that 
we choose $\Q$ to be a ball in RKHS with radius $r_{\Q}$, that is, %
$$
\Q = r_{\Q}\tilde \K = \{r_{\Q} f: f\in \tilde \K\},
$$
where $\tilde \K$ is the unit ball of the RKHS with kernel $\tilde k$. 
Ideally, we want to ensure that $r_{\Q} \geq \|q\true\|_{\tilde \K}$ so that $q\true \in \Q$.

Since it is hard to analyze the behavior of the algorithm when $q\true$ is unknown,
we consider a synthetic environment where the true $q\true$ is known.
This is done by explicitly specifying a $q\true$ inside $\tilde \K$ 
and then infer 
the corresponding deterministic reward function $r(x)$ by inverting the Bellman equation: 
$$
r(x) \defeq q\true(x) - \gamma \E_{x'\sim \dist P_\pi(\cdot|x)}[q\true(x')].
$$
Here $r$ is a deterministic function, instead of a random variable, with an abuse of notation.  
In this way, we can get access to the true RKHS norm of $q\true$:
$$
\rho^* = \|q\true\|_{\tilde \K} \,.
$$
For simplicity, we set both the state space $\set S$ and action space $\set A$ to be $\RR$ and set a Gaussian policy $\pi(a|s)\propto \exp(f(s,a)/\tau)$, where $\tau$ is a positive temperature parameter. We set $\tau = 0.1$ as target policy and $\tau = 1$ as behavior policy.

Figure \ref{fig:ablation_radius} shows the results as we set 
 $r_{\Q}$ to be $\rho^*$, $10\rho^*$ and $100\rho^*$, respectively.
We can see that the tightness of the bound is affected significantly by the radius when the number $n$ of samples is very small. 
However, as the number $n$ of samples grow (e.g., $n\geq 2\times 10^3$ in our experiment), the length of the bounds become less sensitive to the changing of the predefined norm of $\Q$.

\begin{figure}[t]
    \centering
    \begin{tabular}{cc}
        \raisebox{4.5em}{\rotatebox{90}{\small Reward}} \includegraphics[width=0.33\textwidth]{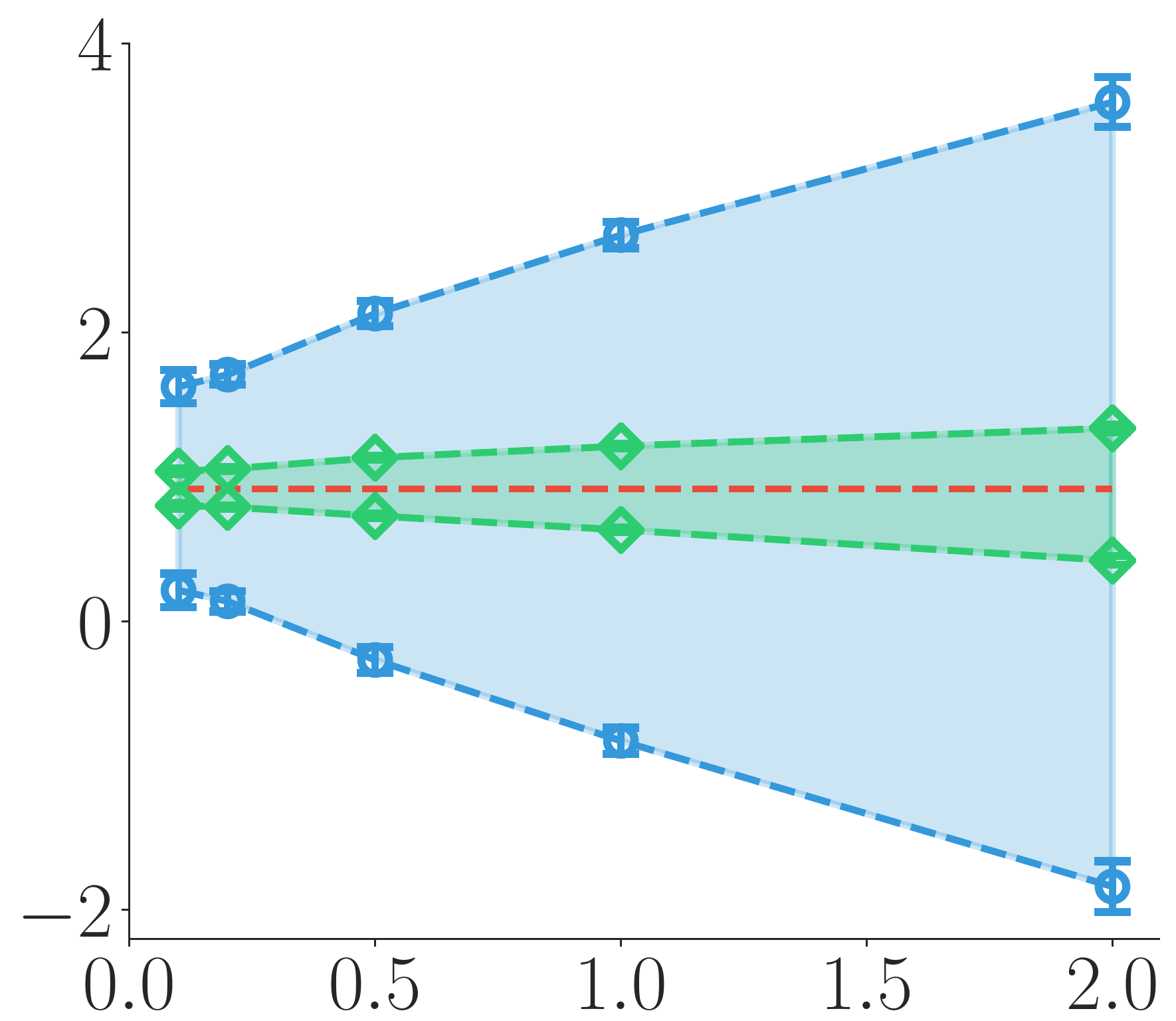} & 
        \raisebox{4.5em}{\rotatebox{90}{\small Interval length}}\includegraphics[width=0.52\textwidth]{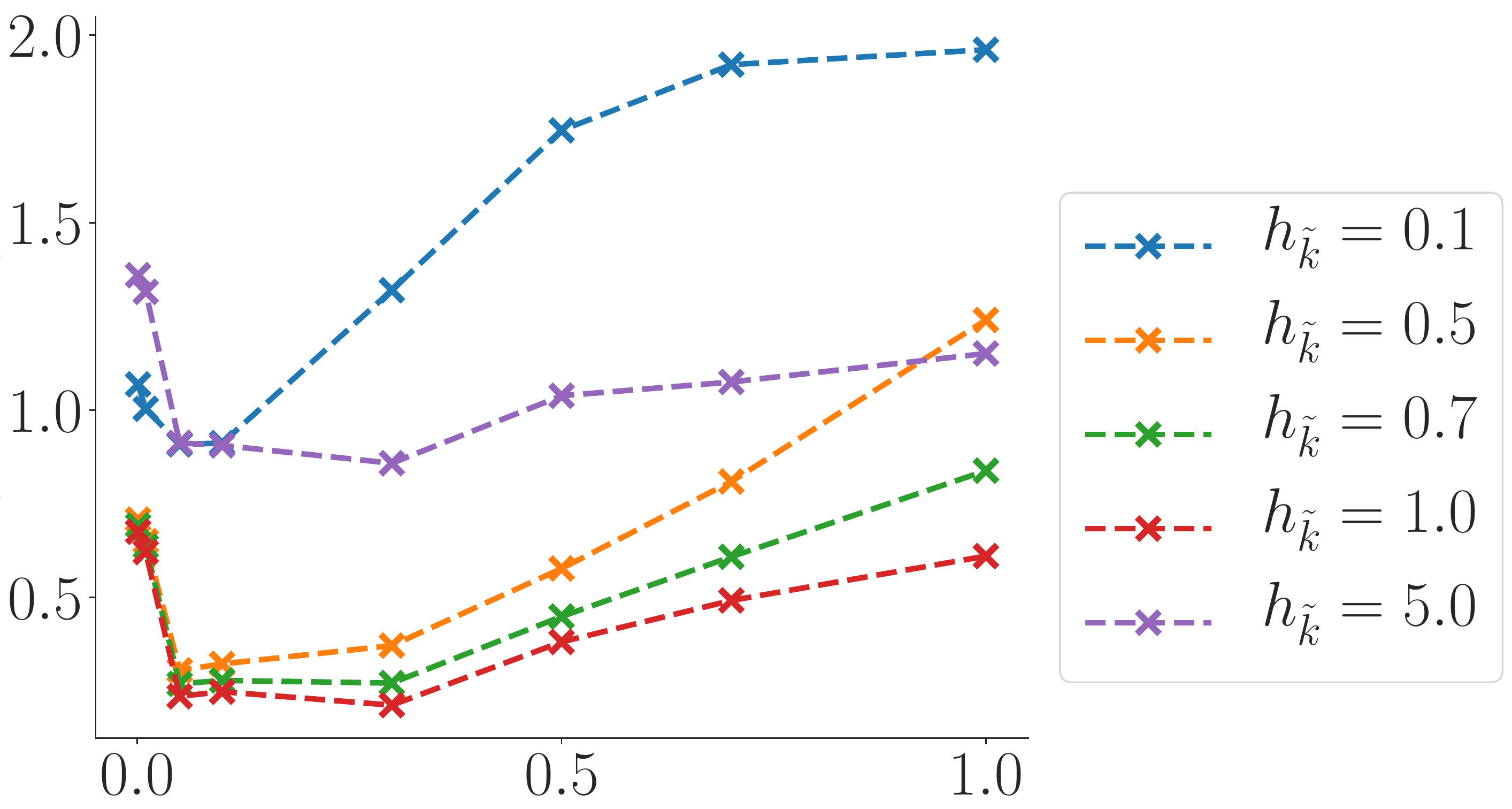}\vspace{-.5em} \\
        \hspace{2em}\footnotesize{Temperature $\tau$} & \hspace{-5em}\footnotesize{Bandwidth of $k$ of $\H$} \\
        \small{(a) Varying temperature $\tau$.}%
        & \small{(b) Varying the bandwidth of kernels in $\H$ and $\set Q$.}
        \vspace{-.5em}
    \end{tabular}
    \caption{\small{Ablation studies on Inverted-Pendulum. We change the temperature $\tau$ of the behavior policies in (a),  and change the bandwidth of the kernel  $ k$ of $\H$ and the kernel $\tilde k$ of $\set Q$ (denoted by $h_{\tilde{k}}$ in (b)). 
    }
    }
    \label{fig:ablation_pendulum}
\end{figure}

\myparagraph{Similarity Between Behavior Policy and Target Policy}
We study the performance of changing temperature of the behavior policy. 
We test on Inverted-Pendulum environment as previous described.
Not surprisingly, we can see that the  closer the behavior policy to the target policy (with temperature $\tau = 0.1$), the tighter our confidence interval will be, which is observed in  Figure~\ref{fig:ablation_pendulum}(a).

\myparagraph{Bandwidth of RBF kernels}
We study the results as we change the bandwidth in kernel $k$ and $\tilde k$ for $\W$ and $\Q$, respectively. 
Figure~\ref{fig:ablation_pendulum}(b) shows 
the length of the confidence interval when we use different bandwidth pairs
in the Inverted-Pendulum environment. 
We can see that 
 we get relatively tight confidence bounds 
as long as we set the bandwidth in a reasonable region~(e.g., we set the bandwidth of $k$ in $[0.1, 0.5]$, the bandwidth of $\tilde{k}$ in $[0.5, 3]$). %

\renewcommand{\absize}{.24\textwidth}
\begin{figure}[t]
    \centering
    \begin{tabular}{ccc}
        \hspace{-.5em}
        \raisebox{.8em}{\rotatebox{90}{\footnotesize{$\log_{10}(\text{interval length})$}}} 
        \includegraphics[width = \absize]{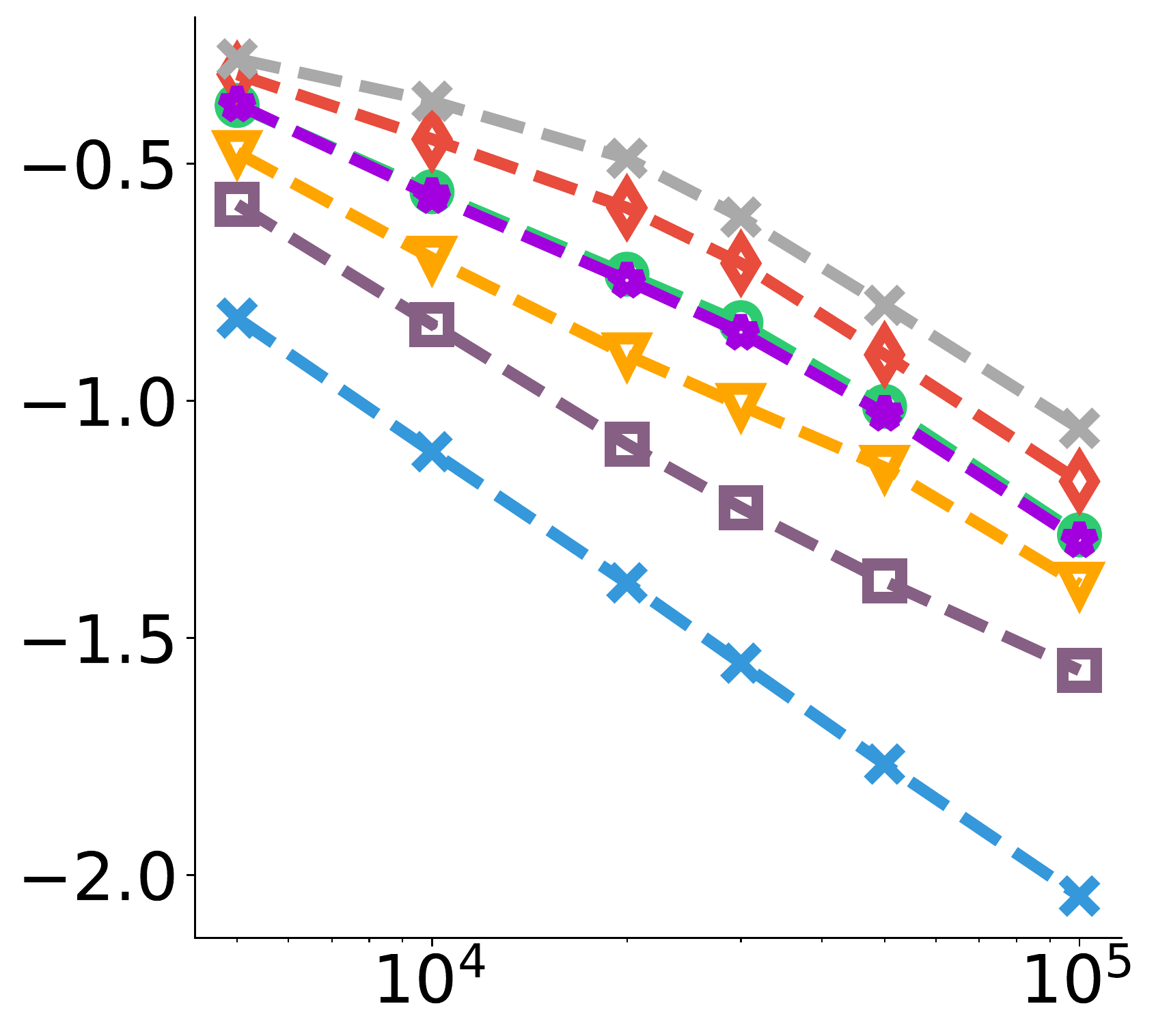} 
        \raisebox{1.0em}{\includegraphics[width = .13\textwidth]{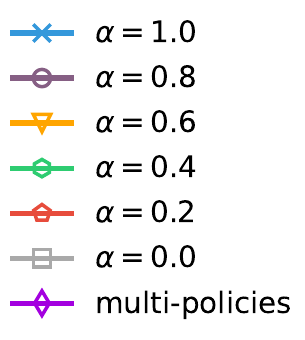}}
        & 
        \hspace{-2.5em}
        \includegraphics[width = \absize]{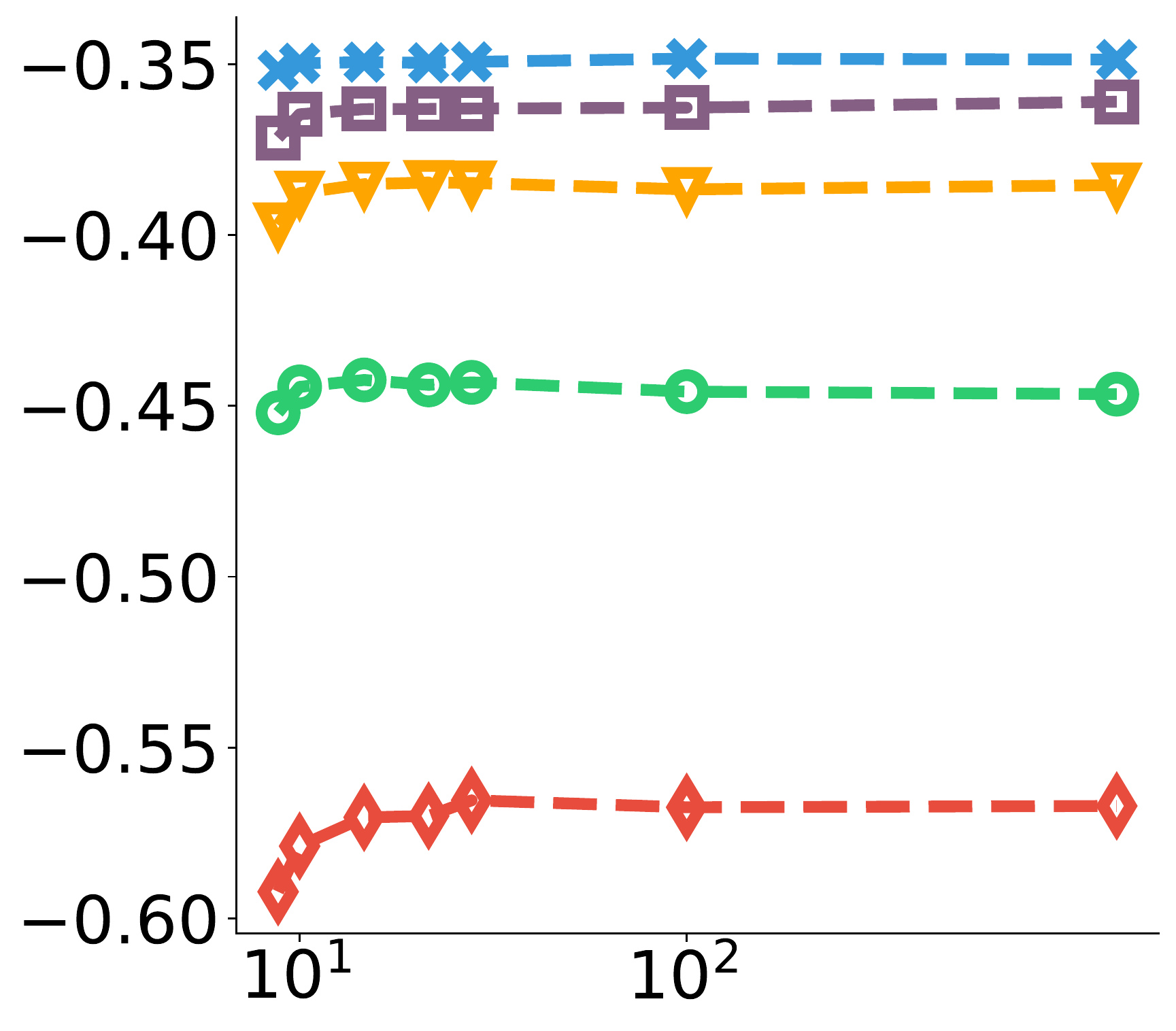}
        &
        \hspace{-1.5em}
        \includegraphics[width = \absize]{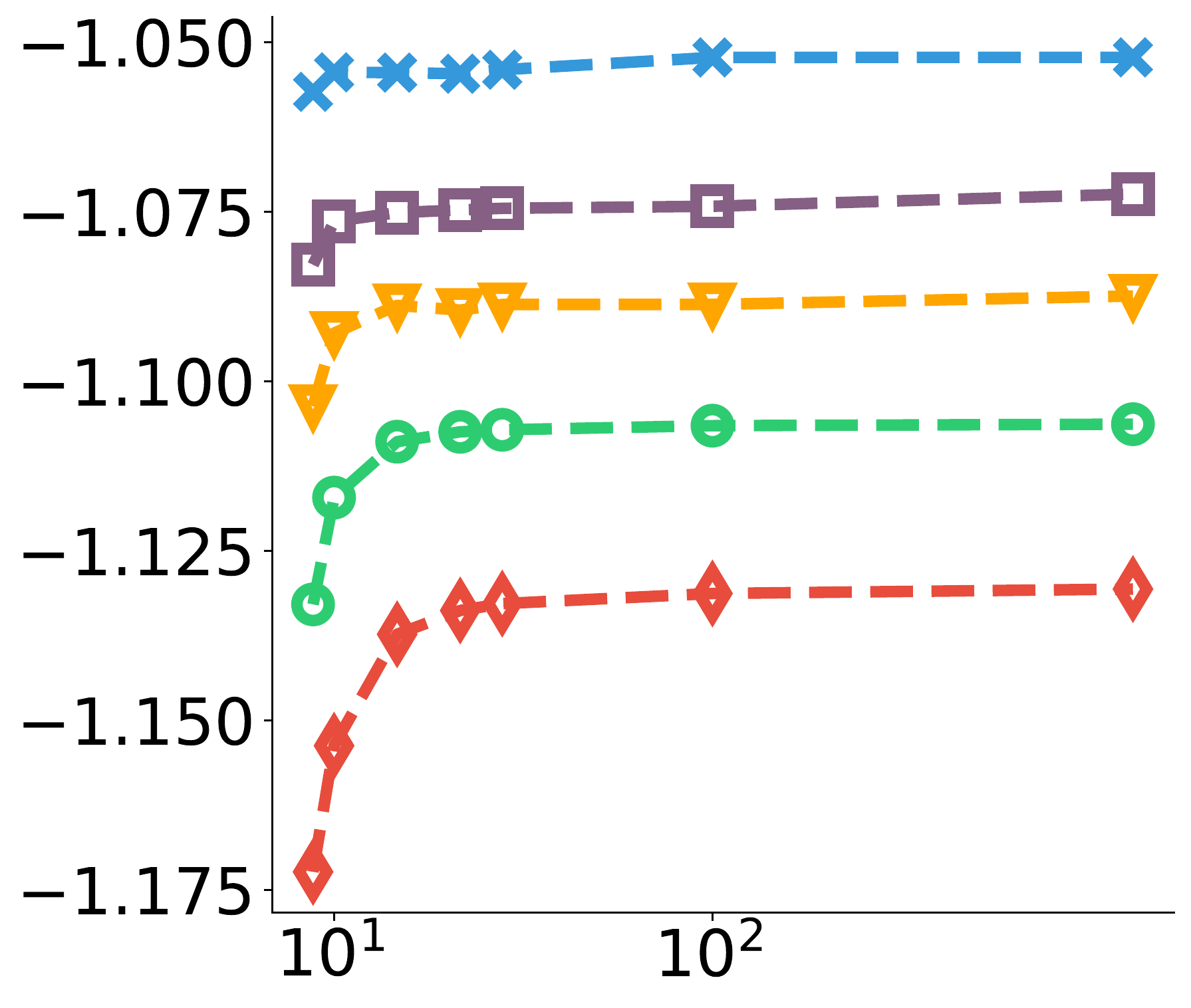}
        \raisebox{1.7em}{\includegraphics[width = .10\textwidth]{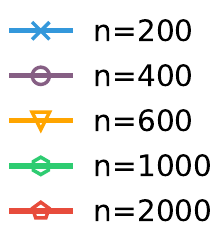}}
        \vspace{-0.3em}\\
        \hspace{-5.5em}\scriptsize{ number of transitions, $n$} 
        & \hspace{-3.5em}\scriptsize{Trajectory length $T$} & \hspace{-2.5em}\scriptsize{Trajectory length $T$}\\
        \hspace{-5.5em}\small{ (a) Varying $n$, $\alpha$, fixed $\gamma = 0.95$ }
        & \hspace{-3.5em}\small{(b) Varying $n$, $T$, fixed $\gamma = 0.95$}
        & \hspace{-2.5em}\small{ (c)Varying $n$, $\alpha$, fixed $\gamma = 0.99$}
    \end{tabular}
    \caption{\small{Ablation studies on the data collection procedure, as we (a) change the behavior policies, and (b)-(c) change the trajectory lengths. 
    The other settings are the same as that in Figure~\ref{fig:ablation_radius}. 
    }
    }
    \label{fig:ablation_data}
\end{figure}

\subsection{\zt{Sensitivity to the Data Collection Procedure}}\label{sec:h3}
We investigate the sensitivity of our method as we use different behavior policies to collect the dataset $\emp D_n$. 

\paragraph{Varying Behavior Policies}

We study the effect of using different behavior policies. 
We consider the following cases:
\begin{enumerate}
    
\item  Data is collected from a single behavior policy of form $\pi_{\alpha} = \alpha \pi + (1-\alpha)\pi_0$, where $\pi$ is the target policy and $\pi_0$ is another policy. 
We construct $\pi$ and $\pi_0$  to be  Gaussian policies of form $\pi(a|s)\propto \exp(f(s,a)/\tau)$ with different temperature $\tau$, where temperature for target policy is $\tau=0.1$ and temperature for $\pi_0$ is $\tau = 1$. 

\item The dataset $\emp D_n$ is the combination of the data collected from multiple behavior policies of form $\pi_\alpha$ defined as above, with 
$\alpha\in\{{0.0}, {0.2}, {0.4}, {0.6}, {0.8}\}$. %

\end{enumerate}

We show in Figure~\ref{fig:ablation_data}(a) that the length of the confidence intervals by our method 
as we vary the number $n$ of transition pairs and the mixture rate $\alpha$.  
We can see that the length of the interval decays with the sample size $n$ 
for all mixture rate $\alpha$. 
Larger $\alpha$ yields better performance because the behavior policies are  closer to the target policy. 

\paragraph{Varying Trajectory Length $T$ in $\emp D_n$}
As we collect $\emp D_n$, 
we can either have a small number of long trajectories, 
or a larger number of short trajectories. 
In Figure~\ref{fig:ablation_data}(b)-(c), 
we vary the length $T$ of the trajectories as we collect $\emp D_n$, while 
fixing the total number $n$ of transition pairs.
In this way, the number of trajectories in each $\emp D_n$ would be $m = n/T$. 
We can see that the trajectory length does not impact the results significantly, especially when the  length is reasonably large (e.g., $T \geq 20$).

\section{Finite Horzion and Time-varying MDPs}\label{sec:finitehorizon}
In this paper we mainly focus on infinite-horizion and time-homogeneous MDPs.
Here we consider how to 
extend our method to finite-horizon and time-inhomogeneous MDPs, 
where the transition probability and reward are time-dependent and we have a finite %
horizon length $H<+\infty$. 
That is, 
we consider cases when the next state and reward follow an unknown, time-dependent transition distribution, 
$$
(r_t, s_{t+1}) \sim \dist P(\cdot|s_t, a_t;~~t), 
$$
and we want to estimate the following finite-horizon expected reward 
$$
J_{\pi,P, H} = \E_{\pi,\dist P} \left [ \sum_{t=0}^{H-1} \gamma^t r_t~|~s_0\sim \dist D_0\right ], 
$$
where the horizon length $H$ is finite but can be  large. 
We should distinguish $H$ (which is a part of the problem definition) with the trajectory length $T$  of the data in $\emp D_n$ (which is a part of the data collection procedure).

We can approach this problem by transforming it into an infinite-horizon  and time-homogeneous problem and then apply our method. 
This can be done by incoporating the time $t$ as a part of the state vector. 
To be concrete, assume we collect a set of 
 transition pairs $(s_i, a_i, r_i, s_i', t_i)_{i=1}^n$, where in addition to $(s_i, a_i, r_i, s_i')$, we also record the time $t_i$ when the transition occurs. 
We define an augmented state 
$\bar s_i = [s_i, t_i]$ and $\bar s_{i}' = [s_i, t_i + 1]$ 
which include 
the time $t_i$ as a part. %
Our method can be then applied without modification on the augmented dataset $\{\bar s_i, a_i, r_i\}_{i=1}^n$, once we  ensure that $q([s,t],a) = 0$ for all $t\geq H$ when defining the function set $\set Q$.  
To build an RKHS $\set Q$ that satisfy $q([s,t],a) = 0,~\forall t\geq H,~q \in \set Q$, we just need to use a  kernel $\tilde k([s,a, t], [s',a',t'])$ such that $\tilde k([s,a, t], [s',a',t'])=0$ whenever $t\geq H$ or $t' \geq H$.

\end{document}